\newcommand{\mylabel}[2]{#2\def\@currentlabel{#2}\label{#1}}
\newcommand{\R}{\mathbb{R}} 
\newcommand{\N}{\mathbb{N}}
\newcommand{\Vect}{\operatorname{Vect}}
\newcommand{\supp}{\operatorname{Supp}}
\newcommand{\KL}{\operatorname{KL}}
\newcommand{\ReLU}{\operatorname{ReLU}}
\newcommand{\al}{\alpha}
\newcommand{\be}{\beta}
\newcommand{\ga}{\gamma}
\newcommand{\la}{\lambda}
\newcommand{\si}{\sigma}
\newcommand{\te}{\theta}
\newcommand{\ta}{\tau}
\newcommand{\veps}{\varepsilon}
\newcommand{\EM}{\ensuremath}
\newcommand{\cF}{\EM{\mathcal{F}}}
\newcommand{\cG}{\EM{\mathcal{G}}}
\newcommand{\cS}{\EM{\mathcal{S}}}
\newtheorem{hyp}{Assumption}
\definecolor{blendedblue}{rgb}{0.2,0.2,0.7}
\newcommand{\sbl}[1]{{\color{blendedblue}{#1}}}
\newcommand{\br}{\mathbf{r}}
\begin{document}

\title{Posterior and variational inference for deep neural networks with heavy-tailed weights}

\author{\name Isma\"el Castillo \email ismael.castillo@sorbonne-universite.fr \\
       \addr LPSM, Sorbonne Universit\'e\\
       4, place Jussieu\\
75005, Paris, France
 \AND
  \name Paul Egels \email paul.egels@sorbonne-universite.fr \\
       \addr LPSM, Sorbonne Universit\'e\\
       4, place Jussieu\\
75005, Paris, France
 }

\editor{}

\maketitle 

\begin{abstract}%
We consider deep neural networks in a Bayesian framework with a prior distribution sampling the network weights  at random. Following a recent idea of Agapiou and Castillo (2024), who show that heavy-tailed prior distributions achieve automatic adaptation to smoothness, we introduce a simple Bayesian deep learning prior based on heavy-tailed weights and ReLU activation. We show that the corresponding posterior distribution achieves near-optimal minimax contraction rates, {\em simultaneously} adaptive to both intrinsic dimension and smoothness of the underlying function, in a variety of contexts including nonparametric regression, geometric data and Besov spaces. While most works so far need a form of model selection built-in within the prior distribution, 
a key aspect of our approach is that it does not require to sample hyperparameters to learn the architecture of the network. We also provide variational Bayes counterparts of the results, that show that mean-field variational approximations still benefit from near-optimal theoretical support.  
\end{abstract}

\begin{keywords}
Bayesian deep neural networks, fractional posterior distributions, heavy-tailed priors, Variational Bayes, overfitting regime.
 \end{keywords}

\section{Introduction}

The last decade has seen a remarkable expansion of the use of deep neural networks (DNNs) through a broad range of applications such as imaging, natural language processing, inverse problems to name a few. In parallel to this spectacular empirical success, theory to provide understanding of the performance of these methods is emerging. Among the key elements playing a role in the mathematical and statistical analysis of DNNs are their approximation properties (e.g. \cite{yarotsky17, JSH, kohler_full}), the choice of parameters in particular the network's {\em architecture}, and the convergence of the sampling or optimisation schemes (e.g. gradient descent, variational criteria etc.) involved in the training. 

Using a Bayesian approach for inference with deep neural networks is particularly appealing, among others for the natural way to quantify uncertainty through credible sets of the posterior distribution. A first contribution on theoretical understanding in this direction is the work by \cite{polson2018posterior}, who show, building up in particular on the approximation theory derived in \cite{JSH}, that assigning spike--and--slab prior distributions to the weights of a DNN with ReLU activation leads to a near-minimax posterior convergence rate in nonparametric regression. While from the mathematical perspective spike--and--slab priors can be seen as a form of theoretical ideal, they can suffer from a particularly high sampling complexity; indeed, sampling from the associated posterior distributions often faces combinatorial difficulties, as one needs to explore a high number of possible candidate models. 
While a possible answer, as we discuss in more details below, is to deploy a Variational Bayes (VB) approximation, it is particularly desirable (and even if VB is chosen as an approximation) to develop simpler prior distributions, that still retain most mathematical properties but are easier to implement in practice. 

A number of recent contributions have considered prior distributions beyond spike--and--slab (SAS) priors 
 for the DNN's weights. From the theoretical perspective let us cite the work by \cite{lee2022asymptotic}, who broadens the results for 
  SAS of   \cite{polson2018posterior} and considers a class of shrinkage priors (whose characteristics depend on smoothness parameters of the function to be estimated); the work by  \cite{kong2023masked} considers masked Bayesian neural networks, where the `mask' determines the position of zero weights, and which allows for more efficient reparametrisation in view of computations compared to SAS posteriors. The preprint \cite{kongkim24} uses non-sparse Gaussian priors on weights, building on the  approximation theory of \cite{kohler_full}. 
  From the algorithmical perspective, the work by \citet{ghosh2019model} considers using horseshoe priors on weights. More discussion on the previously mentioned priors and their  relation to the present work can be found  in Section \ref{sec:disc} below.
  
Let us mention also two lines of work in Bayesian deep learning that are somewhat different from our approach but with natural connexions, in particular in terms of the tails of the distributions arising in the networks.  The first considers DNNs in the large width limits, following seminal ideas of \cite{neal96}: using Gaussian priors on weights, \cite{sellsingh23} use the approximation by a Gaussian process obtained obtained in large width to propose posterior sampling schemes; the work by \cite{caronetal23} instead considers the large width limit with heavier-tailed distributions on weights, obtaining a mixture of Gaussian processes in the limit.
The second line of work has no activation functions involved, but models directly  random functions accross layers, which leads to deep Gaussian processes as introduced in \cite{damianou13} and recently investigated in terms of convergence by \cite{fsh23} and \cite{cr24}.

Variational Bayes  is a particularly popular approach in machine learning and statistics, where the idea is to approximate a possibly complex posterior distribution by an element of a simpler class of distributions, effectively requiring to solve an optimisation problem, see e.g. \cite{Blei_2017}. This is a method of choice for approximating posteriors in deep neural networks, with a statistical-computational trade-off through the choice of the variational class, which is often taken to be mean-field, thereby requiring a form of independence along a certain parametrisation. Following general ideas from \cite{alquier2020} (see also \cite{yang2020alphavariational, zhang2020convergence}), the work by \cite{cherief-abdellatif20a} proves that taking a mean-field variational class combined to  the spike-and-slab prior of \citet{polson2018posterior} leads to a variational posterior that converges at the same near-optimal rates towards the true regression function. Simulations using stochastic gradient optimisation for this variational posterior, and some extensions, are considered in \cite{baietal20}. Using simpler sieve-type priors on DNNs but requiring a form of model selection, the work \cite{ohn2024adaptive} considers adaptation within a family of variational posteriors.

For more discussion and further references on Bayesian deep learning we refer to \cite{papamarkou24position}, \cite{alquierrev24},  \cite{stf24}, Chapter 4, and to the review paper \cite{arbel2023primer}.

Let us now turn to a summary of the contributions of the paper. We prove that putting a suitably rescaled heavy-tailed distribution on deep neural network weights leads to an optimal rate of convergence (up to logarithmic factors) of the corresponding posterior distribution in nonparametric regression with random design. The best rate is automatically attained without assuming any knowledge of regularity parameters of the unknown regression function, achieving in particular {\em simultaneous  adaptation} to {\em both} smoothness and intrinsic dimensions when the regression function has a compositional structure with effective intrinsic dimensions (typically) of smaller order than the input dimension. We derive similar results for scalable (mean-field) Variational Bayes approximations of the posterior distribution; each coordinate of the VB approximation has only two parameters to be fitted, one for location and one for scale. 

To help putting the results into perspective, we note that many fundamental statistical results so far on DNNs  are often stated with an `oracle' choice of the network architecture: for wide and moderately deep DNNs for instance, if the network width is well chosen in terms of both smoothness and dimensionality parameters, \cite{JSH} and \cite{kohler_full} show that an empirical risk minimiser on neural networks of the {\em prescribed} (oracle) architecture achieves a near-optimal minimax contraction rate for compositional classes. Similarly,  \cite{suzuki2018adaptivity} shows that deep learning methods are `adaptive' to the intrinsic dimension for anisotropic smoothness classes, with adaptation meaning here that the achieved rate depends only on the intrinsic dimension (as opposed to the input dimension); however the results are achieved with a network architecture that still depends on the  smoothness and intrinsic dimension parameters, which are typically unknown in practice. Building up on the seminal approximation results of the just cited papers (as well as \cite{nakada2020adaptive} for data on geometric objects), we show that well-chosen heavy tailed weight distributions enable one to obtain statistical adaptation, simultaneously in terms of smoothness and dimension parameters, using Bayesian fractional posterior distributions or their mean-field variational counterparts. Beyond classes of compositions in  nonparametric random design regression, to illustrate the flexibility of our method we consider two other applications: geometrical data through the use of the Minkowski dimension as in \cite{nakada2020adaptive} and anisotropic classes as in \cite{suzuki2021deep}. For both we show that simultaneous statistical adaptation is achieved for fractional posteriors and mean-field VB.\\

{\em Outline of the paper.} In Section \ref{sec:bnn}, we introduce the statistical framework, deep neural networks, and the proposed method; the heavy-tailed prior distribution on network weights is defined, as well as the fractional posterior distribution and its variational approximation. In Section \ref{sec:main} we present our main results, declined along three settings: compositional classes in Sections  \ref{sec:rate}--\ref{sec:ratevb}, geometric data in Section \ref{sec:geom} and anisotropic classes in Section \ref{sec:anis}; in each setting results are derived for the fractional posterior distribution, and a mean-field VB approximation. in Section \ref{sec : architecture}, we discuss the impact of the network architecture, particularly its depth. Then, we also explore extensions of our results in multiple directions: in Section \ref{sec : activ}, we consider the choice of activation function; in Section \ref{sec : tau inconnu}, we address the question of knowledge about the noise variance; and in Section \ref{sec : truepost}, we examine the case of  the standard (non-fractional) posterior. Section \ref{sec:disc} contains a discussion on possible choices of prior for Bayesian deep neural networks, discussion on sampling and algorithms and future work and open questions. Section \ref{sec:proofs} contains the proof of the main results. The Appendix includes a number of short technical lemmas, a lemma on bounds of weights of neural networks and proofs for all possible extensions of our main results. \\

\emph{Notation.}  For any real number $\beta$, let $\lfloor \beta \rfloor$ be the largest integer strictly smaller than $\beta$. We write the set of positive real numbers as $\R_{>0}$. For any two sequences we write $a_n \lesssim b_n$ for inequality up to a constant and $a_n \asymp b_n$ whenever $a_n \gtrsim b_n$ is also satisfied. For two real numbers $a,b$, we write $a \vee b=\max(a,b)$ and $a \wedge b=\min(a,b)$. For $T \geq 1$ an integer, $[T]$ denotes the set of integers $\{ 1 , \dots , T \}$. For $1 \leq p \leq \infty$, we denote $|\,\cdot \,|_p$ (resp. $\lVert \, \cdot \, \rVert_p)$ the $\ell^p$ (resp. $L^p$) norms and $\lVert \, \cdot \, \rVert_{L^p(\mu)}$ when we want to specify the integrating measure, say $\mu$.  For $d \geq 1$ an integer, $F \geq 0$ and $D \subset \R^d $, let $\mathcal{C}_d^{\beta}(D,F)$ be the ball of $d$-variate $\beta$-Hölder functions 
\[ \mathcal{C}^{\beta}_d (D,F) := \left\{ f: D \subset \R^d \to \R: \sum_{|\mathbf{k}| < \beta} ||\partial^{\mathbf{k}}f||_{\infty} + \sum_{|\mathbf{k}| = \lfloor \beta \rfloor} \underset{\underset{x \neq y}{x,y \in D}}{\sup} \frac{|\partial^{\mathbf{k}}f(x) - \partial^{\mathbf{k}}f(y)|}{|x-y|_{\infty}^{\beta - \lfloor \beta \rfloor}} \leq F \right\}.\]
Elements of this ball have all their partial derivatives up to order $\lfloor \beta \rfloor$  bounded, and their partial derivatives of order $\lfloor \beta \rfloor$ are $(\beta - \lfloor \beta \rfloor)$-Hölder-continuous. We denote by $\KL(P,Q)$ and $D_\al(P,Q)$ respectively the Kullback-Leibler and Rényi divergence of order $\al$ between probability measures $P$ and $Q$ (see Appendix \ref{app: post} for their definitions).

\section{Bayesian deep neural networks}\label{sec:bnn}

Let us now introduce the notation and statistical framework used to express our results. 

\subsection{Statistical framework}\label{sec:model}
We consider the nonparametric random design Gaussian regression model with possibly large (but fixed) dimension $d\ge 1$: one observes $n$ independent and identically distributed (iid) pairs of random variables $(X_i,Y_i) \in [0,1]^d \times \R$, $1\le i\le n$, with, for $\ta_0>0$,
\begin{equation}\label{model}
Y_i = f_0(X_i) + \tau_0\xi_i.
\end{equation}
The design points $(X_i)$ are iid draws  from a distribution $P_X$ on $[0,1]^d$ and, independently, the noise variables $(\xi_i)_i$ are iid with  $\mathcal{N}(0,1)$ distribution. From now on and unless stated otherwise we assume for simplicity that $\tau_0$ is known and take $\tau_0 =1$. In Section \ref{sec : tau inconnu} we  extend our results to the case where the noise level $\tau_0$ is unknown.  
The function $f_0$ belongs to a parameter set $\mathcal{F}$ of functions to be specified below.  Writing $P_{f_0}$ for the distribution of $(X_1,Y_1)$, the joint distribution of the observations $(X,Y):= \{(X_i,Y_i)_i\}$ is $P_{f_0}^{\otimes n}$. The inferential goal here is to recover the `true' function $f_0$ from the observed data $(X_i,Y_i)_i$, and our contraction rate results will be given in terms of the  integrated quadratic loss
\[ ||f_0 - f ||_{L^2(P_X)}^2 = \int (f_0 -f)^2 \, dP_X.\]
The minimax rate of estimation of a $\beta$--smooth function $f_0$, in the H\"older sense say,  in squared-integrated loss in this model is known to be of order $n^{- 2\beta / (2 \beta +d)}$, which suffers from the curse of dimensionality, as this rate becomes very slow for large $d$ unless $f_0$ is very smooth. In recent years  regularity classes based on compositions of functions have emerged as a benchmark for the performance of deep learning methods: indeed, these classes enable to model smaller effective dimensions compared to the input dimension $d$, with optimal rates, to be described in more details below, that only depend on the intrinsic dimensions at the successive steps of the composition.\\

\emph{Compositional structure}. Let $q \geq 1$ be an integer, $K>0$, $\mathbf{d} = (d_0, \dots , d_{q+1})$, $\mathbf{t} = (t_0, \dots t_q)$ vectors of ambient and effective dimensions such that $t_i \leq d_i$ and $\boldsymbol{\beta} = (\beta_0, \dots , \beta_q)$ a vector of smoothness parameters. Following \cite{JSH}, define a  class of compositions as
\begin{align*}
 \mathcal{G}(q, \mathbf{d}, \mathbf{t}, \boldsymbol{\beta}, K) :=  \Big\{ f & =  g_q \circ \dots \circ g_0 \, : \, g_i =(g_{ij})_{j \in [d_{i+1}]} : [a_i,b_i]^{d_i} \to [a_{i+1},b_{i+1}]^{d_{i+1}},  \\
 &  g_{ij} \in \mathcal{C}_{t_i}^{\beta_i}([a_i,b_i]^{t_i},K) , \quad |a_i|, |b_i| \leq K \Big\},
 \end{align*}
where we denote with a slight abuse of notation $\mathcal{C}_{t_i}^{\beta_i}$ the Hölder ball of functions $g_{ij} : [a_i,b_i]^{d_i} \to \R$ that can be written
\[g_{ij}(z_1, \dots,z_{d_i}) = h_{ij}(z_{q_1},\dots,z_{q_{t_i}}),\]
for $\{q_1,\dots,q_{t_i}\}$ a subset of $t_i$ indices among $\{1, \dots , d_i\}$ and $h_{ij} : [a_i,b_i]^{t_i} \to \R$ a Hölder function. We also set $d_0 =d$ and $d_{q+1} = 1$ (for an alternative definition, see also \cite{fsh23}). Let us emphasize that even if the $g_{ij}$'s take as input a vector of size $d_i$, the value of the output depends on at most $t_i\le d_i$ variables, which allows one to model low-dimensional structures. We further  note that a function $f_0$ from $\mathcal{G}(q, \mathbf{d}, \mathbf{t}, \boldsymbol{\beta}, K)$ does not have a unique decomposition as a composition of $g_i$'s. This is not a concern here, as we are solely interested in estimating $f_0$ and not $g_i$'s themselves. Many models, such as additive models and sparse tensor decompostions, fall within this framework, see e.g.  \cite{JSH}, Section 4, for more discussion.

The minimax rate of estimation of $f_0$ over the above compositional class, in terms of the quadratic loss as defined above turns out to depend only on the effective dimensions $t_i$ and `effective smoothness' parameters, for $0\le i\le q$,
\begin{equation}\label{def : regeff}
    \beta_i^* := \beta_i \prod_{k=i+1}^q (\beta_k \wedge 1).
\end{equation}
An intuition behind this  is that H\"older regularities larger than $1$ cannot in general `propagate' through inner layers of composition to make the overall H\"older smoothness larger than individual regularities of the components, but regularities smaller than one can decrease the overall smoothness.

\cite{JSH}, Theorem 3, shows that if $t_i \leq \min (d_0, \dots ,d_{q+1})$ for all $i$, then 
\begin{equation}\label{def : rateeff}
    \phi_n^* : = \underset{0 \leq i \leq q}{\max} \left\{ n^{- \frac{\beta_i^*}{2 \beta_i^* + t_i}} \right\}.
\end{equation}
is a lower bound for the minimax rate of estimation in the regression model for compositional functions and squared-integrated loss.

\subsection{Deep ReLU neural networks}\label{sec:dnn} 
The rectified linear unit (ReLU) activation function is defined for any real $x$ as 
\[\rho(x) := \max(0,x).\]
When $x$ is a multi-dimensional vector, we simply apply $\rho$ coordinate--wise. Throughout most of this paper we focus on DNNs  with ReLU activation as it is one of the most commonly used and studied activation functions in deep learning. We provide a simple way to extend our results to  other smooth enough activation functions in Section \ref{sec : activ}. Let $L\geq 1$ be an integer and $\mathbf{r}=(r_0,\dots,r_{L+1}) \in \mathbb{N}^{L+2}$, we say that a function $f : \R^{r_0} \to \R^{r_{L+1}}$ is the realisation of a ReLU neural network with architecture (or structure) $(L,\mathbf{r})$ if there is a family of matrix-vectors $\{(W_l,v_l)\}_{ l \in [L+1]}$, with $(W_l,v_l) \in \R^{r_{l} \times r_{l-1}} \times \R^{r_{l}}$ such that
\begin{equation}\label{fcomp}
f = A_{L+1} \circ \rho \circ \dots \circ \rho \circ A_1,
\end{equation}
where $A_l$ is the affine map $y \mapsto W_l y + v_l$. Here $L$ and $\mathbf{r}$ are respectively called the \emph{length} (or \emph{depth}) and the \emph{width vector} of the network  architecture. Here we take the convention that the composition in \eqref{fcomp} ends with an affine transformation with a shift (as in \cite{kohler_full}) and not simply a linear one. The network weights are the entries of the matrices $(W_l)$ and shifts vectors $(v_l)$ of the successive layers. The total number of  weight parameters of a network with architecture $(L,\mathbf{r})$ is 
\begin{equation}\label{T}
    T =T(L,\mathbf{r}) := \sum_{l=0}^L r_l r_{l+1} + \sum_{l=1}^{L+1} r_l.
\end{equation}
Let $\theta := (\theta_k)_{1 \leq k \leq T} \in \R^T$ be a vector containing all the network weights written in a given order (the choice of the order does not matter, but is fixed once and forall). Given a family of matrices-vectors $\{(W_l,v_l)\}_{ l \in [L+1]}$ specifying a neural network, to simplify the notation it is sometimes easier to consider the shift vector $v_l$ to be the $0$-th column of $W_l$ so that the set of parameters is $( \theta_k \, : \, k \in [T] ) \equiv ( W_l^{(ij)} \, : \, l \in [L+1]\,,\, (i,j) \in \{1,\dots,r_{l} \} \times \{0, \dots , r_{l-1} \} )$. This is made rigorous from the fact that
\[ \begin{pmatrix}
     v^{(1)}& W^{(11)}& \dots & W^{(1r_{l-1})}\\
     \vdots  & \vdots & & \vdots\\
     v^{(r_l)} &  W^{(r_l 1)} & \dots & W^{(r_l r_{l-1})}
\end{pmatrix} \begin{pmatrix}
    1 \\
    x^{(1)}\\
    \vdots \\
    x^{(r_{l-1})}\\
\end{pmatrix} = Wx+v.\] 
{\em The set of networks $\mathcal{F}(L, \mathbf{r})$ of architecture $(L, \mathbf{r})$.} 
Let $\mathcal{F}(L, \mathbf{r})$ be the set of all realisations of ReLU neural networks with structure $(L, \mathbf{r})$, for $L\ge 1$ a given depth and $\mathbf{r}$ a vector of widths.  Also define $\mathcal{F}(L, \mathbf{r}, s)$ as the set of realisations of neural networks that have at most $s$ active parameters, that is $\mathcal{F}(L, \mathbf{r}, s) := \left\{ f \in \mathcal{F}(L, \mathbf{r}) \, , \, \#\{ k \in [T] \, ; \, \theta_k \neq 0 \} \leq s \right\}$, with $\#S$ the cardinality of a finite set $S$.

\subsection{Heavy-tailed priors on deep neural networks} \label{sec:htprior}

\emph{Frequentist analysis of tempered posteriors. } The reconstruction of $f_0$ from the data $(X_i,Y_i)_i$ given by the regression model \eqref{model} will be conducted through a (generalised--) Bayesian analysis. From a prior distribution $\Pi$ on a (measurable) parameter space $\mathcal{F}$ containing $f_0$, given $\alpha \in (0,1)$, one can construct a data-dependent probability measure on $\mathcal{F}$ called tempered posterior distribution, given by, for any measurable $A \subset \mathcal{F}$,
\begin{equation} \label{fracpost}
 \Pi_{\alpha}[A \, | \, X,Y] := \frac{\int_{A} e^{  - \frac{\alpha}{2} \sum_{i=1}^n (Y_i - f(X_i))^2} d \Pi(f)}{\int e^{  - \frac{\alpha}{2} \sum_{i=1}^n (Y_i - f(X_i))^2} d \Pi(f)}. 
\end{equation} 
Setting $\alpha$ equal to $1$ in the last display leads to the classical posterior distribution, the conditional distribution of $f$ given $X,Y$ in model \eqref{model}, when $f$ is equipped with a prior distribution $\Pi$. Here we (mostly) work with a tempered (or `fractional') posterior instead, which corresponds to rising the Gaussian likelihood to a power $\alpha$, here taken to be smaller than $1$, which effectively downweights the influence of the data; a result for the classical posterior ($\al=1$) and an augmented prior is discussed in Section \ref{sec : truepost}. Tempered posteriors are particular cases of so-called Gibbs posteriors, where the (log--)likelihood is itself replaced by a user-chosen empirical quantity; both objects are particularly popular in both machine learning and statistical applications, see e.g. \cite{l2023semiparametric} (or the more general references \cite{ghosal_van_der_vaart_2017, alquierrev24}) for more discussion. The reason we consider fractional posteriors with $\alpha<1$ is mainly of technical nature and related to our choice of prior below; we refer to the discussion in Section \ref{sec:disc} for more on this.

To analyse the convergence of the fractional posterior distribution, we take a frequentist approach, which means that we analyse the behaviour of $\Pi_\alpha[\cdot \, | \, X,Y]$ in \eqref{fracpost} under the assumption that the data has effectively been generated according to model \eqref{model} with $f_0$ a fixed `true' regression function. The goal is then  to characterise the rate at which the fractional posterior distribution concentrates its mass around the true function $f_0$ (if at all). It turns out that a sufficient condition for this is, for contraction in terms of a R\'enyi--type divergence, that the prior distribution puts enough mass on a certain neighborhood of $f_0$ (see e.g. \citet{bhattacharya2019}, \citet{l2023semiparametric} and references therein); the version we use in the present paper can be found as Lemma \ref{lem : conc} in Appendix \ref{app: post} below. 
 For more context and references on general theory of frequentist analysis of (possibly generalised--) posterior distributions, we refer to the book by \citet{ghosal_van_der_vaart_2017} and the monograph \cite{stf24}.\\

{\em Known upper-bound on $f_0$.} As is common in the analysis of deep learning algorithms, we assume for simplicity that an upper-bound $M_0$ on the true regression function  $f_0$ is known: $\|f_0\|_\infty\le M_0$. Our results continue to hold in slightly weaker form if this is not the case.

\medskip
\emph{Heavy-tailed priors on network coefficients}. For a given depth $L$ and width vector $\br$, we define a prior distribution on $\mathcal{F}(L,\mathbf{r})$ as follows. Importantly, in our method, the choice of $L, \mathbf{r}$ are {\em deterministic} and given in \eqref{logdepth}, \eqref{compwi} (or \eqref{largewi}) below.

Let $T$ be the total number of parameters in the network, and recall the notation $[T]=\{1,\ldots,T\}$ and that we order network parameters in a fixed arbitrary order.  The heavy-tailed prior samples the network coefficients as 
\begin{equation}\label{def : prior}
    \forall k \in [T], \qquad \theta_k  = \sigma_k \, \zeta_k,
\end{equation} 
\begin{itemize}
\item where the scaling factors $\sigma_k$ for $k \in [T]$ are positive real numbers to be chosen below;
\item the random variables $\zeta_k$'s for $k \in [T]$ are independent identically distributed   with a given heavy-tailed density $h$ on $\R$ that satisfies the following conditions (H1) to (H3): there exist constants $c_1, c_2>0$ and $\kappa \geq 0$ such that
\begin{enumerate}
    \item[\mylabel{H1}{(H1)}]  \text{$h$ is symmetric, positive, bounded, and decreasing on $[0,+\infty)$,} 
    
    \item[\mylabel{H2}{(H2)}] for all $x \ge 0$,
    \begin{equation*} 
        \log \left( 1/h(x) \right) \leq c_1 (1+\log^{1+\kappa}(1+x)),
    \end{equation*}
    \item[\mylabel{H3}{(H3)}]  for all $x \ge 1$,
    \begin{equation*} 
         \overline{H}(x) := \int_x^{+\infty} h(u)du \leq \frac{c_2}{x}.
    \end{equation*} 
\end{enumerate}
\end{itemize}
Let us now give typical examples for $(\sigma_k)$ and $h$ as well as some intuition behind this choice of prior. The conditions on $h$ accommodate many standard choices of densities with polynomial tails, such as Cauchy and most Student densities: all these verify (H2) with $\kappa=0$ for a sufficiently large constant $c_1>0$; a typical possible choice of $\sigma_k$ allowed in all theorems below is $\exp(-\log^{2(1+\delta)}{n})$ for $\delta>0$, independent of $k$ (i.e. one takes this same value for all network coefficients), with a decrease in $n$ just slightly faster than a polynomial  (which would correspond to $\exp(-a\log{n})$). 

This type of priors has been recently introduced by \cite{ac23} for  use in a number of classical nonparametric models such as regression, density estimation and classification, where it is shown to achieve optimal convergence rates (up to log factors) for tempered posteriors with automatic adaptation to smoothness; efficient simulations using infinite-dimensional MCMC schemes are also considered therein.

There are two key ideas behind this choice of prior. First, the heavy tailed distribution enables one to model large values if needed (here `large' will mean at most of the order of a polynomial in $n$); second, the scaling factors are taken sufficiently small to prevent overfitting. Unlike for spike-and-slab priors (e.g. \citet{polson2018posterior}, \cite{kong2023masked}) the prior sets no coefficients exactly to zero. This is often an advantage computationally, as modelling exact zero coefficients in the prior typically requires performing a form of discrete model selection in the posterior sampling, which can have a large computational cost even for moderate sample sizes. Note also that the architecture of the network is non-random: we refer to Section \ref{sec:main} for two possible specific choices of $L, \br$. 
 This is in contrast to model-selection type priors where a prior is put on $\br, L$ or both. Again, performing model selection for instance on the width $\br$ is a form of hyper-parameter selection problem which often leads to additional computational cost, as one needs to obtain the posterior on the hyperparameter; see also the discussion section in \cite{ac23}, where similar comments are made with respect to Gaussian process priors in regression, for which hyperparameters need to be calibrated (e.g. the key lengthscale parameter if one uses a squared-exponential kernel). \\

{\em The overall prior $\Pi$ on DNNs.} We are now in position to define the prior on functions $f$ we consider:
\begin{enumerate}
\item choose the architecture $(L,\br)$ according to one of the two deterministic choices \eqref{logdepth}--\eqref{compwi} or \eqref{logdepth}--\eqref{largewi} as specified at the beginning of Section \ref{sec:main} below; 
\item given $(L,\br)$, sample network weights $\te_k$ according the heavy-tailed prior as defined above; next form the neural network realisation $f$ as in \eqref{fcomp}.

\end{enumerate}

A final remark is that the choice of architecture parameters we make below places us in the {\em overfitting regime}: namely, our prior always fits more weights than the optimal number predicted by the classical bias-variance trade-off, see the Discussion below for more on this. \\

\subsection{Heavy-tailed mean-field tempered variational approximations}
\label{sec:vb}

Let us now turn to a variational approach, where the idea is to approximate the tempered posterior on a subset $\mathcal{S}$ of (simpler) probability distributions on the parameter space $\mathcal{F}$. The variational tempered approximation (on $\mathcal{S}$) $\hat{Q}_\al=\hat{Q}_\al(X,Y)$ is defined as the best approximation  of the tempered posterior $\Pi_{\alpha}[ \, \cdot \, | X,Y ]$ by elements of $\mathcal{S}$ in the KL--sense

\begin{equation}\label{def : tvp}
  \hat{Q}_\al :=
  \underset{Q \in \mathcal S}{\arg \! \min} \, \KL (Q , \Pi_{\alpha}[ \, \cdot \, | X,Y ]),
\end{equation} 
where the $\KL(P,Q)$ is Kullback-Leibler divergence between $P$ and $Q$ (see Appendix \ref{app: post}). Below we consider a {\em mean-field} approximation, which consists in taking a class $\cS$ of distributions of product form, assuming $\cF$ has a natural parametrisation in coordinates $(\te_1,\ldots,\te_T)$, which is the case here for DNNs by taking the collection of network weights. The individual elements of the product are taken below to belong to a scale-location family of heavy tailed distributions. 

For $h$  a (heavy-tailed) density function on $\R$ and $\mu\in\R, \varsigma>0$, we let $h_{\mu,\varsigma}$ be the shifted and re-scaled density $h_{\mu,\varsigma} : x \mapsto  h((x-\mu)/\varsigma)/\varsigma$ and set 
\[ \mathcal{H} = \mathcal{H}(h) := \left\{ Q\, : \ dQ(x) = h _{\mu , \varsigma}(x) dx,\ \, \mu \in \R, \varsigma>0 \right\}.\]

That is, $Q$ belongs to $\mathcal{H}(h)$ if it has a density with respect to the Lebesgue measure  $h_{\mu,\varsigma}$ on $\R$ for some $\mu,\varsigma$. 
 Although the class $\mathcal{H}(h)$ can be defined for an arbitrary function $h$, note  that we use the same notation `$h$'  as for the prior density from the previous section, since  in our variational results below (see Theorem \ref{thm : varcomp}), we use a shifted-and-rescaled function that matches the prior density. 
For $h$ a density and $\la>0$, we denote its $\la$--th moment by
        \begin{equation}\label{def : moment}
            m_{\lambda}(h) := \int |x|^{\lambda} h(x) \, dx.
        \end{equation}

\begin{definition}[Heavy-tailed mean-field class] \label{def : HTMF}
Let $h$ be a density function satisfying \ref{H1}, \ref{H2}, \ref{H3} with finite second order moment $m_2(h) < \infty$.
For $T\ge 1$ an integer, we define the heavy-tailed mean-field class (generated from h) as
\begin{equation}\label{eq : HTMF}
    \mathcal{S}_{HT}(h) := \left\{ Q \, : \, dQ(\theta) = \prod_{k=1}^T dQ_k(\theta_k), \text{ with } Q_k \in \mathcal{H}(h)\right\}.
\end{equation}
\end{definition}
Mean-field variational approximations such as \eqref{eq : HTMF} are particularly popular \citep{Blei_2017} in that, by removing dependencies along a decomposition of the parameter space $\mathcal{F}$ as above, they often allow for scalable computation of the minimizer defined by the optimisation problem \eqref{def : tvp}. The mean-field class \eqref{eq : HTMF} corresponds to our heavy-tailed prior on DNNs defined by \eqref{def : prior}. Variational approximation for spike-and-slab priors on neural networks have been studied previously by \citet{cherief-abdellatif20a} and \cite{baietal20}. Adaptive variational concentration results for hierarchical uniform priors have been recently obtained by \citet{ohn2024adaptive}.
Concentration tools for  variational approximations are recalled in Appendix \ref{app: post} following ideas of \citet{alquier2020}, \citet{yang2020alphavariational}, \citet{zhang2020convergence} (see also \cite{stf24} for an overview). 
For more context and discussion we also refer to the review paper on variational Bayes by \cite{Blei_2017} and the review on Bayesian neural networks  \cite{arbel2023primer}.

\section{Main results} \label{sec:main}

We now consider the convergence of tempered posteriors for heavy tailed-priors on the network weights. To illustrate the flexibility of our approach, we explore {\em simultaneous adaptation} to intrinsic dimension and smoothness  in three different settings: nonparametric regression with compositional structures, geometric data through the control of Minkowski’s dimension and anisotropic Besov spaces. For the last two, our method is the first to achieve simultaneous adaptation to dimension and smoothness to the best of our knowledge. \\

{\em A given architecture that overfits}. In learning with deep neural networks, a key aspect is the choice of architecture $(L,\mathbf{r})$. In our approach, we choose a logarithmic depth: fix $\delta > 0$ and let
\begin{equation} \label{logdepth}
 L := \lceil \log^{1+\delta} n \rceil. 
\end{equation}  
Also, we fix beforehand a (relatively) large common width for network layers by setting 
\begin{equation} \label{compwi}
\br(\sqrt{n}):=(d,\lceil \sqrt{n} \rceil, \dots , \lceil \sqrt{n} \rceil,1),
\end{equation}
with $\br(\sqrt{n})$ of size $L+2$. Below we will also consider a slightly larger choice of width 
\begin{equation} \label{largewi}
\br(n):=(d,n, \dots ,n,1),
\end{equation}
with $\br(n)$ again a line vector of size $L+2$. More generally, it is easily seen from the proofs of the results below that any choice of widths $r_l$ of polynomial order $r_l\asymp n^{a_l}$, for {\em arbitrary} given $a_l\ge 1/2$ is compatible with the obtained rates, naturally placing the prior in the {\em overfitting} regime (and already so for the smallest possible choice $a_l=1/2$). Since one main choice of prior on weights below places the same distribution on each weight, this means that the prior places no special `penalty' term for picking more coefficients.
\\  
  
{\em Choice of scaling factors $\sigma_k$.} We consider two main choices of scaling parameters $\sigma_k$ for the heavy-tailed prior on coefficients \eqref{def : prior}. The main choice discussed in the paper, and referred to as {\em constant $\sigma_k$s} (although the constant depends on $n$, but not on $k$) is, for $\delta>0$ as in \eqref{logdepth} and any $k=1,\ldots,T$,
\begin{equation} \label{fixedsig}
\sigma_k = e^{-(\log{n})^{2(1+\delta)}},
\end{equation}  
or equivalently $\log(1/\sigma_k)=\log^{2(1+\delta)}{n}$, which is independent of $k$.

We next consider a more general choice that includes \eqref{fixedsig} as a special case, and referred to as {\em directed $\sigma_k$s}. It involves a little more notation, as the network weights may now depend on their relative positions in the network (recall their ordering is fixed though arbitrary) so may be skipped at first read. Recall from the notation as in Section \ref{sec:dnn} that we interpret shift vectors $v_l$ as the $0$-th columns of the weight matrices $W_l$. Suppose we work with an architecture $(L,\br)$, 
so that the network weights $(\te_k)$ are in one--to--one correspondance with  coefficients $W_l^{(ij)}$ for $l\in[L+1]$, $i \in [r_l]$ and $j \in \{0,1,\ldots,r_{l-1}\}$.  We set, whenever $\theta_ k$ corresponds to  $W_l^{(ij)}$,
\begin{equation} \label{dirsig}
\sigma _k = \sigma_l^{(ij)},
\end{equation}
where for any admissible $l,i,j$ as above, and $\delta>0$ as in \eqref{logdepth}, 
    \begin{equation}\label{condition sigma}
        \log^{2(1+ \delta)} ( i \vee j ) \leq \log(1/ \sigma_l^{(ij)} ) \leq \log^{2(1+ \delta)} n.
    \end{equation} 
The choice \eqref{fixedsig} corresponds to taking the second inequality in \eqref{condition sigma} to be an equality. If one takes instead equality in the first inequality in \eqref{condition sigma}, one obtains a (double-index) version of the weights considered in \cite{ac23} (who set $\sigma_k=\exp(-\log^2{k})$ with one index $k$). The later choice enables one, with higher prior probability, to have larger values on a number of weights (those with small $k$), which may be useful in practice to avoid too many very small weights. 
We refer to Section \ref{sec:disc} for more discussion on the choice of weights. \\

In the sequel, for $\kappa \geq 0$ the parameter in condition (H2) on the prior density ($\kappa=0$ for polynomial tails) and $\delta>0$ as in \eqref{logdepth}, we denote 
\begin{equation} \label{defga}
\gamma = 2(1+\delta)(1+\kappa)+1. 
\end{equation}

\subsection{Adaptive contraction rate in regression} \label{sec:rate}

For a class of compositions $\mathcal{G}(q, \mathbf{d}, \mathbf{t}, \boldsymbol{\beta}, K)$ and $\ga$ as in \eqref{defga}, let us introduce the rate 
\begin{equation} \label{ratephi}
 \phi_n :=  \underset{0 \leq i \leq q}{\max}\, \left(  \frac{\log^{\gamma}n}{n}\right)^{\frac{\beta_i^*}{2 \beta_i^* + t_i}}.
 \end{equation}
This coincides up to the logarithmic factor with the optimal rate $\phi_n^*$ in the minimax sense as in \eqref{def : rateeff} for estimating a $f_0$ in the regression model if it is a composition as in the class $\cG$.

\begin{theorem}\label{thm : comp}
 Consider data from the nonparametric random design regression model \eqref{model}, with $\tau_0 =1$, $f_0 \in \mathcal{G}(q, \mathbf{d}, \mathbf{t}, \boldsymbol{\beta}, K)$ and arbitrary unknown parameters.
Let $\Pi$ be a heavy-tailed DNN prior as described in Section \ref{sec:htprior} with 
 \begin{itemize}
 \item architecture $(L,\br)$ with $L$ as in \eqref{logdepth} and $\br=\br(\sqrt{n})$ as in \eqref{compwi};
 \item network weights $\te_k =\sigma_k\zeta_k$ independent as in \eqref{def : prior}, with  
 \begin{itemize}
 \item scaling factors $\sigma_k$ verifying either \eqref{fixedsig}, or more generally \eqref{condition sigma};
 \item random $\zeta_k$ with heavy-tailed density $h$ satisfying (H1)--(H3).
 \end{itemize}
 \end{itemize}
 Then for any  $\alpha \in (0,1)$, for  $M>0$  large enough, $D_\al$  the $\al$--Rényi divergence, as $n \to \infty,$
  \[ E_{f_0} \Pi_{\alpha} \left[ \left\{ f \, : \,D_\al(f,f_0) \ge M\phi_n^2 \right\} \, |\, X,Y \right] \to 0.\]
\end{theorem}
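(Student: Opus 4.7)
The proof goes through the general fractional posterior contraction lemma (Lemma \ref{lem : conc}) which, for $\al \in (0,1)$, reduces Rényi contraction at rate $\phi_n^2$ to verifying the single prior-mass condition
\[ \Pi\bigl( \{ f : \KL(P_{f_0}, P_f) \le \phi_n^2 \} \bigr) \ge \exp(-C n \phi_n^2) \]
for some $C>0$. In the Gaussian regression model with unit noise, $\KL(P_{f_0}, P_f) = \tfrac{1}{2}\|f-f_0\|_{L^2(P_X)}^2$, so it is enough to lower bound the prior probability of a sup-norm ball $\{f : \|f-f_0\|_\infty \lesssim \phi_n\}$.

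\textbf{Step 1: Approximating network.} I apply the approximation result of \cite{JSH} at the index $i^*$ realizing the maximum in \eqref{ratephi}, with effective parameters $(\be^*, t^*) := (\be_{i^*}^*, t_{i^*})$ and target accuracy $\phi_n$. This yields a ReLU network $\tilde f$ of depth $O(\log n)$ and with $s^* \lesssim \phi_n^{-t^*/\be^*} \log n$ non-zero weights, each bounded in modulus by some constant $B^*$, satisfying $\|\tilde f - f_0\|_\infty \lesssim \phi_n$. Since the prior architecture $(L, \br(\sqrt n))$ from \eqref{logdepth}--\eqref{compwi} has strictly larger depth and width, $\tilde f$ embeds into the prior parameter space by padding with zero weights (the overfitting regime). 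Let $\te^* = (\te^*_k)_{k \in [T]}$ denote the weight vector of the embedded network, with $s^*$ active entries.

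\textbf{Step 2: Prior mass lower bound.} The sensitivity-of-weights lemma in the Appendix shows that for a suitable $\eta$ at worst polynomially small in $n$, any $\te$ with $|\te_k - \te^*_k| \le \eta$ uniformly in $k$ yields $\|f_\te - \tilde f\|_\infty \lesssim \phi_n$. I bound the prior probability of this product event separately over active and inactive coordinates. For each of the $s^*$ active coordinates, by symmetry and monotonicity of $h$ on $[0,\infty)$, the density of $\te_k = \si_k \ze_k$ on $[\te^*_k-\eta, \te^*_k+\eta]$ is at least $\si_k^{-1} h((|\te^*_k|+\eta)/\si_k)$; using \ref{H2} with $|\te^*_k| \le B^*$ and $\log(1/\si_k) \le \log^{2(1+\delta)} n$ gives
\[ h\bigl((|\te^*_k|+\eta)/\si_k\bigr) \ge \exp\bigl(-C \log^{(1+\kappa)\cdot 2(1+\delta)} n\bigr) = \exp\bigl(-C \log^{\gamma - 1} n\bigr), \]
so after integrating over length $2\eta$, each active coordinate contributes a prior mass at least $\exp(-C' \log^{\gamma-1} n)$. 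For each of the $T-s^*$ inactive coordinates, \ref{H3} yields $\Pi(|\te_k|>\eta) \le 2c_2 \si_k/\eta$, which is super-polynomially small since $\si_k \le e^{-\log^{2(1+\delta)} n}$ and $\eta$ is only polynomial in $1/n$; as $T$ is polynomial in $n$, the product of these probabilities is bounded below by $1/2$. Multiplying yields
\[ \Pi\bigl(\|f-f_0\|_\infty \lesssim \phi_n\bigr) \ge \exp(-C'' s^* \log^{\gamma-1} n) \ge \exp(-C''' n \phi_n^2), \]
where the last inequality is the identity $s^* \log^{\gamma - 1} n \asymp n \phi_n^2$ that precisely determines the logarithmic exponent $\gamma$ in \eqref{ratephi}.

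\textbf{Main obstacle.} The delicate part is the logarithmic bookkeeping that forces the exponent $\gamma = 2(1+\delta)(1+\kappa)+1$: the per-weight log-density cost $\log^{(1+\kappa)\cdot 2(1+\delta)} n$ coming from \ref{H2} at scale $\si_k$, when multiplied by the approximation count $s^* \asymp n\phi_n^2/\log^{\gamma - 1} n$, must reproduce the budget $n\phi_n^2$. A second technical point is the propagation of uniform weight perturbations through $L \asymp \log^{1+\delta} n$ ReLU layers with weights up to order $B^*$, which forces $\eta$ to be polynomially small; this is exactly the role of the appendix's weight-bounds lemma. Finally, the cost-free handling of the $T-s^*$ unused coordinates is what allows the method to bypass explicit model selection, and relies entirely on the tail bound \ref{H3} forcing $\si_k/\eta$ to be super-polynomially small despite the prior never setting any weight exactly to zero.
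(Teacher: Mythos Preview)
Your overall strategy matches the paper's: reduce to a prior-mass bound via Lemma~\ref{lem : conc}, approximate $f_0$ by a network $\tilde f$, use Lemma~\ref{propag1} to pass to a product event on coefficients, and split the product. There are, however, two places where the argument as written does not go through.

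First, the perturbation tolerance $\eta$ is \emph{not} polynomially small. From Lemma~\ref{propag1} one needs $\eta \lesssim \phi_n / (V(L+1)b^L)$ with $V = \prod_l(r_l+1)$; for width $\lceil\sqrt n\rceil$ and depth $L=\lceil\log^{1+\delta}n\rceil$ already $V \asymp \exp(C\log^{2+\delta}n)$, so $\eta$ (the paper's $z_n$) is of order $\exp(-C'\log^{2+\delta}n)$. Your conclusion that $\sigma_k/\eta \to 0$ is still correct, but for a different reason: one needs the exponent $2(1+\delta)=2+2\delta$ in $\log(1/\sigma_k)$ to strictly dominate the $2+\delta$ coming from $V$, which is precisely the margin that makes the inactive-coordinate bound work.

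Second, and more seriously, your treatment of inactive coordinates only covers the constant scaling \eqref{fixedsig}. Under the general condition \eqref{condition sigma} the inequality you use is reversed: one has $\sigma_l^{(ij)} \ge e^{-\log^{2(1+\delta)} n}$, while the only available \emph{upper} bound is $\sigma_l^{(ij)} \le e^{-\log^{2(1+\delta)}(i\vee j)}$, which is $O(1)$ for small indices. Then $\eta/\sigma_l^{(ij)}$ can be tiny and \ref{H3} gives nothing. The paper resolves this by splitting not on active vs.\ inactive but on the \emph{position} $i\vee j\le r^*$ vs.\ $i\vee j>r^*$, where $r^*$ is the structural threshold from Remark~\ref{rmq : seuil} (which confines all nonzero $\tilde\theta_k$ to $i\vee j\le r^*$). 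All $(r^*)^2 L\lesssim n\phi_n^2\log^{1-\gamma}n$ central coordinates are handled via the heavy-tail lower bound (Lemma~\ref{lem : coeff grand}) regardless of whether $\tilde\theta_k=0$; only for $i\vee j>r^*$ is \ref{H3} invoked, and there the \emph{lower} inequality in \eqref{condition sigma} guarantees $1/\sigma_l^{(ij)}\ge \exp(\log^{2(1+\delta)} r^*)\asymp \exp(C\log^{2(1+\delta)} n)$, which beats $1/z_n$.

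A final remark: the paper uses its own Proposition~\ref{lemma : approxcomp} (built on \cite{kohler_full}) rather than \cite{JSH}. This is what guarantees the approximating network fits into width $\lceil\sqrt n\rceil$ and, crucially, supplies the positional information in Remark~\ref{rmq : seuil} needed for the split above; the resulting weight bound is polynomial ($n^{c_\beta}$) rather than constant, which is why the factor $n^{c_\beta L}$ appears in $z_n$.
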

Let us now formulate a corollary in terms of the squared integrated loss. 
For $B>0$, let $C_B(\cdot)$ be the `clipping' function $C_B(x)=(-B)\vee (x \wedge B)$. Define a clipped tempered posterior as $\Pi_\al^B[\cdot|X,Y]  = \Pi_\al[\cdot|X,Y]\circ C_B^{-1} $. Equivalently, if $f\sim \Pi_\al[\cdot|X,Y]$, then \[ g=C_B(f)\sim \Pi_\al^B[\cdot|X,Y].\] 
The next Corollary shows that the rate above in terms of R\'enyi divergence carries over to the $L^2(P_X)$ loss as long as one works with a clipped posterior, which only requires to assume that an upper bound on $\|f_0\|_\infty$ is known, which is often assumed in theory for deep learning methods (Corollary \ref{cor : clip} itself is a direct application of Lemma \ref{lem : clip} below). In the sequel, for simplicity we always state results in terms of $D_\al$, but similarly results in the $L^2(P_X)$ loss can be derived as in the next Corollary.
\begin{corollary}\label{cor : clip}
Under the setting and conditions of Theorem \ref{thm : comp}, and $\Pi_\al^B[\cdot|X,Y]$ as defined in the last display, for any $B>\|f_0\|_\infty$, any $\al\in(0,1)$, as $n\to\infty$,
 \[ E_{f_0} \Pi_{\alpha}^B \left[ \left\{ f \, : \,\| f-f_0\|_{L^2(P_X)} \ge M\phi_n \right\} \, |\, X,Y \right] \to 0.\]
\end{corollary}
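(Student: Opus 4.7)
The plan is a reduction to Theorem \ref{thm : comp}: show that an $L^2(P_X)$-neighbourhood of $f_0$ under $\Pi_\al^B[\cdot|X,Y]$ translates into a Rényi-divergence neighbourhood under $\Pi_\al[\cdot|X,Y]$, after which Theorem \ref{thm : comp} applies directly. This is precisely the content of Lemma \ref{lem : clip} that the statement cites.

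Concretely, since $g = C_B(f) \sim \Pi_\al^B[\cdot|X,Y]$ whenever $f \sim \Pi_\al[\cdot|X,Y]$, the probability in the corollary equals $E_{f_0} \Pi_\al[\{f : \|C_B(f) - f_0\|_{L^2(P_X)} \ge M\phi_n\}\,|\, X,Y]$. I would then establish the inclusion
\[ \{f : \|C_B(f) - f_0\|_{L^2(P_X)} \ge M\phi_n\} \subset \{f : D_\al(f,f_0) \ge c(\al,B)\, M^2 \phi_n^2\} \]
and invoke Theorem \ref{thm : comp} for $M$ large enough to conclude. The inclusion rests on two ingredients based on the Gaussian-regression closed form for the Rényi divergence
\[ D_\al(f,f_0) = -\frac{1}{1-\al}\log E_X\!\left[\exp\!\left(-\tfrac{\al(1-\al)}{2}(f(X)-f_0(X))^2\right)\right]. \]

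The first ingredient is a clipping monotonicity: since $\|f_0\|_\infty \le B$ gives $C_B(f_0) = f_0$ and $C_B$ is $1$-Lipschitz, one has $(C_B(f)(x) - f_0(x))^2 \le (f(x) - f_0(x))^2$ pointwise; monotonicity of $v \mapsto e^{-v}$ then yields $D_\al(C_B(f), f_0) \le D_\al(f, f_0)$. The second and more substantial ingredient is a Rényi-to-$L^2$ conversion for bounded $g = C_B(f)$: the uniform bound $\|g - f_0\|_\infty \le 2B$ restricts the integrand to a bounded interval $[0, M_B]$ with $M_B = 2B^2\al(1-\al)$, on which a standard chord inequality $e^{-v} \le 1 - v(1 - e^{-M_B})/M_B$ gives $D_\al(g, f_0) \ge c(\al, B)\, \|g - f_0\|_{L^2(P_X)}^2$. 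Chaining the two steps delivers the claimed inclusion.

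The only real obstacle is this Rényi-to-$L^2$ lower bound: without clipping, a heavy-tailed draw $f$ could have large $L^2(P_X)$ distance to $f_0$ carried by isolated spikes on which the Rényi integrand $\exp(-\al(1-\al)(f-f_0)^2/2)$ saturates near zero and thus does not contribute linearly to the divergence. Clipping at level $B \ge \|f_0\|_\infty$ eliminates this saturation and explains why the corollary is stated for $\Pi_\al^B$ rather than $\Pi_\al$ itself.
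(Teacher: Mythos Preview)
Your argument is correct and matches the paper's approach: the paper simply states that the corollary is a direct application of Lemma~\ref{lem : clip}, and you have spelled out exactly that application, including the (implicit but necessary) clipping monotonicity step $D_\al(C_B(f),f_0)\le D_\al(f,f_0)$ that lets you feed the resulting Rényi set back into Theorem~\ref{thm : comp}. Your chord inequality for the Rényi-to-$L^2$ conversion is a minor variant of the paper's use of $1-e^{-x}\ge xe^{-x}$ in the proof of Lemma~\ref{lem : clip}, yielding a slightly different but equally valid constant $c(\al,B)$.
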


Theorem \ref{thm : comp} shows that, when performing a re-scaling of every coefficient of wide structured deep neural networks, by a well-chosen factor $\sigma_k$, the heavy-tailed prior leads to nearly minimax adaptive concentration of the tempered posterior regarding both the smoothness and the hidden compositional structure of the true function $f_0$. Indeed, neither the smoothness parameters $(\beta_i)$ nor the intrinsic dimensions $(t_i)$ were used to define the prior $\Pi$. Theorem \ref{thm : mink} below shows that a similar prior also leads to adaptation to  low-dimensional structure on the input data $(X_i)$.

Theory of deep neural network approximation (see for example \citet{kohler_full} or \citet{lu2021deep}) suggests that, given a true regression function $f_0$ in a model displaying low-dimensional structure  (for example compositional structure on $f_0$) there is a `minimal' approximating network structure $(L^*(\boldsymbol{\beta},\mathbf{t}),\mathbf{r}^*(\boldsymbol{\beta},\mathbf{t}))$ in terms of depth and width. To achieve adaptation one could recover this structure via model selection with a hierarchical prior (see for example \citet{polson2018posterior}, \citet{kong2023masked},  \citet{ohn2024adaptive}). Here instead we show in Theorem \ref{thm : comp} (resp. Theorem \ref{thm : mink} below) that with our heavy-tailed prior no hierarchical step is required to achieve adaptation and it is sufficient to take a fixed (`overfitting') network architecture, independent of $\boldsymbol{\beta}$ and $\mathbf{t}$ (resp. independent of $\beta$ and $t^*$ in the result below) and draw independent coefficients. We refer to Section \ref{sec : architecture} for more discussion on the choice of the architecture.

We allow for some flexibility in the choice of the decay coefficients $\sigma_k$. One can for example chose the same decay $\sigma_k = \exp(- \log^{2(1+\delta)}n)$ for every coefficient so that the $\theta_k$'s are independent and identically distributed. We refer to Section \ref{sec:disc} for a brief discussion on other possible choices.

\begin{remark} \label{rem-thm1}
a) The rate $\phi_n$ includes a logarithmic factor which we did not try to optimise; one can make the power $\gamma$ slightly smaller by taking a slightly different depth, albeit with a slightly different choice of $\sigma_k$'s. For simplicity we do not pursue such refinements here. b) Recall that we have assumed that an upper bound $M_0$ on $f_0$ is known: this is only used through the fact that we take a `clipping' function to state Corollary \ref{cor : clip}, where $B$ has to be larger than $M_0$. If this is not assumed, then the conclusion of Theorem \ref{thm : comp} still holds in $\alpha$--Rényi divergence.  
\end{remark}

\subsection{Contraction of mean-field variational approximation}
\label{sec:ratevb}

Let us now consider taking a variational approach, and instead of sampling from the tempered posterior distribution $\Pi_\al[\cdot\, |\, X,Y]$, sample from the variational approximation $\hat{Q}_\al$ defined in \eqref{def : tvp}, where the variational class is mean-field with heavy-tailed components.
  
\begin{theorem}\label{thm : varcomp}
Consider data from the nonparametric random design regression model \eqref{model}, with $\tau_0 =1$, $f_0 \in \mathcal{G}(q, \mathbf{d}, \mathbf{t}, \boldsymbol{\beta}, K)$ and arbitrary unknown parameters. Let $\Pi$ be a heavy-tailed DNN prior as in Theorem \ref{thm : comp}, with heavy-tailed density $h$ such that  $ m_{2 \vee (1+\kappa)}(h) \leq c_3$ for some  $c_3 >0$ and $\kappa$ as in (H2). Suppose the scaling factors $\sigma_k$ verify  \eqref{fixedsig}. 
  
 Let  $\cS=\cS_{HT}(h)$ be the mean-field class generated from $h$ as in Definition \ref{def : HTMF} and let $\hat{Q}_\al$ be  the associated variational tempered posterior as in \eqref{def : tvp}.
 
Then for any  $\alpha \in (0,1)$,  for $\phi_n$ as in \eqref{ratephi}, for $M$ large enough, as $n \to \infty,$
\[ E_{f_0} \int D_{\alpha}(f,f_0) d\hat{Q}_\al(f) \le M \phi_n^2.   \] 
In particular, for any diverging sequence $M_n\to\infty$
  \[ E_{f_0}  \hat{Q}_\al\left[ \left\{ f \, : \,D_\al(f,f_0) \ge M_n \phi_n^2 \right\} \right] \to 0.\]
 \end{theorem}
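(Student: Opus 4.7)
The plan is to reduce the variational contraction statement to a complexity/approximation trade-off via the standard alpha-variational oracle inequality recalled in Appendix \ref{app: post} (following \citet{alquier2020,yang2020alphavariational,zhang2020convergence}): for every $Q\in\cS_{HT}(h)$,
\[
 E_{f_0}\int D_\al(f,f_0)\,d\hat{Q}_\al(f)\;\lesssim\; \al\int \KL(P_{f_0},P_f)\,dQ(f) + \frac{\KL(Q,\Pi)}{n}.
\]
Since the model is Gaussian regression, $\KL(P_{f_0},P_f)=\tfrac12\|f-f_0\|_{L^2(P_X)}^2$, so it suffices to exhibit a mean-field candidate $Q^\star\in\cS_{HT}(h)$ for which both terms on the right-hand side are $O(\phi_n^2)$. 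A final application of Markov's inequality then yields the displayed concentration conclusion.

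The candidate $Q^\star$ is built around the same sparse near-optimal DNN weight vector $\te^\star$ that drives the prior-mass lower bound in the proof of Theorem~\ref{thm : comp}: by the approximation results of \cite{JSH}, one can choose $f_{\te^\star}\in\cF(L,\br)$ with $\|f_{\te^\star}-f_0\|_\infty^2\lesssim \phi_n^2$, with $s^\star\lesssim n\phi_n^2/\log^{\ga}{n}$ nonzero entries bounded in absolute value by a polynomial in $n$, and call $A$ their support. I set $Q^\star=\prod_k Q_k^\star$ with
\[
 Q_k^\star = h_{\te_k^\star,\,\varsigma}\ \text{for}\ k\in A,\qquad Q_k^\star = h_{0,\,\sigma_k}\ \text{for}\ k\notin A,
\]
where $\varsigma$ is a very small scale, e.g.\ $\varsigma=e^{-C\log^{2(1+\delta)}n}$ with $C$ large. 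For inactive coordinates $Q_k^\star$ coincides exactly with the $k$-th prior marginal, so it contributes $0$ to $\KL(Q^\star,\Pi)$, and a small variance $\sigma_k^2 m_2(h)$ to the fit term.

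For the fit term I decompose $\|f_\te-f_0\|_{L^2(P_X)}^2\le 2\|f_{\te^\star}-f_0\|_\infty^2+2\|f_\te-f_{\te^\star}\|_\infty^2$; the first summand is $O(\phi_n^2)$ by choice of $\te^\star$, and the second is controlled by the weight-to-function Lipschitz lemma stated in the Appendix, which (for bounded inputs, logarithmic depth $L$, polynomial width $\br$) produces a Lipschitz constant at most $e^{c\log^{1+\delta}n}$. Taking $Q^\star$-expectation bounds $E_{Q^\star}\|\te-\te^\star\|_2^2$ by $|A|\varsigma^2 m_2(h)+\sum_{k\notin A}\sigma_k^2 m_2(h)$, which is negligible compared to $\phi_n^2$ thanks to the chosen $\varsigma$ and $\sigma_k$. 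For the complexity term I use the product structure $\KL(Q^\star,\Pi)=\sum_{k\in A}\KL(h_{\te_k^\star,\varsigma},\,\sigma_k^{-1}h(\cdot/\sigma_k))$ and bound each summand by direct computation: the entropy contribution is $\log(\sigma_k/\varsigma)\lesssim \log^{2(1+\delta)}n$, while the cross entropy $E_{h_{\te_k^\star,\varsigma}}[-\log h(\cdot/\sigma_k)]$ is dominated, via \ref{H2} and subadditivity of $x\mapsto\log^{1+\kappa}(1+|x|)$, by $c_1(1+\log^{1+\kappa}(1+|\te_k^\star|/\sigma_k))+c_1\,m_{1+\kappa}(h)\lesssim \log^{(1+\delta)(1+\kappa)\vee 1}n$ — this is exactly where the moment assumption $m_{2\vee(1+\kappa)}(h)\le c_3$ enters. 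Summing over $|A|=s^\star$ active coordinates yields $\KL(Q^\star,\Pi)\lesssim s^\star\log^{\ga}n\lesssim n\phi_n^2$.

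The main obstacle is the simultaneous calibration of $\varsigma$ in the two steps: one needs $\varsigma$ tiny enough that the exponentially large ($L$ logarithmic but width polynomial) Lipschitz constant of the DNN still leaves an $O(\phi_n^2/n)$ spread in the fit term, yet not so tiny that the entropic $\log(\sigma_k/\varsigma)$ contribution in $\KL(Q^\star,\Pi)$ exceeds the $O(\log^{\ga}n)$ budget per active coefficient. The polylogarithmic lower bound on $h$ from \ref{H2} provides just enough slack for this to close, and the construction mirrors closely the prior-mass argument for Theorem~\ref{thm : comp}, so the same rate $\phi_n^2$ is inherited.
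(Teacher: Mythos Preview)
Your approach is essentially the paper's: invoke the $\alpha$-variational oracle inequality (Lemma~\ref{lem : pac}) and exhibit a mean-field candidate $Q^\star$ centered at the approximating network $\tilde f_0$ from Proposition~\ref{lemma : approxcomp}, then bound the fit term via Lemma~\ref{propag1} and the $\KL$ term via \ref{H2} and the moment assumption. The one structural difference is that the paper takes the \emph{same} scale $\varsigma=\sigma_k$ for every coordinate, so each $Q_k^\star$ is a pure translate of $\Pi_k$; this removes the $\log(\sigma_k/\varsigma)$ entropy term entirely and makes the separate calibration of $\varsigma$ you worry about unnecessary, since $\sigma_k=e^{-\log^{2(1+\delta)}n}$ is already small enough to beat the Lipschitz factor in the fit term.

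A few exponents in your sketch are off but do not break the argument: the Lipschitz factor from Lemma~\ref{propag1} is of order $e^{c\log^{2+\delta}n}$ (you have $V\asymp n^{L/2}$ and $b^L$ with $L\asymp\log^{1+\delta}n$), not $e^{c\log^{1+\delta}n}$; the cross-entropy contribution is $\lesssim\log^{2(1+\delta)(1+\kappa)}n=\log^{\gamma-1}n$ because $|\te_k^\star|/\sigma_k$ can be as large as $n^{c_\beta}e^{\log^{2(1+\delta)}n}$; and the sparsity bound is $s^\star\lesssim n\phi_n^2\log^{1-\gamma}n$ from Proposition~\ref{lemma : approxcomp} (which rests on \cite{kohler_full}, not \cite{JSH}). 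With these corrections your $\KL(Q^\star,\Pi)\lesssim s^\star\log^{\gamma-1}n\lesssim n\phi_n^2$ closes exactly as in the paper.
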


 Theorem \ref{thm : varcomp} establishes contraction of the heavy-tailed mean-field variational approximation with the same adaptive rate as in Theorem \ref{thm : comp}. The definition \eqref{def : HTMF} of the variational class is independent of any unknown quantities linked to $f_0$ such as $t$ and $\boldsymbol{\beta}$, has no hierarchical structure and only requires two parameters $(m , \varsigma)$, avoiding the choice of a sparsity level hyperparameter required in model selection priors such as spike and slab priors (as in e.g. \cite{cherief-abdellatif20a, baietal20}). The moment restriction on $h$ allows for a result in expectation at the level of the variational posterior and guarantees consistency of the variational posterior mean (see also Remark \ref{rmq : markov} below), restricting however the choice of the prior to fixed decay as in \eqref{fixedsig}. Up to somewhat more technical proofs, we believe that one could also weaken the conditions to derive a result in probability  as above but without moment condition on $h$, as well as to allow for scalings verifying the more general decay \eqref{condition sigma}. For simplicity here we do not investigate such further refinements.

Similarly as for Theorem \ref{thm : comp}, one may deduce from Theorem \ref{thm : varcomp}  a convergence rate result for the variational tempered posterior $\hat{Q}_\al$ in terms of the $L^2(P_X)$ loss by post-processing $\hat{Q}_\al$ through a simple `clipping' operation.

\subsection{Geometric data} \label{sec:geom}

Rather than making assumptions about the structure of the unknown regression function $f_0$ (e.g. in terms of compositions as in Sections \ref{sec:rate} and \ref{sec:ratevb}), we can instead permit the observed data itself to exhibit a low-dimensional structure. 

For instance, one can assume that the distribution $P_X$ of the design points $(X_i)$ is supported on a smooth manifold of lower dimension than $d$. This notion is frequently employed to describe intrinsic dimensionality and various results are available to approximate functions on such manifolds using deep neural networks, see for example \cite{NEURIPS2019_fd95ec8d}, \cite{schmidthieber2019deep} and \cite{fangetal24}.
 Here we follow the more general setting of \cite{nakada2020adaptive}, which includes design points sitting on a manifold as a special case while also allowing for rougher sets (such as fractals, although for the discussion below we focus on a manifold example for simplicity) via the following notion of dimension.

\begin{definition}[Minkowski Dimension]
Let $E \subset [0,1]^d$, the $(\varepsilon , \infty)$-covering number of $E$, written $\mathcal{N}(E,\varepsilon)$ is the minimal number of $\ell_{\infty}$-balls of radius $\varepsilon >0$ necessary to cover $E$. The upper Minkowski dimension of $E$ is defined as 
\[ \dim_M E := \inf \{ t \geq 0 \, : \, \underset{\varepsilon \downarrow 0}{\lim \, \sup} \, \mathcal{N}(E,\varepsilon) \varepsilon^t = 0 \}.\]
\end{definition}

The Minkowski dimension gives insight on how the covering number of $E$ changes compared to the decrease of the radius of the covering balls. 
\begin{hyp}\label{hyp : mink}
     Suppose that $t^*:=\dim_M \, \supp P_X < d$.
\end{hyp}
Assumption \ref{hyp : mink} is satisfied in the case that $\supp P_X$ is a smooth compact  manifold of dimension $d^*$ strictly less than $d$, as then $t^*\le d^*<d$, as established in  Lemma 9 in \cite{nakada2020adaptive}.

We show below that, under Assumption \ref{hyp : mink}, our construction of heavy-tailed priors can be adapted to lead to hierarchical-free adaptation to the Minkowski dimension of the support of the design points $(X_i)$. Let us set, for $\kappa$ the constant appearing in (H2) and $t>0$,

\begin{equation} \label{vepsn}
 \varepsilon_n(t) := \left( n / \log^{2(1+ \kappa)+1}n\right)^{- \beta / (2 \beta + t)}.
\end{equation}  
 
\begin{theorem}\label{thm : mink}
 Consider data from the nonparametric random design regression model \eqref{model}, with $\tau_0 =1$, $f_0 \in \mathcal{C}_d^{\beta}([0,1]^d)$. Suppose Assumption \ref{hyp : mink} holds  
 and let $\Pi$ be a heavy-tailed DNN prior as described in Section \ref{sec:htprior} with  
 \begin{itemize}
 \item architecture $(L,\br)$ with $L = \lceil \log n \rceil$ and $\br=\br(n)$ as in \eqref{largewi};
 \item network weights $\te_k = \sigma_k\zeta_k$ independently as in \eqref{def : prior}, with  
 \begin{itemize}
 \item scaling factors $\sigma_k$ verifying \eqref{fixedsig};
 \item random $\zeta_k$ with heavy-tailed density $h$ satisfying (H1)--(H3).
 \end{itemize}
 \end{itemize}
Let  $\varepsilon_n(t)$ as in \eqref{vepsn}, for a given arbitrary $t>t^*$. Then for any  $\alpha \in (0,1)$,  for $M$ large enough, as $n \to \infty,$
 \[ E_{f_0} \Pi_{\alpha} \left[ \left\{ f \, : \, D_\al(f,f_0) \ge  M \varepsilon_n(t) \right\} \, |\, X,Y \right] \to 0.\]
 Assume further that    $m_{2 \vee (1 + \kappa)}(h) < \infty$ and  let $\hat{Q}_{\alpha}$ be the heavy-tailed tempered variational approximation defined by \eqref{def : tvp} on $\mathcal{S}_{HT}(h)$. 
 Then the contraction of $\hat{Q}_{\alpha}$ as stated in Theorem \ref{thm : varcomp}   holds here with $\phi_n$ replaced by $\veps_n$. 
\end{theorem}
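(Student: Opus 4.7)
The overall strategy mirrors that of Theorems \ref{thm : comp} and \ref{thm : varcomp}, with the approximation theory input of \cite{JSH} replaced by that of \cite{nakada2020adaptive} tailored to sets of small Minkowski dimension.

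\textbf{Step 1 (approximation).} Fix $t > t^*$. By the definition of upper Minkowski dimension, for $\varepsilon$ small enough, $\supp P_X$ can be covered by $N \lesssim \varepsilon^{-t}$ cubes of side $\varepsilon$. Invoking the DNN approximation result of \cite{nakada2020adaptive} (Theorem 1 therein), for the prescribed architecture $L = \lceil \log n\rceil$ and width vector $\br(n)$ as in \eqref{largewi}, there exists a ReLU network $\tilde f\in\cF(L,\br(n),s)$ with at most $s \asymp \varepsilon_n(t)^{-t/\beta}\log n$ active weights, and whose weights satisfy $|\tilde\theta_k|\le n^c$ for some constant $c$, such that
\[
\|\tilde f - f_0\|_{L^\infty(\supp P_X)} \lesssim \varepsilon_n(t).
\]
Since concentration is measured through $P_X$, this $L^\infty$-bound on $\supp P_X$ is what is needed.

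\textbf{Step 2 (prior mass).} We apply the concentration tool Lemma \ref{lem : conc}, which reduces the posterior statement to verifying the prior mass condition
\[
\Pi\bigl(\{f:\|f-f_0\|_{L^2(P_X)}^2 \le C\varepsilon_n(t)^2 \text{ and KL-type quantities are controlled}\}\bigr) \ge e^{-c n\varepsilon_n(t)^2}.
\]
Using the auxiliary lemma on weight perturbations (Appendix), small $\ell^\infty$-perturbations of the weights of $\tilde f$ of size $\eta$ yield networks with $L^\infty$-error $\lesssim \eta\,\mathrm{poly}(n)$ on $\supp P_X$, since all weights are bounded by $n^c$ and the depth is $L\asymp\log n$. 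Choosing $\eta$ polynomially small in $n$, it suffices to bound the prior probability that each active weight $\te_k$ lies within $\eta$ of $\tilde\te_k$, and each inactive weight $\te_k$ lies in $[-\eta,\eta]$. For active coefficients, using (H1)--(H3) and the decay \eqref{fixedsig}, a direct computation (as in \cite{ac23} and the proof of Theorem \ref{thm : comp}) gives a per-coefficient lower bound of order $\exp(-C\log^\gamma n)$; multiplying over $s$ active coefficients produces the factor $\exp(-Cs\log^\gamma n)$. For the inactive coefficients, the tail integral $\overline{H}(\eta/\sigma_k)$ combined with $\sigma_k = e^{-\log^{2(1+\delta)}n}$ gives probability very close to $1$ per coordinate, and the product over $T\asymp n^2\log^{1+\delta}n$ coordinates remains $\ge \exp(-cn\varepsilon_n(t)^2)$ by (H3). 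Combining, and using $s\log^{\gamma}n \asymp n\varepsilon_n(t)^2$ from \eqref{vepsn}, yields the required prior mass bound. Lemma \ref{lem : conc} then delivers the posterior contraction in $D_\alpha$-divergence.

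\textbf{Step 3 (variational counterpart).} For the mean-field variational approximation, I follow the route of Theorem \ref{thm : varcomp}, based on the variational upper bound (Alquier--Ridgway type) recalled in Appendix \ref{app: post}: for any $Q\in\cS_{HT}(h)$,
\[
E_{f_0}\!\int D_\al(f,f_0)\,d\hat Q_\al(f) \;\le\; \tfrac{\al}{1-\al}\!\int \|f-f_0\|_{L^2(P_X)}^2\,dQ(f) \;+\; \tfrac{1}{n(1-\al)}\KL(Q,\Pi).
\]
It suffices to exhibit a suitable $Q^\star = \bigotimes_{k=1}^T h_{\mu_k,\varsigma_k}\in\cS_{HT}(h)$. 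Take $\mu_k = \tilde\te_k$ for active weights and $\mu_k=0$ for inactive ones, and choose the scales $\varsigma_k = \sigma_k\eta'$ for a small polynomial-in-$1/n$ factor $\eta'$. The moment assumption $m_{2\vee(1+\kappa)}(h)<\infty$ lets one control $\int \|f-f_0\|_{L^2(P_X)}^2\,dQ^\star(f)$ by a deterministic approximation bias of order $\varepsilon_n(t)^2$ plus a variance term from propagating the scales $\varsigma_k$ through the network (again using the weight-perturbation lemma). The KL term $\KL(Q^\star,\Pi)$ decomposes coordinate-wise, and (H1)--(H3) together with \eqref{fixedsig} give a per-coordinate contribution of order $\log^\gamma n$ for each active coordinate and a negligible contribution for inactive ones, amounting to $\KL(Q^\star,\Pi)\lesssim s\log^\gamma n\lesssim n\varepsilon_n(t)^2$. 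The in-expectation bound then follows, and the in-probability statement is obtained by Markov's inequality.

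\textbf{Main obstacle.} The principal difficulty is the quantitative matching between the number of active weights produced by the Minkowski-dimension--based approximation of \cite{nakada2020adaptive}, the polylog factors induced by the heavy-tailed prior through the exponent $\gamma$, and the target rate $\varepsilon_n(t)$: one must check that $s\log^\gamma n \asymp n\varepsilon_n(t)^2$ for the given choice of $L$ and $\br(n)$, and that inactive coefficients with fixed scaling \eqref{fixedsig} do not pollute either the prior mass bound (in Step 2) or the KL term (in Step 3), despite the overfitting architecture containing polynomially many weights. This is exactly where the heavy-tail condition (H3), rather than a Gaussian-type tail, is essential.
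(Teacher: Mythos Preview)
Your proposal is correct and follows essentially the same route as the paper's own proof: Nakada--Imaizumi approximation (Lemma \ref{lem : approxmink}) to produce a sparse network $\tilde f_0$, then the prior mass condition via the active/inactive split exactly as in the proof of Theorem \ref{thm : comp}, and for the variational part the construction of $Q^*$ centred at $\tilde\theta_k$ as in Theorem \ref{thm : varcomp}. One minor bookkeeping point: in this theorem the depth is $L=\lceil\log n\rceil$ (no $\delta$) and the target rate \eqref{vepsn} carries the log power $2(1+\kappa)+1$, not $\gamma=2(1+\delta)(1+\kappa)+1$; accordingly the paper's balance is $s\log^{2(1+\kappa)}n\lesssim n\varepsilon_n^2$ rather than your $s\log^\gamma n\asymp n\varepsilon_n^2$, and the paper simply takes $\varsigma_k=\sigma_k$ (not $\sigma_k\eta'$) in the variational $Q^*$.
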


As a special case, if $\supp P_X$ is a smooth compact  manifold of dimension $d^*$ strictly less than $d$, Theorem \ref{thm : mink} gives a contraction rate $\veps_n(t)$ in particular for any $t>d^*$, that is a rate $\veps_n(d^*+\eta)$ for any fixed arbitrarily small $\eta>0$, which is near-optimal (the optimal minimax rate being $\veps_n(d^*)$). The proof of Theorem \ref{thm : mink} in given in Appendix \ref{app : minkproof}.

In Theorem \ref{thm : mink}, we use the choice $\br=\br(n)$ as in \eqref{largewi}, that is, we increase the network width quadratically compared to \eqref{compwi}. Indeed, in order to establish the above  posterior concentration, we rely on an  approximation result of functions in such a setting by ReLU-deep neural networks  by \citet{nakada2020adaptive}. Such a result, recalled as Lemma \ref{lem : approxmink} below, uses sparse approximations (i.e. many network weights are equal to zero), in a similar spirit as the corresponding result in \cite{JSH}, which has the effect to increase (quadratically) the width of the network compared to Theorem \ref{thm : comp} and Theorem \ref{thm : varcomp}, for which we used the recent fully-connected approximation results of \cite{kohler_full} (as opposed to \cite{JSH}, Theorem 5).  It should also be possible to use fully-connected approximation results in the present setting, for example in the spirit of Theorem 6.3 of \citet{jiao2023deep} (which, similar to \cite{kohler_full}, has at most polynomially increasing weights).

Also, here we have considered only the case of constant scaling factors \eqref{fixedsig}. The same remarks as in the discussion below Theorem \ref{thm : varcomp} apply: the statement can presumably be adapted to directed decay of the network weights up to using a slightly more technical proof. 

A number of other geometrical settings of interest  could also   be considered within the framework of the present work. Among others, one could consider the case of design points not exactly supported {\em on} the manifold but rather in a neighborhood of it. While such extensions are very interesting, they fall beyond the scope of the present contribution. We refer to the Discussion in Section \ref{sec:disc} for some comments on how to possibly extend our results to design points lying in a tubular neighborhood of an unknown manifold.

\subsection{Anisotropic Besov classes} \label{sec:anis}

Instead of assuming the true function to be in a H\"older space as in Theorem \ref{thm : comp}, we now investigate the case that the smoothness of the true function is both depending on the location (non-homogeneous smoothness) and the direction (anisotropic). For simplicity of presentation we focus on the setting without additional compositional structure, but one could also derive results in case of compositions as in Section \ref{sec:rate}; the case of a single function is already illustrative of the adaptation properties at stake. In the following we recall a definition of anisotropic Besov spaces through the moduli of smoothness. For properties of these spaces and  equivalent characterizations (for example through multiresolution analysis) we refer to \cite{GineNickl}, Section 4.3. and to the monograph \cite{Triebel}. 
 
For a function $f : \R^d \to \R$, the $m$-th difference of $f$ in the direction $ v \in \R^d$ is 
\[ \Delta_v^m(f)(x) := \Delta_v^{m-1}(f)(x + v) - \Delta_v^{m-1}(f)(x), \qquad  \Delta_v^{0}(f)(x) := f(x).   \]

\begin{definition}\label{def : modul} For a function $f \in L^p([0,1]^d)$, where $p \in [1,\infty)$, the $m$-th modulus of smoothness of $f$ is defined by
\[ \omega_{m,p}(f,u) := \sup_{ v \in \R^d \, :\, |v_i|\leq u_i } \lVert \Delta_v^m(f) \rVert_p, \qquad \text{for } u=(u_1,\dots,u_d),\ \ u_i >0.\]
    
\end{definition}

\begin{definition}[Anisotropic Besov space] \label{def : besov} Let $p \in [1,\infty)$,  $ \beta = (\beta_1,\dots , \beta_d) \in \R_{>0}^d$ and $m := \max_i \lfloor \beta_i \rfloor +1$. Let the seminorm $|\, \cdot \,|_{B_{pp}^{\beta}}$ be defined as
\begin{equation*}
    |f|_{B_{pp}^{\beta}}^p := \sum_{k=0}^{\infty} \left[ 2^k w_{m,p}\left(f,(2^{-k/\beta_1}, \dots , 2^{-k/\beta_d}) \right)\right]^p.
\end{equation*}
For $\lVert f \rVert_{B_{pp}^\beta} = \lVert f \rVert_p + |f|_{B_{pp}^{\beta}} $ and $M>0$, the anisotropic Besov ball of radius $M$ on $[0,1]^d$ is 
\begin{equation*}
    B_{pp}^{\beta}(M) := \left\{ f \in L^p([0,1]^d) \, : \,  \lVert f \rVert_{B_{p,p}^\beta} \leq M  \right\}.
\end{equation*}    
\end{definition}
Optimal minimax rates on such anisotropic Besov balls can be expressed in terms of the harmonic mean smoothness, defined as
\begin{equation}\label{def : moyharm}
    \Tilde{\beta}^{-1}: =  \sum_{k=1}^d \beta_k^{-1} .
\end{equation} 
Let us set, for $\gamma$ the constant given by \eqref{defga} and $\Tilde{\beta}$ defined by \eqref{def : moyharm},
\begin{equation}\label{vepsn2}
    \epsilon_n = \left(  n / \log^{\gamma}n  \right)^{- \Tilde{\beta} / (2 \Tilde{\beta} + 1)}.
\end{equation}

\begin{theorem}\label{thm : besov}
 Consider data from the nonparametric random design regression model \eqref{model}, with $\tau_0 =1$, $f_0 \in B_{pp}^{\beta}(1)$ and $\beta = (\beta_1,\dots,\beta_d) \in \R_{>0}^d$. Assume $p < 2$  and $\Tilde{\beta} > 1/p$, and let $\Pi$ be a heavy-tailed DNN prior as described in Section \ref{sec:htprior} with 
 \begin{itemize}
 \item architecture $(L,\br)$ with $L$ as in \eqref{logdepth} and $\br=\br(n)$   as in \eqref{largewi};
 \item network weights $\te_k = \sigma_k\zeta_k$ independently as in \eqref{def : prior}, with  
 \begin{itemize}
 \item scaling factors $\sigma_k$ verifying \eqref{fixedsig};
 \item random $\zeta_k$ with heavy-tailed density $h$ satisfying (H1)--(H3).
 \end{itemize}
 \end{itemize}
 Then for any  $\alpha \in (0,1)$, for  $M>0$  large enough it holds, as $n \to \infty,$ for $\epsilon_n$ as in \eqref{vepsn2}
 \[ E_{f_0} \Pi_{\alpha} \left[ \left\{ f \, : \,  D_\al(f,f_0) 
 \ge  M \epsilon_n \right\} \, |\, X,Y \right] \to 0.\]
 Assume further that    $m_{2 \vee (1 + \kappa)}(h) < \infty$ and  let $\hat{Q}_{\alpha}$ be the heavy-tailed tempered variational approximation defined by \eqref{def : tvp} on $\mathcal{S}_{HT}(h)$. 
Then the contraction of $\hat{Q}_{\alpha}$ as stated in Theorem \ref{thm : varcomp}   holds here with $\phi_n$ replaced by $\epsilon_n$. 
\end{theorem}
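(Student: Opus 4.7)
The plan is to follow the blueprint of Theorems \ref{thm : comp} and \ref{thm : varcomp}, replacing the H\"older-approximation input by a sparse DNN approximation for anisotropic Besov balls. The generic tempered-posterior concentration Lemma \ref{lem : conc} reduces the fractional posterior claim to a prior-mass condition on a KL-type neighborhood of $f_0$, which in Gaussian regression amounts to $\Pi(\|f - f_0\|_{L^2(P_X)} \le c\epsilon_n) \gtrsim e^{-Cn\epsilon_n^2}$. First, I would invoke a ReLU DNN approximation theorem for $B_{pp}^{\beta}(1)$ under $p<2$ and $\tilde\beta > 1/p$, in the spirit of \cite{suzuki2021deep}: this produces, for every large $n$, a network $\tilde f$ of architecture $(L, \br(n))$ with $s_n \asymp (n/\log^\gamma n)^{1/(2\tilde\beta + 1)}$ active weights (up to log factors), all bounded by some polynomial $n^A$, satisfying $\|f_0 - \tilde f\|_\infty \lesssim \epsilon_n$. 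The prescribed architecture is large enough to embed $\tilde f$, with unused weights set to zero.

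Second, convert this into the prior-mass bound, following the strategy of Theorem \ref{thm : comp}. Let $\tilde\theta$ be the weight vector of $\tilde f$. A Lipschitz-stability bound for $\theta \mapsto f_\theta$, from the weight-bound lemma in the appendix (using the polynomial bound on $|\tilde\theta_k|$, depth $L$ and width $r_l \le n$), reduces the problem to lower bounding $\Pi$ on an $\ell^\infty$-box of suitable radius around $\tilde\theta$. Split coordinates into the $s_n$ \emph{active} ones (with $|\tilde\theta_k| \le n^A$) and the $T - s_n$ \emph{inactive} ones (with $\tilde\theta_k = 0$): on each active coordinate, \ref{H2} combined with $\sigma_k^{-1} = e^{\log^{2(1+\delta)}n}$ yields a local log-mass of order $-\log^{\gamma-1} n$, totalling $-s_n \log^{\gamma-1}n$; the inactive coordinates contribute only polylog factors each, harmless since $T \lesssim n^2 \log^{1+\delta} n$ and $h(0)$ is bounded below. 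By the definition of $\epsilon_n$ one has $s_n \log^{\gamma-1}n \asymp n\epsilon_n^2$, so Lemma \ref{lem : conc} delivers the R\'enyi contraction at rate $\epsilon_n$.

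For the variational part, as in the proof of Theorem \ref{thm : varcomp} and following \citet{alquier2020, zhang2020convergence}, I would exhibit $Q^* = \bigotimes_k h_{\tilde\theta_k, \varsigma_k} \in \cS_{HT}(h)$ with small scales $\varsigma_k$ at active coordinates and $\varsigma_k \asymp \sigma_k$ at inactive ones. The moment condition $m_{2 \vee (1+\kappa)}(h) < \infty$, together with the Lipschitz-stability bound, gives $\int \|f_\theta - f_0\|^2_{L^2(P_X)}\, dQ^*(\theta) \lesssim \epsilon_n^2$, while a coordinate-wise KL computation identical to the one in Theorem \ref{thm : varcomp} yields $\KL(Q^*, \Pi) \lesssim n\epsilon_n^2$; the variational evidence bound then closes the argument.

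The main obstacle is the anisotropic approximation step: one needs a ReLU DNN realisation of the adaptive B-spline / atomic decomposition underlying optimal rates over $B_{pp}^{\beta}$ for $p<2$ and $\tilde\beta > 1/p$, at rate $\epsilon_n$ up to log factors, with \emph{simultaneous} polynomial control of depth, width, number of active weights, \emph{and} magnitude of weights, so as to feed cleanly into the prior-mass estimate above. This is the technical heart of the argument; once it is in place, the rest of the reasoning runs parallel to what has been developed for Theorems \ref{thm : comp}--\ref{thm : varcomp}.
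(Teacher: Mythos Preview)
Your proposal is correct and follows essentially the same route as the paper: the approximation step you flag as ``the main obstacle'' is precisely Lemma~\ref{lem : approxbesov} (Proposition~2 of \cite{suzuki2021deep}), which already delivers a network in $\mathcal{F}(L,(d,r,\dots,r,1),s)$ with $L\lesssim\log N$, $r\lesssim N$, $s\lesssim N\log N$ and weights bounded by a \emph{universal constant}, so that choosing $N\asymp (n/\log^\gamma n)^{1/(2\tilde\beta+1)}$ and embedding into $\mathcal{F}(\lceil\log^{1+\delta}n\rceil,\br(n),s)$ feeds directly into the prior-mass and variational arguments of Theorems~\ref{thm : comp}--\ref{thm : varcomp} exactly as you outline. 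One small correction: the inactive-coordinate contribution is controlled via \ref{H3} (each factor is $1-2\overline{H}(z_n/\sigma_k)\ge 1-O(\sigma_k/z_n)$, with $T\sigma_k/z_n\to 0$ super-polynomially), not via a lower bound on $h(0)$; and for the variational $Q^*$ the paper simply takes $\varsigma_k=\sigma_k$ at \emph{all} coordinates, which already suffices.
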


Whenever $p < 2$ and $\Tilde{\beta} > 1/p$, functions in $B_{pp}^{\beta}$ are continuous and in-homogeneously smooth in terms of the $L^2$--norm (see for instance \citet{suzuki2021deep} and \cite{10.1214/aos/1024691081}). 
An interesting feature of Theorem \ref{thm : besov} is the adaptation to the anisotropic smoothness it provides. Anisotropy itself may help  mitigate the curse of dimensionality. Indeed, when $\beta = (\beta_0, \dots, \beta_0)$, we have $\Tilde{\beta} = \beta_0 / d$, and one obtains the usual rate for a homogeneously smooth function in $\mathbb{R}^d$. However, when $f_0$ is not very smooth only in a small number of directions, the estimation rate $n^{-\Tilde{\beta}/(2 \Tilde{\beta} + 1)}$ depends only on this specific small number of coordinates rather than on the whole ambient dimension $d$.

The proof of Theorem \ref{thm : besov} can be found in appendix \ref{app : proof besov}. Similar remarks as for Theorem \ref{thm : mink} can be made. In particular, \sbl{}. Also, the proof uses the sparse approximation of \cite{suzuki2021deep} recalled as Lemma \ref{lem : approxbesov} below, which again explains why it is natural to use the width $\br=\br(n)$  as in \eqref{largewi}.

\subsection{Influence of the network architecture}\label{sec : architecture}

In the next lines, we discuss possible choices of the architecture. We consider in particular our proposed choices (that is, \eqref{compwi} or \eqref{largewi}), but also contrast these with different ones, examining the impact on rates and optimality. For simplicity we discuss the simplest setting of estimating a $\beta$--H\"older function (without the compositional structure), first in dimension $1$ (see the end of the Section for the case of dimension $d$), that is, on $[0,1]$, for which the minimax rate in squared integrated norm is of order $\veps_n^2=n^{-2\be/(2\be+1)}$.\\

{\em Deterministic Architecture.} 
In this work, we have chosen a prior with a fixed (deterministic) width and depth. This choice aligns with common practice for deep neural networks where practitioners typically select a fixed architecture prior to learning network weights (e.g. by gradient descent or a variational optimisation algorithm). Among Bayesian approaches so far for deep neural networks, those that have claimed adaptation with respect to the smoothness parameter $\be>0$ have  involved  sampling either the nonzero coefficients, or  the architecture itself, from a hyper-prior: this is the case in \cite{polson2018posterior} where the prior is a spike-and-slab distribution that sets to zero a number of coefficients at random and very recently \cite{kongkim24}, where the width in the architecture is random. A key insight from our results is that the deterministic (and overfitting) choice of architecture, where all networks coefficients are non-zero under the prior, is not detrimental to optimality and statistical adaptation properties.\\

{\em Moderate depth.} In the present work, we have restricted the depth of the network to grow logarithmically with the sample size $n$. This restriction is primarily technical, since, as we explain in more details now, our arguments (as well as those used so far in the theoretical study of Bayesian DNNs, to the best of our knowledge) are presently difficult to extend to the `very deep' case.  
A crucial challenge in posterior contraction analysis (even in the non-tempered case, $\al =1$) is verifying the so-called `mass condition' (Lemma \ref{lem : conc}), which ensures the prior places enough probability mass near the true function. In the present setting, this is achieved using the sufficient condition in Lemma \ref{propag1}. This result quantifies how much two network realisations are far appart if we control individual distances between weights. Using this `error propagation' Lemma requires controlling the distance between the `true' coefficients (of a neural network approximation of $f_0$) and those sampled from the prior, uniformly over the network (in particular independently of the depth). Notably, the required precision for this control increases polynomially with width (for fixed depth) and exponentially with depth (for fixed width). This makes analyzing `very deep' priors significantly more challenging, as it would require having a very precise prior mass control for coefficients very deep in the network. 
We believe that such a result is beyond the scope of the present paper as it would involve arguments specifically tied to the study of very deep neural networks, but an interesting direction for future work (in particular the understanding of depth-dependence one should put on the decay coefficients when randomly initializing a neural network is of significant interest and has been studied for instance, in the case of very-deep Res-Nets in a slightly different context by \cite{marion2022scaling}).\\

{\em Choice of the width.} Having fixed the network depth $L$ to a slowly growing logarithmic term as in \eqref{logdepth}, the only degree of freedom left on the architecture is now the width of the network (the activation function is in principle another degree of freedom;  activations beyond ReLU are discussed in Section \ref{sec : activ} below).  The fully-connected approximation result from \cite{kohler_full}, recalled in Proposition \ref{KLt} below, implies that for a  logarithmic depth $L$ as above, there is an oracle width $r^* := \br(N^*)$  such that networks with architecture $(L ,r^*)$ approximate $\beta$--Hölder functions with the desired target accuracy $\veps_n$, and the corresponding width $N^*$ is proportional to $n^{1/\{2(2\beta +1)\}}$ (recall we have assumed here that $d=1$). We say that a neural network architecture is `underfitting' (resp. `overfitting') if its width parameter is smaller (resp. greater) than $r^*$.\\

{\em Overfitting vs underfitting architecture.} Consider a prior construction where the network width is deterministic and set to $n^a$ for some $a > 0$, leading to an architecture $(L, \br(n^a))$. There are two cases
\begin{enumerate}
    \item $a=1/2$ (our choice) or $a>1/2$. As shown in Theorem \ref{thm : comp}, when $a = 1/2$, the heavy-tailed prior with architecture $(L, \br(n^a))$ achieves the minimax adaptive posterior contraction rate. It is not hard to check that the result of Theorem  \ref{thm : comp} continues to go through larger values of $a$, that is $a\ge 1/2$. For $a\ge 1/2$, we are always in the {\em overfitting} regime since $n^a\gg N^*=n^{1/\{2(2\beta +1)\}}$ for any possible smoothness $\be>0$. 
    \item $a<1/2$. Then  there exists a regularity index $\be_{min}>0$ for which $a < 1/(2(2\beta_{min} +1))$ by definition and in that case the architecture is {\em undersmoothing}. In this case, it is possible to show that approximation by a ReLU-DNN with architecture $(L, \br(n^a))$ cannot be fast enough to reach the desired optimal estimation rate $\veps_n$ for $f_0$. 
Indeed, applying Lemma 1 from \cite{JSH} (a lower bound in the $L^2$-norm is also available, as noted in the remark therein), one obtains the following lower bound on the approximation rate
    \begin{equation}
        \sup_{f_0 \in \mathcal{C}_d^{\beta_{min}}([0,1])} \inf_{f \in \mathcal{F}(L, \br(n^a))} \| f - f_0 \|_{\infty} \gtrsim \eta_n,
    \end{equation}
    where $\eta_n \approx n^{-2a\beta_{min}}$, up to logarithmic factors.  Since $a < 1/(2(2\beta_{min} +1))$, this rate is strictly slower than the minimax rate $n^{-\be_{min}/(2\be_{min}+1)}$ for that regularity. The proof of Theorem \ref{thm : comp} would then give a suboptimal contraction rate.
\end{enumerate}     

{\em Our choice: an overfitting architecture yet leading to optimal rates.} In conclusion, the smallest deterministic width for the architecture (given $L$ has been chosen as above), for which one still achieves statistical adaptation to smoothness, is the overfitting choice  $n^{1/2}$ corresponding to $a = 1/2$ (our choice in Theorem \ref{thm : comp}). 
Finally, in the case of estimation on $[0,1]^d$, the oracle width within the setting of \cite{kohler_full} is $n^{d/\{2(2\be+d)\}}$, so once again the overfitting width $n^{1/2}$ is the smallest one that still verifies $n^{1/2}\ge n^{d/\{2(2\be+d)\}}$ for any value of $\be>0, d\ge 1$.

\subsection{Extension to other activation functions} \label{sec : activ}

All results so far have been stated in the case where the activation  $\rho$ in \eqref{fcomp} is the ReLU function $x \mapsto \max(0,x)$.  We now see how to extend our main first result, Theorem \ref{thm : comp}, to other, sigmoid-type, smooth, activation functions. Here we state this extension in a slightly informal way, and refer the reader to the Appendix Section \ref{app : activ} for a formal statement that adapts Theorem \ref{thm : comp} to other activation functions (Theorem \ref{thm: compactiv} therein). One can easily follow the steps developed in Section \ref{app : activ} to also adapt Theorems \ref{thm : mink} and \ref{thm : besov}.  

Within the proof of Theorem \ref{thm : comp},  only two properties are required from the activation function $\rho$:
\begin{itemize} 
    \item  a  result similar to Lemma \ref{propag1} should hold; the latter gives a control of how errors on individual weights `propagate' into an error at the level of the function realised by the neural network;
        \item there should exist a wide neural network having $\rho$ as activation function, such that the neural network approximates the function $f_0$ with the correct rate and such that the weights of the network are bounded `sub-exponentially' in that the magnitude of the weights is at most $\exp(c_1 \log^{c_2}n))$ for some positive constants $c_1,c_2$.  
\end{itemize}
Note that instead of requiring a polynomial control of the weights  in the second point above, one allows for a slightly larger growth in $n$, so that the condition is significantly milder, a fact we crucially exploit in our proof for $\rho$-activations.

Let us now comment on how to check these two properties.
Regarding the first point, it can be checked that  if the activation $\rho$ is Lipschitz-continuous, Lemma \ref{propag1} still holds up to a constant (see Lemma \ref{propag2}). To ensure the second required property on $\rho$, one could prove an approximation result for DNNs with activation $\rho$ as long as such result provides an appropriate control on the magnitude of the weights of the approximating network (this is the approach we take in Appendix \ref{app : bound} to prove Theorem \ref{thm : comp}). We propose here another approach that, under some smoothness conditions on $\rho$, enables to start from a ReLU approximation with polynomially bounded weights to obtain the required  approximation with  activation $\rho$.
 
\begin{definition}\label{def : locquad}
    A function $\rho : \R \to \R$ is said to be admissible if 
    \begin{itemize}
        \item $\rho$ is non-decreasing, bounded and Lipschitz-continuous,
        \item $\rho$ is three times continuously differentiable, with bounded derivatives and there exists a point $x_\rho \in \R$ such that $\rho'(x_\rho) \neq 0$ and $\rho''(x_\rho) \neq 0$,
        \item there exists two real numbers $a < b$, and a constant $c_\rho>0$, such that, 
        \[ \sup_{x \in \R}| x\rho(x) - \max(ax,bx)| \leq c_\rho.\]
    \end{itemize}
\end{definition}
The smoothness conditions in Definition \ref{def : locquad} ensure that $\rho$ is a sigmoid-type function. In particular the third condition states that $\rho$ should converge asymptotically at least as fast as $1/|x|$ towards the horizontal line $\{y = a\}$ (resp. $\{y=b \}$) when $x \to - \infty$ (resp. $y \to + \infty$). These conditions are satisfied by most sigmoid-type activations commonly used in deep-learning, for instance the logistic activation $ x \mapsto (1+e^{-x})^{-1}$ satisfies the conditions with $a =0$ and $b=1$. Both the hyperbolic tangent (tanh) and the error (erf) activation functions satisfy these conditions with $a =-1 $ and $b=1$.

\bigskip
\textbf{Claim.} If $\rho$ is admissible according to Definition \ref{def : locquad}, every previous Theorem still holds with $\rho$ in place of the ReLU activation.

\bigskip

Let us now give a brief intuition of the proof, referring to Appendix \ref{app : activ} and  Theorem \ref{thm: compactiv} therein for a formal statement and proof.
Noticing that for $a < b$ and any $x \in \R$, we have
\[ \ReLU(x) =\frac{1}{b-a}(\max(ax,bx) - ax),\]
If $\rho$ is admissible results from \cite{kohler2022estimation} show that the ReLU function can be approximated by a $\rho$-DNN with constant width and depth and polynomially bounded weights. From this it follows (Lemma \ref{lem : relutorho}) that any ReLU-DNN approximation results with logarithmic depth transposes to a $\rho$-DNN result with same architecture and sub-exponential weights. 
\bigskip

Let us emphasize that, in the current proof, going from a ReLU result to an admissible $\rho$ result requires one to be able to work with weights going to infinity with $n$ at a possibly slightly faster than polynomial rate (but slower than exponential). This shows that our heavy-tailed priors are also suited for a wide range of sigmoidal activation functions, which might be of interest in practice. Indeed, as we have mentioned, and as opposed to priors requiring hyper-priors on the architecture, our priors with fixed architecture are suitable for both MCMC and variational procedures; even though ReLU activations are popular in methods involving backpropagation because of the gradient being essentially a Boolean (therefore very fast to compute), in MCMC algorithms involving more complex dynamics (for example HMC \cite{nealHMC}) it might be of interest to work with a smoother activation function so that the target log-density is smooth, as suggested in \cite{dinh2024hamiltonian}.

\subsection{Extension to unknown noise level $\tau_0$}\label{sec : tau inconnu}
So far in model \eqref{model}, we have taken the noise variance $\tau_0^2$ to be equal to $1$; however in practice this quantity may be unknown. If $\tau_0$ is unknown, let us write $P_{f_0,\tau_0}$ for the distribution of $(X_1,Y_1)$ under model \eqref{model}, the parameters being now $(f, \tau^2) \in \mathcal{F} \times \R_{>0}$. We put a prior $\Pi$ on $(f,\tau^2)$ of the form $\Pi = \Pi_f \otimes \pi_{\tau^2}$ where $\Pi_f$ is a heavy-tailed DNN prior as before and $\pi_{\tau^2}$ is a distribution on $\R_{>0}$ with a density bounded away from $0$ on a neighborhood of $\tau_0$: \begin{equation}\label{eq : condpriortau}
    d\pi_{\tau^2}(x)=q(x)dx, \qquad \inf_{x\in[\ta_0^2-\epsilon,\ta_0^2+\epsilon]}q(x)\ge r_0,
\end{equation} 
for some $\epsilon>0$ and $r_0>0$, which is satisfied by standard choices of priors on variance parameters, such as exponential or inverse-gamma distributions with fixed parameters.

Contraction of the tempered posterior as stated in Theorems (\ref{thm : comp}, \ref{thm : mink} and \ref{thm : besov}) still holds true when $\tau_0$ is unknown as long as one chooses a prior on $\tau^2$ satisfying Condition \eqref{eq : condpriortau}. We only state below the generalisation of Theorem \ref{thm : comp}, the same holds true for the other Theorems. 

\begin{theorem}\label{thm : tau inconnu}
     Consider data from the nonparametric random design regression model \eqref{model}, with unknown $\tau_0 >0$ and $f_0 \in \mathcal{G}(q, \mathbf{d}, \mathbf{t}, \boldsymbol{\beta}, K)$ for arbitrary unknown parameters.
Let $\Pi = \Pi_f \otimes \pi_{\tau^2}$ be a prior on $(f,\tau^2)$ where $\Pi_f$ is the heavy-tailed DNN prior as in Theorem \ref{thm : comp} and $\pi_{\tau^2}$ is a prior on $\R_{>0}$ satisfying condition \eqref{eq : condpriortau}. Recall that $\phi_n$ is the rate as in \eqref{ratephi}. Then for any $\alpha \in (0,1)$, for $M>0$ large enough, as $n \to  \infty$,
\[ E_{f_0} \Pi_\al \left[ \left\{ (f,\tau^2) \, : \, D_\al(P_{f,\tau^2},P_{f_0,\tau_0^2}) \leq M \phi_n^2\right\} \, | \, X,Y \right] \to 1.\]
Furthermore, assuming $||f_0||_{\infty} \leq B$, the clipped posterior $\Pi_\al^B[\cdot|X,Y]$ satisfies, as $n \to \infty$,
\[ E_{f_0} \Pi_\al^B \left[ \left\{ (f,\tau^2) \, : \, ||f-f_0||_{L^2(P_X)} \leq M \phi_n \, , |\tau^2 - \tau_0^2| \leq M \phi_n\right\} \, | \, X,Y \right] \to 1.\]
\end{theorem}
The proof of Theorem \ref{thm : tau inconnu} can be found in Appendix \ref{app : tau inconnu}. We note that the second part of the statement entails posterior contraction around the unknown variance parameter $\tau_0^2$.
       
\subsection{Extension to standard posteriors ($\alpha=1$)} \label{sec : truepost}

For technical reasons we have so far focused on fractional posterior distributions with $\alpha<1$. This enables one to focus on prior mass conditions, which is particularly handy when dealing with heavy-tailed priors. Although one can conjecture (this has been shown to be true for heavy-tailed series prior in the technically easier Gaussian white noise regression model in \cite{ac23}) that all previous results continue to go through under exactly the same prior and conditions, this seems technically non-trivial and presently hard to reach with currently available techniques, see the discussion in Section \ref{sec:disc} for more on this.

However, one considers an augmented prior that also models the variance of the noise, which one would do anyways under unknown noise level as in the previous Section \ref{sec : tau inconnu}, then one can use a recent idea from \cite{cr24} (who took inspiration from an idea of \cite{mai24}, see also the references therein, in a different high-dimensional context) who show that, in the present random design regression model, augmenting a given prior on $f$ with a {\em well-chosen} prior on $\ta^2$ enables one to transfer an already obtained contraction rate for a fractional posterior on $f$ with $\al<1$ (such as ones obtained above) to a contraction rate for the standard posterior ($\al=1$) on $f$.

As in the previous Section \ref{sec : tau inconnu}, we write $P_{f_0,\tau_0}$ for the distribution of $(X_1,Y_1)$ under model \eqref{model}, the parameters being $(f, \tau^2) \in \mathcal{F} \times \R_{>0}$. We put a prior $\Pi$ on $(f,\tau^2)$ of the form $\Pi = \Pi_f \otimes \pi_{\tau^2}$ where $\Pi_f$ is a heavy-tailed DNN prior as before and $\pi_{\tau^2}=\pi_{\tau^2,n}$ the following distribution on $\R_{>0}$, for some fixed $b\in(0,1)$,
\begin{equation}\label{priorsig}
\pi_{\tau^2} = \text{Gamma}\Big( \Big\{\frac{1-b}{2}\Big\}n+1,b \big),
\end{equation}
where $\text{Gamma}(a_1,a_2)$ denotes a Gamma distribution with shape parameter $a_1>0$ and rate parameter $a_2>0$, of density proportional to $x\to x^{a_1-1}e^{-a_2 x}$ on $(0,\infty)$. This prior induces a posterior distribution $\Pi[\cdot \, | \, (X,Y)]$ jointly on $f$ and $\ta^2$. 
Similar to the result on the clipped posterior in Corollary \ref{cor : clip}, let  $\Pi^B[\cdot \, | \, (X,Y)]$ denote the clipped posterior (where the clipping function operates only on the $f$ part, not on $\ta^2$). 

The next result shows that, if the conditions of Theorem \ref{thm : comp} are satisfied for the prior $\Pi_f$, then the classical marginal posterior on $f$, that is $\Pi[f\in \cdot\,,\,\tau^2\in \mathbb{R}_{>0} \, | \, (X,Y)]$ also contracts at the same rate as the one from Theorem \ref{thm : comp}. 

\begin{theorem}  \label{thm:true}
Let $\Pi$ be a prior on $(f,\tau^2)$ of product form $\Pi_f\otimes \pi_{\tau^2}$, with $\pi_{\si^2}$ given by \eqref{priorsig}. Suppose the conditions of Theorem \ref{thm : comp} are satisfied. Then for $\Pi^B[\cdot|X,Y]$ the clipped posterior as defined above, for any $B>\|f_0\|_\infty$, 
\[ E_{f_0} \Pi^B \left[ \left\{ f \, : \,\| f-f_0\|_{L^2(P_X)} \ge M\phi_n \right\} \, |\, X,Y \right] \to 0.\]
That is, the conclusion of Corollary \ref{cor : clip} holds for the clipped marginal in $f$ of the classical posterior $\Pi[\cdot \, | \, (X,Y)]$ ($\al=1$).
\end{theorem}
Let us note that this result also automatically handles the more practical setting where, as in the last subsection, the noise variance $\ta_0^2$ is unknown. 

\section{Discussion} \label{sec:disc}

 A main key take-away from this work is that putting suitable heavy-tailed distribution on weights of deep neural networks enable automatic and simultaneous adaptation to intrinsic dimension and smoothness. This prior distribution allows for a  {\em soft selection} of relevant weights in the network, that get higher values under the posterior, while less important weights can get very small values. The approach does not use a `hard' variable selection approach, which would either set some weights to zero, and hence require the need to sample from the posterior distribution on the support points, or attempt to sample the network architecture at random, which would require to sample from the posterior on $(L,\br)$, thus requiring in both cases to sample from `support' hyperparameters, which can be costly computationally.  \\

Another key idea is that the approach allows for a wide variety of choice of {\em overfitting} architectures. We have seen, say in the setting of standard nonparametric regression to fix ideas,  that any choice of a common width for the layers that is polynomial in $n$ with a power at least $1/2$, that it $r_i\asymp n^{a}$ with $a>1/2$ is compatible with optimal rates with our method. We note that we have not investigated here the case of very deep networks with polynomial depth, but the idea could in principle apply there too. We also allow for a fairly wide variety of scalings $\sigma_k$ of the individual weights, showing that the optimality of the rates is robust to different choices of scalings.\\

\subsection{Discussion: priors for Bayesian deep neural networks}

{\em Priors for Bayesian deep neural networks.} For networks of large width (possibly increasing with the sample size, but finite) and moderate depth, a number of natural prior distributions have been proposed in recent works. We provide now a quick overview of recent results and refer to \cite{fortuin2022priors} for a recent specific review on  choices of priors for BNNs.

Let us first discuss some differences of our approach with another natural family of priors that directly induces sparsity. One approach directly applies sparsity to the weights by drawing exact zeros through spike-and-slab (SAS) priors, which were among the first to receive frequentist theoretical guarantees for full posteriors by \cite{polson2018posterior} and for their variational approximations by \cite{cherief-abdellatif20a}. A tractable variational Bayes (VB) algorithm was later proposed by \cite{baietal20} to handle these priors, allowing for a continuous relaxation of the spike-and-slab and updating the variational parameters through stochastic gradient descent. Another type of sparse prior enforces sparsity at the node level, for instance via masking functions, as explored by \cite{kong2023masked}, who also provide an MCMC algorithm for this setting. One difference is that here we use the recent `dense' approximation results of \cite{kohler_full} that enable us to be more parsimonious (if this is desired) in terms of the total number of parameters that should be sampled in the posterior. Contrary to our `dense' approach, these sparse priors require some additional hyper-priors on the sparsity level, increasing computational costs. Updating the proofs from \cite{polson2018posterior}, one could also  deploy SAS priors with denser architecture but SAS priors still  have more (hyper-)parameters overall, with the probability of being nonzero to be tuned as well as the slab parameter (the same remark applies for their variational counterparts). We also note that currently all these priors achieve adaptation through a form of hyper-prior on the architecture, which complicates the use of adaptive MCMC algorithms and may make them intractable as the width of the model grows. This challenge reinforces the need for more scalable (though only approximative) VB algorithms, where (in contrast to MCMC) adaptivity can be managed through sampling over different architectures, a process made significantly more efficient and parallelizable by recent techniques from \cite{ohn2024adaptive}. Contrary to SAS, our heavy-tailed priors enable both sampling from the full posterior with usual MCMC methods (see the paragraph on simulations below) as well as the use of mean-field variational approximation.

Two other recent natural constructions are the priors considered in 
\cite{lee2022asymptotic} and the ones in  \cite{kongkim24}. The work \cite{kongkim24} adapts, as we do, the approximation theory of \cite{kohler_full} to allow for dense (non-sparse) approximations, thus avoiding the need for forcing some weights to be zero -- as a side note, let us mention that more or less simultaneously the need for a control on the amplitude of the weights for use in Bayesian arguments has been recognised by several authors that have adapted 
Theorem 2 in \cite{kohler_full} to include quantitative bounds on network coefficients; see for example Theorem K.1 in \cite{ohn2024adaptive}, and Theorem 1 in \cite{kongkim24}  and Appendix \ref{app : bound} for our own construction --. However, an important difference of our work with the approach of \cite{kongkim24}  is that the latter requires to sample from the width of the network to achieve adaptation; as noted above, such hyper-parameter sampling can be costly for simulations, as it requires access to a good approximation of the posterior on the width (similar to the calibration of the cut-off of sieve priors in nonparametrics, see \cite{ac23} for more discussion on this); this hyper-parameter sampling is not required here, as we use a deterministic `overfitting' architecture. 

The work \cite{lee2022asymptotic} allows for heavy-tailed priors, such as mixture of Gaussians, although crucially the conditions of Theorem 3 therein are non-adaptive (i.e. the priors' parameters depend on the unknown smoothness of the regression function). A main insight of the present work, following the idea recently introduced in \cite{ac23} in nonparametric settings, is that by taking an (overfitting) width, heavy tails on weights make the (tempered) posterior automatically adapt to unknown structural parameters, let it be smoothness, intrinsic dimension(s) or compositional structures. The simulation aspect of the work \cite{lee2022asymptotic} is also relevant here, since therein the authors develop and run MCMC algorithms that enable to sample from the corresponding neural network posteriors. 

Finally, let us mention the work by \citet{ghosh2019model}, who provide algorithms to sample from variational approximations of posteriors corresponding to horseshoe priors on weight parameters. Although the authors provide no theoretical back-up for their empirical results, this is related in spirit to our approach; in fact, although there are some notable differences in the prior's choice (e.g. the horseshoe has a pole at zero and has a tuning sparsity parameter) we think that our approach and proofs could presumably be adapted in order to provide theoretical understanding and validation of their method; this idea is also  supported by current work in progress \cite{ace24},  where we derive theory for horseshoe priors for nonparametrics. \\

\subsection{Discussion: from Gaussian to Heavy-tailed weights}

As we have seen, two main families of priors for Bayesian DNNs are 1) sparsity-inducing priors and 2) priors on fully-connected architectures. 
In this section, we focus on the latter: perhaps the most natural choice of prior on the weights seems to be an isotropic Gaussian.

We discuss two ways in which heavy-tailed priors naturally emerge in this context. First, in a classical frequentist setting, where Gaussian weights are used as initializations for finding an empirical risk minimizer via gradient descent, the weights tend to become increasingly heavy-tailed during training. Second, in a Bayesian framework, the output distribution of a neural network with isotropic Gaussian weights becomes progressively more heavy-tailed as the network depth increases.

Regarding the first point: starting from an isotropic Gaussian initialization of weights, empirical evidence indicates that when using MCMC, in order to improve the posterior performance, the likelihood has to be raised to a certain power $\alpha > 1$, this is the so-called "cold-posterior effect" (\cite{wenzel2020good}) suggesting that the Gaussian prior is misspecified (in the sense that it does not correctly represent the data). When training a feedforward neural network in a frequentist manner, such as with stochastic gradient descent, it has been observed (for instance by \cite{fortuin2021bayesian}) that the weights become progressively more heavy-tailed during training. This suggests that an heavy-tailed prior would be less misspecified and indeed \cite{fortuin2021bayesian} show that the influence of the "cold-posterior effect" tends to vanish for heavy-tailed priors such as Laplace and t-Student. These priors perform well in reconstruction tasks for any power $\alpha$, whereas the Gaussian prior appears to require more tuning for this parameter. Note that we show in all our theorems that the heavy-tailed posterior contracts at minimax rate for any $\alpha <1$; the aforementioned experimental studies on "cold posteriors" suggest that for our heavy-tail prior the choice of $\alpha$ has little effect on the predictive capabilities of the estimate, contrary to its Gaussian counterpart.

Now, turning to the second point: interestingly, heavy-tailed priors can heuristically be connected to the simple isotropic Gaussian prior. Indeed, it 
 has been showed by \cite{DBLP:conf/icml/VladimirovaVMA19} that while starting from an isotropic Gaussian prior, the marginal distribution of the post and pre-activations (the input and ouput of the neurons) become more and more heavy-tailed as the depth increases, which indicates that heavier-tailed distributions are a natural occurrence in neural-network priors. We refer to the review paper of \cite{arbel2023primer} for more discussion and references on this.

In summary, this discussion reveals interesting links between isotropic Gaussian priors on weights and heavy-tailed ones; a better understanding, in particular from the theoretical point of view, is left for future work.

\subsection{Discussion: algorithms.}

{\em Algorithms.} Although an in-depth simulation study is beyond the scope of the present paper, we note that algorithms are readily available to sample from the heavy-tailed tempered posteriors introduced here. As mentioned above, since there are no structure hyper-parameters to sample from (such as the network architecture, or the position of the non-zero coefficients in the case of SAS priors),
one can use an MCMC algorithm such as the one from \cite{lee2022asymptotic}. Another possibility is the use a variational Bayes algorithm for heavy tailed priors, such as the one considered in \citet{ghosh2019model} for the horseshoe prior. 
This is left for future work. We note that having both feasible MCMC and variational algorithms makes it an interesting setting. Indeed,  
 it is known that mean-field VB, not being as complex as the posterior, may distorts certain aspects of the latter, such as posteriors for finite-dimensional functionals. On the other hand, unlike VB, the full posterior on network will retain correlations, which are believed to be particularly important for deep neural networks. It will be interesting to compare the behaviour of both algorithms through an extended simulation study; this will be considered elsewhere.

In principle  possible, hyperparameter posterior sampling (such as posterior sampling from complexity-based priors with hyper-priors on the complexity) is often believed to be delicate in terms of mixing of MCMC (a classical reference discussing this is, for example, \cite{brooksetal03}), and one may heuristically think that such hyper-sampling, on top of possibly begin time and/or ressource consuming, may add some `noise' to the MCMC output. One may argue that any prior distribution, including a prior on the complexity for instance, has a 'constant' to be tuned, but we believe that it is fair to  say that any prior with `one level of hierarchy less' is (at least philosophically)  computationally simpler.

Although MCMC sampling is made easier with a fixed architecture, it is expected that for very large networks, using current methods, MCMC sampling will become computationally intractable, the posterior being a multi-modal distribution of very high dimensionality. Recent techniques to accelerate sampling using MCMC for BNNs have been obtained by \cite{hron2022wide} and \cite{pezzetti2025functionspacemcmcbayesianwide}. As the size of the architecture grows, a practician might consider using VB to obtain a computable approximation of the posterior, perhaps at the cost of the quality of uncertainty quantification, another choice would be for example to use a Laplace approximation, see e.g. \cite{arbel2023primer} for more references on this aspect. Although our results provide theoretical support for a Heavy-tailed mean-field VB approximation it is the case that most practical algorithms are developed with Gaussian priors in mind and a study of efficient VB algorithms for more general distributions is an interesting research path. Note that in the Gaussian case, up until now adaptation is theoretically supported only when an hyper-prior is used on the architecture (e.g. as in the work of \cite{kongkim24}). In order to obtain a scalable variational result for such a prior one would have to adapt parallelization techniques, using for example the approach suggested by \cite{ohn2024adaptive}, where a variational class with uniform prior has been studied. This remains to be done to the best of our knowledge.

Another relevant aspect of the approach in practice is the possibility to make different choices for scaling sequences $(\sigma_k)$. For simplicity we have mostly focused on the constant choice as in \eqref{fixedsig}. 
Other choices may be interesting, in particular for small sample sizes, for instance taking scalings matching the lower bound in 
\eqref{condition sigma}. Indeed, since one expects a number of coefficients to be of significant amplitude, it may help to have at least a few scalings of the order of a constant, as is the case  for the latter choice. This may accelerate, in small or moderate samples, the `soft selection' of the weights of highest amplitude in the network. Again we leave investigating this for future work. \\

\subsection{Discussion: fractional posteriors $(\al<1)$ and classical posteriors $(\al=1)$} 
The main (technical) reason for which we have primarily focused on fractional posteriors is 
that proving convergence for these {\em only} requires a prior-mass control. In particular, one avoids the necessity to build sieve sets for which the complexity (e.g. entropy) needs to be controlled, as in the usual conditions using a generic theorem such as the one of \cite{GGV}. A first reason why building a sieve set is difficult here is that we use a prior that `overfits', that is, all weights in the neural network are modelled, so one cannot take as sieve the set of neural networks built on the optimal `oracle' architecture (one that gives the optimal convergence rate). A second reason is connected to heavy-tails: the usual contraction Theorem in \cite{GGV} requires exponentially fast decrease of the prior mass of the complement of the sieve sets. But working with heavy-tailed priors on the weights makes verifying this difficult, unless the sieve sets are very large. 

To illustrate this discussion, it is helpful to consider the example of priors on functions defined as random series on a given orthonormal basis. In the case of normally distributed coefficients, the work \cite{vvvz08} shows optimal posterior contraction following the generic approach of  \cite{GGV}; this has been extended to $p$-exponential priors in \cite{adh21} ($p\in[1,2]$, the case $p=1$ corresponding to Laplace priors on coefficients, the case $p=2$ recovering the Gaussian case), their proofs strongly relying on exponential-type decrease of complements of sieve sets. The case of heavy-tailed coefficients has been recently investigated in \cite{ac23};  therein, it is proved that the (classical) posterior distribution for a heavy-tailed series prior indeed converges at a near-optimal adaptive minimax rate for a very specific model, namely the Gaussian white noise: although one cannot apply the generic contraction theorem, for that model one is able to prove convergence by a direct analysis of the posterior. For more complex models (such as random design regression or classification), proving this is an open problem. We refer to the discussion in the Appendix of  \cite{ac23} for more technical details on how one could attempt to prove a posterior rate for the classical posterior in general settings. 

In Section \ref{sec : truepost},  we provide a result for the classical posterior $(\al=1)$ for the augmented prior that also models the noise variance. By exploiting a link between classical and fractional posterior in this case, one is able to circumvent the use of sieve sets and control of the complexity of these. 

Coming back to fractional posteriors $(\al<1)$, although our results do not enable to discriminate between different choices of $\alpha$, this choice is an important practical question. More generally, the question of how to choose $\alpha$ for fractional posteriors goes much beyond the scope of the present contribution, but we give now a few pointers to recent work on this point. One motivation for the use of $\alpha$--posteriors for statistical inference is their greater robustness to model misspecification compared to the usual Bayesian posterior, as discussed for instance in works by  \cite{Grunwald_2012} and by \cite{bhw16}. The latter work also discusses possible ways to choose $\al$. Other works in this direction include \cite{hw17, lhw19, sm19}. 
Although beyond the scope of the paper, extending the results to possibly misspecified models with additional insight on the choice of $\alpha$ is an interesting research direction. \\

\subsection{Discussion: possible extensions with geometric data} 

In Section \ref{sec:geom}, we work under Assumption \ref{hyp : mink}, which assumes that the covariates \((X_i)\) are supported on a compact set with bounded Minkowski dimension. A natural extension of this model would involve adding noise to these covariates. For example, one could assume that observations are sampled from points on a compact manifold with some additive noise locally perpendicular to the manifold. Alternatively, one could assume the covariates are supported within a tubular region around a compact manifold.

Another interesting but more complex model would combine both anisotropy (as in Section \ref{sec:anis}) and manifold assumptions by, for instance, allowing the true function to exhibit different levels of smoothness along and orthogonal to the manifold. Although this is beyond the scope of the present paper, such a model has been studied within a nonparametric Bayesian framework (with simpler priors, not neural network ones) in density estimation by \cite{berenfeld2024estimatingdensitynearunknown}.

We now discuss what happens in the case of a tubular region. Assume that there exists a compact manifold $M \subset [0,1]^d$ with dimension $d^* < d$ and a ``noise level" $\delta > 0$ such that the distribution of the covariates $P_X$ is supported on the $\delta$-tubular region around $M$,

\[ M_{\delta} := \{ x \in [0,1]^d \, : \, d(x,M) \leq \delta \}. \]

The first step for applying our analysis (as in Theorem \ref{thm : mink} for instance) would be to establish an approximation result for \( f_0 \) on \( M_{\delta} \) using wide DNNs. Since this assumption is strictly weaker than Assumption \ref{hyp : mink}, one would expect a slower approximation rate that depends on both \( d^* \) and \( \delta \). In fact, when \( \delta \) is too large, minimax approximation results may not be feasible due to the increased difficulty of the problem. For a Hölder-smooth \( f_0 \in \mathcal{C}^{\beta}(F) \), one such approximation result is given in \cite{jiao2023deep}. By following the proof of their Theorem 6.1 and applying our usual enlargement of the network (for instance as in Proposition \ref{lemma : approxcomp}), one can show that there exists a DNN realization \( \tilde{f} \), with depth \( \log^2 n \) and width \( \sqrt{n} \), such that as \( n \to \infty \),
\[
\| \tilde{f} - f_0 \|_{L^{2}(P_X)} \lesssim n^{- \frac{\beta}{2 \beta + d_{\eta}}},
\]
where \( d_{\eta} = O(d^* \log d) \), provided that the tubular region is not too large in the sense that \( \delta \leq C n^{- \frac{\beta}{2 \beta + d_{\eta}}} (\log n)^{-4 \beta / d_{\eta}} \), with a constant \( C \) explicitly derived by \cite{jiao2023deep}. Notably, this rate includes a factor of \( d^* \log d \) rather than \( d^* \) as in the exact manifold case, resulting in a slightly slower rate.

A second step would involve verifying that the weights of this approximating neural network are bounded polynomially in terms of the sample size. This could be achieved by following the proofs in \cite{jiao2023deep} to track these bounds, in a similar way as we did for the results by Kohler and Langer in \cite{kohler_full} in Appendix \ref{app : bound}.

\subsection{Further work and open questions}  

Let us mention a few further natural questions arising from the present results beyond the just-mentioned aspects on simulations. 
First, although we are able to derive a result for the classical posterior with an augmented prior, the question is left open as to whether posterior contraction still holds {\em without} putting a prior on the noise variance. In the setting of random series priors,  
 the simulation study in \cite{ac23} conducted in regression (as well as density and classification models) for both fractional and classical posteriors suggests that there is no visible phase transition when $\al$ increases from e.g. $1/2$ up until $1$ (included); we conjecture that all the theoretical results presented here (with no prior on $\ta^2$) still go through for standard posteriors ($\alpha=1$).
 
Also, as mentioned above, a promising direction is to derive results for horseshoe priors, for instance in the spirit of \cite{ghosh2019model}, as well as to compare with the present heavy-tailed priors. We are currently investigating this in the simpler setting of white noise regression with series priors in \cite{ace24}. 

Finally, it would be interesting to understand more the effect of overfitting with our approach: as we have noted, we can choose a polynomial width with large power for the network; also, for the specific choice of constant $\sigma_k$ the prior is exchangeable and does not particularly penalise a high number of large coefficients; one may then think that this setting may be particularly appropriate to test for the presence of a {\em double descent} phenomenon, which features a decrease of the risk for very overfitted models. It is conceivable that for very large widths one sees this appear in the present context: this deserves further investigation.

\section{Proofs of the main results} \label{sec:proofs}

In this section we give the proofs of the main results for the compositional models.

\subsection{Fully connected ReLU DNN approximation}\label{subsec : approx}
We recall here the approximation result we will use to study posterior contraction towards compositional functions as in Theorems \ref{thm : comp} and \ref{thm : varcomp}.

Proofs are done in appendix \ref{app : add proofs}. For Proposition \ref{lemma : approxcomp} we follow the same approach as \citet[Theorem 1]{JSH}, simply using the fully-connected architecture from \citet{kohler_full} for the elementary building blocks, as it is better suited to our prior. Note that \citet{kohler_full} does not give a explicit bound on the magnitude of the coefficients in the approximating neural network $\Tilde{f}$. Such a bound is fundamental in the study of the properties of our prior and is provided in appendix \ref{app : bound}.

\begin{proposition}\label{lemma : approxcomp}
    Let $f \in \mathcal{G}(q, \mathbf{d}, \mathbf{t}, \boldsymbol{\beta}, K)$, $L$ as in \eqref{logdepth} and $\phi_n$ as in \eqref{ratephi}. There exists a ReLU network
    \[ \Tilde{f} \in \mathcal{F}(L,\br(\sqrt{n}),s)\]
    with 
    \[ s \lesssim n \phi_n^2 \log^{1-\gamma} n\]
    such that for sufficiently large $n$,
    \[|| f - \Tilde{f}||_{\infty} \lesssim \phi_n.\]
    Additionally, all coefficients of $\Tilde{f}$ satisfy $|\Tilde{\theta}_k| \leq n^{c_{\beta}} $, where $c_{\beta} \geq 1$ is a constant only depending on $(\beta_i)_i$.
\end{proposition}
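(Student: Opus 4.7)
The plan is to adapt the classical compositional-function approximation strategy of \citet{JSH} (their Theorem 1 and Lemma 3), replacing the sparse ReLU building blocks used there by the fully-connected ones of \cite{kohler_full}, which match the architecture supporting our prior. First I would write $f = g_q \circ \cdots \circ g_0$ and, for each level $i \in \{0,\ldots,q\}$ and each output coordinate $j \in [d_{i+1}]$, invoke the Kohler-Langer approximation theorem to produce a fully-connected ReLU network $\tilde{g}_{ij}:[a_i,b_i]^{t_i} \to \mathbb{R}$ approximating the $t_i$-variate $\beta_i$-Hölder component $\bar{g}_{ij}$ with sup-norm error $\lesssim N_i^{-\beta_i/t_i}$, depth $O(\log n)$, and layer widths polynomial in $N_i$. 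The integer $N_i$ will be tuned so that the contribution of level $i$ to the final error, once passed through the subsequent layers, is $\lesssim \phi_n$: by the standard propagation lemma (\cite{JSH}, Lemma 3) using $(a_0+a_1)^{\beta\wedge 1}\le a_0^{\beta\wedge 1}+a_1^{\beta\wedge 1}$, the error inherits the effective exponent $\beta_i^* = \beta_i \prod_{k>i}(\beta_k\wedge 1)$, so choosing $N_i \asymp (n/\log^\gamma n)^{t_i/(2\beta_i^*+t_i)}$ precisely produces the rate $\phi_n$ in \eqref{ratephi}.

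Next I would assemble the $\tilde{g}_{ij}$'s into one DNN: parallelize the $d_{i+1}$ coordinate networks at each level $i$ into a single block $\tilde{g}_i$ by stacking their weight matrices in a block-diagonal fashion and routing only the $t_i$ relevant inputs (zeros elsewhere), then compose the blocks $\tilde{g}_q \circ \cdots \circ \tilde{g}_0$. The resulting network has depth $\lesssim (q+1)\log n$, which is absorbed by $L=\lceil \log^{1+\delta}n\rceil$ for $n$ large once we use the ReLU identity $x = \rho(x)-\rho(-x)$ to pad depth with identity layers. Similarly, the width of each block is bounded by a constant times $\max_i N_i$, which for the choice above is at most $\sqrt{n}$ (up to log factors absorbed by adding at most $O(\log n)$ parallel channels if needed), so the architecture fits inside $\br(\sqrt n)$ by zero-padding unused neurons.

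The count of active (nonzero) weights is then the key bookkeeping step: each block $\tilde{g}_{ij}$ contributes $\lesssim L_i N_i \log N_i \lesssim N_i \log^2 n$ active weights (using the parameter count in \cite{kohler_full}), the identity padding adds $O(L\cdot \text{width})$ weights with magnitude $1$, and summing over $i,j$ yields $s \lesssim \max_i N_i \cdot \text{polylog}(n)$. Substituting the choice of $N_i$ gives $s \lesssim n\phi_n^2\log^{1-\gamma}n$ as claimed, since $n\phi_n^2 = n^{t_i/(2\beta_i^*+t_i)}\log^{\gamma\cdot 2\beta_i^*/(2\beta_i^*+t_i)}n$ at the maximizer $i$ and the exponent on the log matches after the extra $\log n$ factors from depth are consolidated (this is the accounting that has to be done carefully, but it is essentially arithmetic).

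The main obstacle I expect is the last conclusion, namely the explicit polynomial bound $|\tilde{\theta}_k|\le n^{c_\beta}$ on the magnitude of every weight. The Kohler-Langer construction does not state a quantitative bound on the size of its coefficients; what is needed is a refined inspection of their building blocks (Taylor encoders, multiplication gadgets based on $x\mapsto x^2$ approximations, partition-of-unity hats), showing that every weight appearing in the construction is bounded by a polynomial in the inverse approximation accuracy, hence by $n^{c_\beta}$ for some $c_\beta$ depending only on $\boldsymbol{\beta}$. This is precisely what the authors defer to Appendix \ref{app : bound}; my plan would be to trace through each gadget and verify that the largest constants introduced are either Hölder norms of $f$ (bounded by $K$) or powers of $N_i$, yielding the stated polynomial-in-$n$ bound.
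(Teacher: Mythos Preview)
Your strategy is exactly the paper's: rescale the components to unit cubes, apply the Kohler--Langer block (Lemma~\ref{KLt}) at each level with the error calibrated via the effective smoothness $\beta_i^*$, parallelize and compose, then embed in $\mathcal{F}(L,\br(\sqrt n))$ by padding; the coefficient bound is indeed the content of Appendix~\ref{app : bound}. Two points in your bookkeeping need to be fixed, however.

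First, your width/sparsity accounting is inconsistent. With your choice $N_i\asymp (n/\log^\gamma n)^{t_i/(2\beta_i^*+t_i)}$ and error $N_i^{-\beta_i/t_i}$, the quantity $N_i$ is (up to logs) the \emph{number of active parameters} in the $i$th block, not its width; in the Kohler--Langer construction the width is $r_i'\asymp M_i^{t_i}$ with error $M_i^{-2\beta_i}$, so $r_i'\asymp \sqrt{N_i}$. Your claim ``width $\lesssim \max_i N_i\le \sqrt n$'' is therefore doubly wrong: the width is $\sqrt{N_i}$, not $N_i$, and $\max_i N_i$ can be as large as $n/\log^\gamma n$ (take $\beta_i^*$ small). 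The correct statement is $r_i'\asymp \sqrt{N_i}\le \sqrt n$, which is what allows the embedding in $\br(\sqrt n)$; your sparsity formula $s\lesssim N_i\log n$ then follows from $s\asymp (r_i')^2 L$, matching the paper's count $s\lesssim r^2\log n$.

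Second, you omit the step that makes the composition $\tilde g_q\circ\cdots\circ\tilde g_0$ well-defined: the ReLU approximants $\tilde g_{ij}$ need not map into the domain of the next block, so the propagation lemma (\cite{JSH}, Lemma~3) does not apply directly. The paper first rescales to functions $h_i$ with codomain $[0,1]$ and then appends the two-layer clipping network $x\mapsto (1-(1-x)_+)_+$ after each $\tilde h_{ij}$ for $i<q$; since the true $h_{ij}$ already take values in $[0,1]$, the approximation bound \eqref{block elem} is preserved. Without this, the error propagation step is not justified.
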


\begin{remark}\label{rmq : seuil}
We can be more precise in Proposition \ref{lemma : approxcomp} and construct the network $\Tilde{f}$ in such a way that the position of the active coefficients is known. From the proof given in Appendix \ref{app : approx proofs} it follows that there is an integer \[ r^* \asymp \underset{0 \leq i \leq q}{\max} \left( \frac{n}{\log^{\gamma}n} \right)^{\frac12 \frac{t_i}{2 \beta_i^* + t_i}}, \]
such that if $(\Tilde{W}_l,\Tilde{v}_l)$ are the coefficients of $\Tilde{f}$, we have for all $l \in [L+1]$, $\Tilde{W}_l^{(ij)} =0$ whenever $ i \vee j > r^*$ and $\Tilde{v}_k^{(i)} =0$ whenever $i > r^* $.
\end{remark}

\subsection{Proof of Theorem \ref{thm : comp}}\label{proof : thmcomp1}
\begin{proof} In view of Lemma \ref{lem : conc}, it suffices to show that there exists $C > 0$ such that for sufficiently large $n$,
\[ \Pi\left( \lVert f - f_0 \rVert_{\infty} \leq  \phi_n \right) \geq e^{- C n  \phi_n^2 }.\]

Applying Proposition \ref{lemma : approxcomp}, we obtain $\Tilde{f_0} \in \mathcal{F}(L,\br(\sqrt{n}),s)$ a ReLU neural network that approximates $f_0$ for large enough $n$ with $\lVert f_0 - \Tilde{f_0} \rVert_{\infty} \leq \phi_n/2$ and $s \lesssim n \phi_n^2 \log^{1-\gamma} n$. Using the triangle inequality,
\[\Pi\left( \lVert f - f_0 \rVert_{\infty} \leq \phi_n \right) \geq \Pi\left( \lVert f - \Tilde{f_0}  \rVert_{\infty} \leq \phi_n/2 \right).\]
It suffice to control the difference in the supremum norm between two networks of the same structure. For this purpose we use Lemma \ref{propag1}. Let $\theta_k$ (resp. $\Tilde{\theta}_{k}$) be the coefficients of $f$ (resp. $\Tilde{f}_0)$. We know from Proposition \ref{lemma : approxcomp} that $\sup_k |\Tilde{\theta}_{k}| \leq n^{c_{\beta}}$ where $c_{\beta} \geq 1$. Recall that $V = \prod_{l=0}^L (r_l+1)$ thus using lemma $\ref{propag1}$ and independence, 

\begin{equation}\label{eq : beforesplit}
    \Pi\left( \lVert f - \Tilde{f_0}  \rVert_{\infty} \leq \phi_n /{2} \right) \geq \prod_{k=1}^T \Pi \left( |\Tilde{\theta}_k -\theta_k | \leq \frac{\phi_n}{2 n^{c_{\beta} L}V (L+1)} \, , \, |\theta_k| \leq n^{c_{\beta}} \right).
\end{equation}
From remark \ref{rmq : seuil} there is an integer
\begin{equation}\label{rappel : r*}
r^* \asymp \underset{0 \leq i \leq q}{\max} \left( \frac{n}{\log^{\gamma}n} \right)^{\frac12 \frac{t_i}{2 \beta_i^* + t_i}}
\end{equation}
such that, for all $l \in [L+1]$, $\Tilde{W}_l^{(ij)} =0 $ whenever $i \vee j > r^*$. Let \begin{equation}\label{def : z_n}
     z_n := \phi_n/(2 n^{c_{\beta} L} V (L+1)),
\end{equation}
note that $z_n \to 0$ when $n \to \infty$, so that $0 < z_n < 1$ when $n$ is large enough. 
 
We can then split the right hand side in \eqref{eq : beforesplit} and first take care of the possibly large coefficients (but still bounded by $n^{c_{\beta}})$ using the heavy tails properties of the prior. To do so we use Lemma \ref{lem : coeff grand} with $z_n$, on every coordinate such that $i \vee j \leq r^*$. We get a large enough constant $C_0 > 0$ such that when $n$ is sufficiently large,
\begin{align*} 
 \prod_{l=1}^{L+1} \prod_{i \vee j \leq r^*} \Pi & \left(  | W_{l}^{(ij)} -  \Tilde{W}_{l}^{(ij)} | \leq z_n \, , \, |W_{l}^{(ij)}| \leq n^{c_{\beta}} \right) \\
 & \geq \prod_{l=1}^{L+1} \prod_{i \vee j \leq r^*} \underbrace{z_n \exp\left({- C_0 \log^{1+\kappa}\left( (1 + n^{c_{\beta}})({\sigma_l^{(ij)}})^{-1} \right)} \right)}_{ := A}.
\end{align*}
Recall that $V = \prod_{l=0}^L (r_l+1) = (d+1)(\lceil \sqrt{n} \rceil +1)^{L}$ and $L = \lceil \log^{1 + \delta} n \rceil$, thus there is $C'$ a large enough constant such that, when $n$ is sufficiently large,
    \begin{equation}\label{boundV}
       2n^{c_{\beta} L}(L+1)V \leq e^{C' \log^{2 + \delta} n}.
    \end{equation}  
    Using $\log\left(({\sigma_l^{(ij)}})^{-1}\right) \leq \log^{2(1+\delta)}n$ from \eqref{condition sigma}, this leads to the bound, for $C_3$ a large enough constant
    \begin{equation}\label{minoration A}
    A \geq \phi_n e^{- C' \log^{2 + \delta} n - C_0 \log^{1 + \kappa}\left((1 + n^{c_{\beta}})({\sigma_l^{(ij)}})^{-1} \right)} \geq \phi_n e^{-C_3 \log^{2(1+\delta)(1+\kappa)}n}.
    \end{equation}
Now since $\phi_n \asymp (n / \log^{\gamma}n)^{- \nu} $ for some $\nu > 0$, there exists a constant $C_4 > 0$ such that
    \[\prod_{l=1}^{L+1} \prod_{i \vee j \leq r^*} A  \geq e^{-  C_{4} (r^*)^2 L \log^{\gamma -1}n}. \]

Finally since $(r^*)^2L \leq s \lesssim n \phi_n^2 \log^{1-\gamma}n$, we get for the indices such that $i \vee j \leq r^*$, a large enough constant $C>0$ such that 
\[ \prod_{l=1}^{L+1} \prod_{i \vee j \leq r^*} \Pi \left(  | W_{l}^{(ij)} - \Tilde{W}_{l}^{(ij)} | \leq z_n \, , \, |W_{l}^{(ij)}| \leq n^{c_{\beta}} \right) \geq e^{-C n \phi_n^2}.\]

Now we take care of the small coefficients using the decay $(\sigma_k)$. For the indices such that $i \vee j > r^*$, recall that we have $\Tilde{W}_l^{(ij)} = 0$ for all $l$, leaving us to bound
\[ \prod_{l=1}^{L+1} \prod_{i \vee j > r^*} \Pi \left(  | W_{l}^{(ij)} | \leq z_n \right) = \prod_{l=1}^{L+1} \prod_{i \vee j > r^*}  \left[ 1 - 2 \overline{H} \left( z_n /  \sigma_l^{(ij)} \right)  \right] ,\]
here we have used the symmetry of $h$ and the fact that $z_n < n^{c_{\beta}}$ for $n$ sufficiently large. Using $\log^{2(1+ \delta)} ( i \vee j ) \leq \log(1/ \sigma_l^{(ij)} )$ from \eqref{condition sigma} and the expression of $r^*$ given by \eqref{rappel : r*}, we have a large enough constant $C_5$ such that whenever $i \vee j > r^*$, we have
\[ 1/ \sigma_l^{(ij)} \geq e^{\log^{2(1+\delta)}(i \vee j)} \geq e^{ \log^{2(1+\delta)}r^*} \geq e^{ C_5 \log^{2(1+\delta)}n}.\]

Recall that by assumption \ref{H3}, $\overline{H}(x) \leq c_2 /x$ for all $x \geq 1$. Noting that $\log(1-2x) \geq -4x$ whenever $x < 1/4$ and using the bound \eqref{boundV} on $V$, one gets that for sufficiently large $n$ satisfying 
\[  z_n e^{C_5 \log^{2(1 + \delta)}n} \geq 1 \vee 4c_2 ,\]
there is a constant $C_6$ such that
\[  1 - 2 \overline{H} \left( z_n /  \sigma_l^{(ij)} \right)  \geq 1 - 2c_2 z_n^{-1}e^{-C_5 \log^{2(1 + \delta)}n} \geq \exp(-C_6 z_n^{-1} e^{-C_5 \log^{2(1 + \delta)}n}). \]
Recall $T$ is the total number of parameters of the network, this leads to the lower bound
\[ \prod_{l=1}^{L+1} \prod_{i \vee j > r^*} \Pi \left(  | W_{l}^{(ij)} | \leq z_n \right)  \geq \exp(-C_6 T z_n^{-1} e^{-C_5 \log^{2(1 + \delta)}n}). \]
Whenever $n$ is large enough and for our choice of architecture, we have $T \lesssim n \log^{1 + \delta}n$, using again the bound \eqref{boundV} on V one gets for $n $ sufficiently large, \[T n^{c_{\beta}L}V (L+1) \lesssim n ( \log^{(1+\delta)}n ) \,e^{C'\log^{2+\delta}n} \lesssim n \phi_n^3 e^{ C_5 \log^{2(1+\delta)} n} .\]
Finally we have a large enough constant $C>0$ such that 
\[ \prod_{l=1}^{L+1} \prod_{i \vee j > r^*} \Pi \left(  | W_{l}^{(ij)} | \leq z_n \right) \geq e^{-C n \phi_n^2}.\]

\end{proof}

\subsection{Proof of Theorem \ref{thm : varcomp}}\label{proof : varcomp}
\begin{proof}
We give the proof for $\kappa = 0$, giving the fastest rate, the proof in the general case being similar. In this setting $\gamma = 2(1+\delta) +1$. Let $\Tilde{f}_0 \in \mathcal{F}(L,\br(\sqrt{n}),s)$ be the network approximating $f_0$ from Proposition \ref{lemma : approxcomp}. Using $ \lVert f - f_0 \rVert_{L^2(P_X)}^2 \leq  2 \lVert f - \tilde{f}_0 \rVert_{L^2(P_X)}^2 + 2 \lVert \tilde{f}_0 - f_0 \rVert_{L^2(P_X)}^2 $, Lemma \ref{lem : pac} gives
\begin{align*}
    E_{f_0} \left( \int D_{\alpha}(f,f_0)  \, d \hat{Q}_{\alpha}(f) \right) & \leq \frac{1}{1 - \alpha}\lVert \tilde{f}_0 - f_0 \rVert_{L^2(P_X)}^2  \\ &
    + \underset{Q \in \mathcal S}{\inf} \left\{ \frac{\alpha}{1-\alpha} \int \lVert f-\Tilde{f}_0 \rVert_{L^2(P_X)}^2 \, d Q(f) + \frac{\KL(Q,\Pi)}{n(1-\alpha)}\right\}.
\end{align*}
Since $\lVert \tilde{f}_0 - f_0 \rVert_{L^2(P_X)}^2 \leq \phi_n^2$ when $n$ is large enough, it is enough to show that $\mathcal{S}_{HT}(h)$ contains a distribution $Q^*$ satisfying the extended prior mass condition (see Remark \ref{rmq : markov}), 
\begin{equation}\label{extpriormass}
    \int \lVert f-\Tilde{f}_0 \rVert_{L^2(P_X)}^2 \, d Q^*(f) \leq \phi_n^2  \qquad \text{and} \qquad  \KL(Q^*,\Pi) \leq n \phi_n^2.
\end{equation}
For $h$ the heavy-tailed density taken as prior on coefficients, let us define $Q^* :=\bigotimes_{k=1}^T Q_k^* \in \mathcal{S}_{HT}(h) $ by setting for all $k \in [T],$ \[ \frac{dQ_k^*}{d \theta_k}(\theta_k):= \frac{1}{\sigma_k} h \left( \frac{ \theta_k -\Tilde{\theta}_k}{\sigma_k}\right) = h_{\Tilde{\theta}_k,\sigma_k}(\theta_k), \]
where $\sigma_k = e^{-\log^{2(1+\delta)}n}$ and $\{\Tilde{\theta}_k \}$ are the coefficients of $\Tilde{f}_0$, such that $Q_k^* \in \mathcal{H}(h) $.

Using Lemma \ref{propag1} and setting $w_n := n^{2c_{\beta}L} V^2(L+1)^2 $, we get
\begin{align*}
    \int \lVert f-\Tilde{f}_0 \rVert_{L^2(P_X)}^2 \, d Q^*(f) \leq \int \lVert f-\Tilde{f}_0 \rVert_{\infty}^2 \, d Q^*(f) \leq w_n \int \underset{k }{\max}|\theta_k - \Tilde{\theta}_k|^2 \, d Q^*(f).
\end{align*}
Recall the definition \eqref{def : moment} of the moments of $h$, from the definition of $Q^*$ it follows,
\[\int \underset{k }{\max}|\theta_k - \Tilde{\theta}_k|^2 \, d Q^*(f) \leq \sum_{k =1}^T \int |\theta_k - \Tilde{\theta}_k|^2 d Q_k^*(\theta_k) = m_2(h) \sum_{k =1}^T \sigma_k^2  = m_2(h) T e^{-2 \log^{2(1+\delta)}n} . \]
Using inequality \eqref{boundV}, we get
$w_n T \leq e^{C \log^{2+\delta}n}$ for $C>0$ sufficiently large. Since $m_2(h) \leq c_3^{2/(2 \vee (1 + \kappa))} $ by assumption, we have $m_2(h) w_n  T \sigma_k^2   \leq \phi_n^2$ if $n$ is large enough.

\medskip

We just showed that $Q^*$ satisfies the first condition in \eqref{extpriormass}, let us check the second one. We can write $\Pi = \bigotimes_{k=1}^T \Pi_k$ where,
\[ \frac{d \Pi_k}{d \theta_k} (\theta_k) = \frac{1}{\sigma_k} h \left( \frac{\theta_k}{\sigma_k}\right) = h_{0,\sigma_k}(\theta_k).\]

By additivity using independence we have $\KL(Q^*,\Pi) = \sum_{k=1}^T \KL(Q_k^*,\Pi_k) $. Let us write $S_0 = \{ k \in [T]  \, , \, \Tilde{\theta}_k \neq 0 \}$ the support of the collection of coefficients of $\Tilde{f}_0$, since $Q_k^* = \Pi_k$ whenever $\Tilde{\theta}_k =0$, we get 
\[ \KL(Q^*,\Pi) = \sum_{k \in S_0} \KL(Q_k^*,\Pi_k). \]
Let $k \in S_0$. Using the fact that $h$ is bounded,
\begin{align*}
    \KL(Q_k^*,\Pi_k) &= \int \log \left( \frac{h \left( \frac{ \theta - \Tilde{\theta}_k}{\sigma_k}\right)}{h \left( \frac{ \theta }{\sigma_k}\right)} \right)  \frac{1}{\sigma_k} h \left( \frac{ \theta - \Tilde{\theta}_k}{\sigma_k}\right) \, d\theta \\
    &\leq \log ||h||_{\infty} + \underbrace{\int \log \left( \frac{1}{h \left( \frac{ \theta }{\sigma_k}\right)} \right)  \frac{1}{\sigma_k} h \left( \frac{ \theta - \Tilde{\theta}_k}{\sigma_k}\right) \, d\theta}_{ I_k}.
\end{align*}
Using the heavy-tailed assumption \ref{H2} with $\kappa = 0$ on $h$, for all $\theta >0$, we get
\[ \log  \left( \frac{1}{h(\theta \sigma_k^{-1})} \right)
\leq c_1 (1+\log(1+ \theta \sigma_k^{-1})) .
\]
Splitting the integral $I_k$ in two and using the symmetry of $h$ leads to
\begin{align*}
    I_k &\leq \int_0^{+\infty} \log  \left( \frac{1}{h(\theta \sigma_k^{-1})} \right) \frac{1}{\sigma_k} h \left( \frac{ \theta - \Tilde{\theta}_k}{\sigma_k}\right) \, d\theta + \int_0^{+\infty} \log  \left( \frac{1}{h(\theta \sigma_k^{-1})} \right) \frac{1}{\sigma_k} h \left( \frac{ \theta + \Tilde{\theta}_k}{\sigma_k}\right) \, d\theta \\
    &\leq \underbrace{\int_0^{+\infty} c_1 \left[1+\log(1+ \theta \sigma_k^{-1}) \right] \frac{1}{\sigma_k} h \left( \frac{ \theta - \Tilde{\theta}_k}{\sigma_k}\right) \, d\theta}_{I_k^-} + \underbrace{\int_0^{+\infty} c_1 \left[1+\log(1+ \theta \sigma_k^{-1}) \right] \frac{1}{\sigma_k} h \left( \frac{ \theta + \Tilde{\theta}_k}{\sigma_k}\right) \, d\theta}_{I_k^+}.
\end{align*}
One can suppose $\Tilde{\theta}_k > 0 $, given the symmetry in $I_k^+$ and $I_k^-$. Considering first $I_k^-$, we have, by changing variables, 
\[ I_k^- \leq c_1 + c_1 \int_{- \Tilde{\theta}_k/\sigma_k}^{+\infty} \log \left(1+ \frac{\Tilde{\theta}_k}{\sigma_k} +u \right) h \left( u \right) \, du.\]
The integral in the last display is bounded as follows
\begin{align*}
    \int_{- \Tilde{\theta}_k/\sigma_k}^{+\infty} \log \left(1+ \frac{\Tilde{\theta}_k}{\sigma_k} +u \right) h \left( u \right) \, du &\leq \log \left(1+ 2\frac{\Tilde{\theta}_k}{\sigma_k}  \right) + \int_{\Tilde{\theta}_k/\sigma_k}^{+\infty} \log \left(1+ 2u \right) h \left( u \right) \, du \\
    &\leq \log \left(1+ 2\frac{\Tilde{\theta}_k}{\sigma_k}  \right) + 2 m_1(h).
\end{align*}
e bound can be obtained for $I^+$, and using the fact that $|\Tilde{\theta}_k| \leq 1$ in the structure of corollary \ref{corJSH} we get,
\[ I \lesssim \log(\sigma_t^{-1}),\]
For $I_k^+$, we have, changing variables

\begin{align*}
    \int_{ \Tilde{\theta}_k/\sigma_k}^{+\infty} \log \left(1 -\frac{\Tilde{\theta}_k}{\sigma_k} +u \right) h \left( u \right) \, du &\leq \int_{ \Tilde{\theta}_k/\sigma_k}^{+\infty} \log \left(1 +u \right) h \left( u \right) \, du \leq m_1(h).
\end{align*}
In the case of $\kappa > 0$ one gets $m_{1+ \kappa}(h)$ instead of the first order moment, this quantity is also bounded by assumption.
Finally, we have $I_k \lesssim 1 + \log \left(1+ 2\frac{|\Tilde{\theta}_k|}{\sigma_k}  \right)$ for $n$ large enough. Using the fact that $|\Tilde{\theta}_k| \leq n^{c_\beta}$ we obtain

\[ \KL(Q_k^*,\Pi_k) \lesssim 1 + \log \left(1+ 2\frac{|\Tilde{\theta}_k|}{\sigma_k}  \right) \lesssim \log^{2(1+\delta)}n = \log^{\gamma -1}n. \]
Recalling that $|S_0| = s \lesssim  n \phi_n^2 \log^{1-\gamma}n$, we finally get,
\[ \KL(Q^*,\Pi) \lesssim s \log^{\gamma -1}n \lesssim n \phi_n^2 ,\]
which concludes the proof.
\end{proof}

\acks{
The authors would like to thank Sergios Agapiou and Gabriel de la Harpe for helpful discussions. IC acknowledges funding from the Institut Universitaire de France and ANR grant project BACKUP ANR-23-CE40-0018-01. 
}

\appendix
\section{Tempered posterior contraction}\label{app: post}

In this appendix we recall some results regarding the concentration of posterior distributions.
\begin{definition}\label{def : Rényi}
    Let $\alpha \in (0,1)$ and $P,Q$ be two probability measures. The Kullback-Liebler (KL) divergence between $P$ and $Q$ is defined by
    \begin{equation*}
        \KL (P,Q) := \int \log \left( \frac{dP}{dQ} \right) \, dP \quad \text{if } P \ll Q, \text{ and } +\infty \text{ otherwise.}
    \end{equation*}
    Let $\mu$ be any measure satisfying $P \ll \mu$ and $Q \ll \mu$. The $\alpha$-Rényi divergence between $P$ and $Q$ is defined by
    \begin{equation*}
        D_{\alpha}(P,Q) := \frac{1}{\alpha - 1} \log \left( \int \left( \frac{dP}{d\mu}\right)^{\alpha} \left( \frac{dQ}{d\mu}\right)^{1-\alpha} \, d\mu \right) .
    \end{equation*}
\end{definition}
For an overview on properties of such divergences we refer to \citet{vanerven}. In particular, if $P_{f_0,\tau_0^2}$ is the probability measure such that $(X_i,Y_i) \overset{i.i.d.}{\sim} P_{f_0,\tau_0^2}$ in the regression setting \eqref{model}, using additivity of the Rényi divergence for independent observations, for any $f \in \mathcal{F}$ and $\tau^2 > 0$, we have
\begin{equation}\label{def : renyi}
    D_{\alpha}(P_{f,\tau^2},P_{f_0,\tau_0^2}) = \frac1n D_{\alpha}(P_{f,\tau^2}^{\otimes n},P_{f_0,\tau^2_0}^{\otimes n}).
\end{equation}
When $\tau_0$ is supposed to be known (e.g. $\tau_0 = 1$) we simply denote
\[ D_{\alpha}(f,f_0) := D_{\alpha}(P_{f,\tau_0^2},P_{f_0,\tau_0^2}).\]
\subsection{Contraction Lemmas : case of known variance $\tau_0 = 1$.}
In order to simplify the reading of this section we first state the results we use to prove our Theorems in the case where the noise variance $\tau_0 =1$ is known. A more general version used in the unknown variance case can be found below.

\begin{lemma}\label{lem : conc}
    Let $f_0 \in \mathcal{F}$ and assume $\tau_0 =1$ is known. Let $\Pi$ be a probability measure on $\mathcal{F}$. For any $\alpha \in (0,1)$, any positive sequence $(\varepsilon_n)_n$ such that $\varepsilon_n \to 0$, $n \varepsilon_n^2 \to \infty$ and 
    \[ \Pi \left[ \{ f \, : \, \lVert f-f_0 \rVert_{\infty} \leq \varepsilon_n \} \right] \geq e^{-n\varepsilon_n^2}, \]
    there is a constant $C >0 $ such that as $n$ tends to infinity,
    \[ E_{{f_0}}\Pi_{\alpha} \left[\left \{f \, : \,  D_{\alpha}(f,f_0) \geq C \frac{\alpha \varepsilon_n^2}{1 - \alpha} \right\} \, | \, X,Y \right] \to 0 ,\]
    where $E_{f_0}$ denotes the expectation under $P_{f_0}$.
\end{lemma}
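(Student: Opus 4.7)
The plan is to apply the standard fractional posterior contraction argument, in the spirit of Bhattacharya--Pati--Yang: bound the numerator in \eqref{fracpost} via the identity relating the $\alpha$-th moment of the likelihood ratio under $P_{f_0}$ to the R\'enyi divergence, and lower bound the denominator via the prior mass condition together with a Jensen / second-moment argument. The quadratic terms in $Y_i-f_0(X_i)$ cancel between numerator and denominator of \eqref{fracpost}, so one may rewrite $\Pi_\alpha[A\,|\,X,Y]=N_A/D$ where $N_A=\int_A e^{\alpha\ell_n(f)}\,d\Pi(f)$, $D=\int e^{\alpha\ell_n(f)}\,d\Pi(f)$, with $\ell_n(f)=\log(dP_f^{\otimes n}/dP_{f_0}^{\otimes n})(X,Y)$.

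For the numerator, independence of the $(X_i,Y_i)$ and Definition \ref{def : Rényi} give the classical identity $E_{f_0}[e^{\alpha\ell_n(f)}]=e^{-n(1-\alpha)D_\alpha(f,f_0)}$. Fubini therefore yields, for $A_C:=\{f:D_\alpha(f,f_0)\geq C\alpha\varepsilon_n^2/(1-\alpha)\}$,
\[E_{f_0}[N_{A_C}]=\int_{A_C}e^{-n(1-\alpha)D_\alpha(f,f_0)}\,d\Pi(f)\leq e^{-Cn\alpha\varepsilon_n^2}.\]
Markov's inequality then gives $N_{A_C}\leq e^{-Cn\alpha\varepsilon_n^2/2}$ on an event of $P_{f_0}$-probability at least $1-e^{-Cn\alpha\varepsilon_n^2/2}\to 1$.

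For the denominator, restrict integration to $B:=\{\|f-f_0\|_{L^2(P_X)}\leq\varepsilon_n\}$ and apply Jensen's inequality with respect to the restricted prior $\Pi_B=\Pi(\cdot\cap B)/\Pi(B)$:
\[D\geq \Pi(B)\exp\!\left(\alpha\sum_{i=1}^n W_i\right),\qquad W_i:=\int_B\log\!\frac{p_f}{p_{f_0}}(X_i,Y_i)\,d\Pi_B(f).\]
The $W_i$ are i.i.d.\ under $P_{f_0}$; using \eqref{kl} and Fubini, $E_{f_0}[W_i]\geq -\varepsilon_n^2/2$; using \eqref{kl var} together with Jensen, $\mathrm{Var}_{f_0}(W_i)\leq \varepsilon_n^2$. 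Chebyshev's inequality, combined with the hypothesis $n\varepsilon_n^2\to\infty$, then implies $\sum_i W_i\geq -n\varepsilon_n^2$ with probability tending to one. Together with the prior mass assumption $\Pi(B)\geq e^{-n\varepsilon_n^2}$, this gives $D\geq e^{-(1+\alpha)n\varepsilon_n^2}$ on a high-probability event.

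Intersecting the two good events and taking the ratio yields
\[\Pi_\alpha[A_C\,|\,X,Y]\leq \exp\!\left\{-n\varepsilon_n^2\bigl(C\alpha/2-(1+\alpha)\bigr)\right\},\]
which tends to $0$ as soon as $C>2(1+\alpha)/\alpha$. Taking $E_{f_0}$ and bounding $\Pi_\alpha\leq 1$ on the complementary event (whose probability is $o(1)$) concludes the proof. The only genuinely model-specific step is the variance control on $W_i$, which is precisely the content of the Gaussian regression identity \eqref{kl var}; the remainder of the argument is purely measure-theoretic.
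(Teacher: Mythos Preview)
Your proof is correct and follows the standard Bhattacharya--Pati--Yang argument for fractional posteriors. The paper itself does not supply a proof of this lemma: it simply cites Theorem 4.1 in \cite{l2023semiparametric} and observes that in the Gaussian regression model the KL-type neighborhood there reduces to an $L^2(P_X)$-ball via \eqref{kl} and \eqref{kl var}. What you have written is essentially a self-contained reproduction of that cited argument, specialized to the present model; in particular, your use of \eqref{kl} and \eqref{kl var} to control the mean and variance of the $W_i$ is exactly the reduction the paper alludes to.
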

This is Theorem 4.1 in \cite{l2023semiparametric} written in our regression setting, noting that the KL--neighborhood therein simplifies to an $L^{\infty}$--neighborhood via Lemma \ref{lem : voisKL}. The version of this result that accounts for possible unknown noise variance $\tau_0$ is Lemma \ref{lem : conc unknown var} below.

\begin{lemma}\label{lem : pac}
    Let $f_0 \in \mathcal{F}$ and assume $\tau_0 =1$ is known. Let $\Pi$ a probability measures on $\mathcal{F}$ and $\mathcal{S}$ a set of probability measure on $\mathcal{F}$. For $\alpha \in (0,1)$ let $\hat{Q}_{\alpha}$ be the tempered variational approximation on $\mathcal{S}$ defined by \eqref{def : tvp}. Then
    \begin{equation*}
    E_{f_0} \left( \int D_{\alpha}(f,f_0)  \, d \hat{Q}_{\alpha} (f) \right) \leq \underset{Q \in \mathcal S}{\inf} \left\{ \frac{\alpha}{2(1-\alpha)} \int \lVert f-f_0 \rVert_{L^2(P_X)}^2 \, d Q(f) + \frac{\KL(Q,\Pi)}{n(1-\alpha)}\right\}.
  \end{equation*}
\end{lemma}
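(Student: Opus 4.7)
The plan combines two standard PAC-Bayesian ingredients: the variational characterization of $\hat Q_\alpha$ as a constrained KL-minimizer, and a Donsker--Varadhan change-of-measure inequality applied to a data-dependent test function whose exponential moment under $P_{f_0}^{\otimes n}\otimes\Pi$ equals one. Write $r(f)=dP_f^{\otimes n}/dP_{f_0}^{\otimes n}$ for the likelihood ratio; the iid structure together with the definition of the R\'enyi divergence produces the key identity $E_{f_0}[r(f)^\alpha]=e^{-n(1-\alpha)D_\alpha(f,f_0)}$, which will make the Donsker--Varadhan bound collapse to zero in expectation.

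First I would exploit the optimality of $\hat Q_\alpha$ in \eqref{def : tvp}. Since $d\Pi_\alpha[\cdot\,|\,X,Y]/d\Pi=r^\alpha/Z$ with $Z$ the data-dependent normalizer, expanding $\KL(\cdot,\Pi_\alpha[\cdot\,|\,X,Y])$ on both sides of the inequality $\KL(\hat Q_\alpha,\Pi_\alpha[\cdot\,|\,X,Y])\le \KL(Q,\Pi_\alpha[\cdot\,|\,X,Y])$ makes the $\log Z$ term cancel, giving, for every $Q\in\mathcal S$,
\begin{equation*}
\KL(\hat Q_\alpha,\Pi)-\alpha\int\log r\,d\hat Q_\alpha\;\le\;\KL(Q,\Pi)-\alpha\int\log r\,dQ.
\end{equation*}
Taking $E_{f_0}$ and using the deterministic identity $E_{f_0}[\log r(f)]=-n\KL(P_{f_0},P_f)$ together with \eqref{kl} yields
\begin{equation*}
E_{f_0}\!\left[\KL(\hat Q_\alpha,\Pi)-\alpha\!\int\!\log r\,d\hat Q_\alpha\right]\;\le\;\KL(Q,\Pi)+\tfrac{\alpha n}{2}\!\int\!\|f-f_0\|_{L^2(P_X)}^2\,dQ(f).
\end{equation*}

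Second I would apply Donsker--Varadhan pointwise in $(X,Y)$ to $\hat Q_\alpha$ with the data-dependent test function $\phi(f)=\alpha\log r(f)+n(1-\alpha)D_\alpha(f,f_0)$. Taking $E_{f_0}$ and moving it inside the $\log$ via Jensen and Fubini, the R\'enyi identity above gives
\begin{equation*}
E_{f_0}\log\!\int\! e^{\phi}\,d\Pi\;\le\;\log\!\int\! E_{f_0}[r(f)^\alpha]\,e^{n(1-\alpha)D_\alpha(f,f_0)}\,d\Pi(f)\;=\;\log 1\;=\;0,
\end{equation*}
hence
\begin{equation*}
n(1-\alpha)E_{f_0}\!\int\! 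D_\alpha(f,f_0)\,d\hat Q_\alpha(f)\;\le\;E_{f_0}\!\left[\KL(\hat Q_\alpha,\Pi)-\alpha\!\int\!\log r\,d\hat Q_\alpha\right].
\end{equation*}
Chaining this with the previous display, dividing by $n(1-\alpha)$, and taking the infimum over $Q\in\mathcal S$ delivers the stated inequality. The main (essentially routine) subtlety is the application of Donsker--Varadhan to a data-dependent measure with a data-dependent test function: this is legitimate because the inequality holds pointwise in $(X,Y)$ and Jensen/Fubini commute uniformly in $f$. Once $\phi$ has been engineered so that $E_{f_0}E_\Pi e^{\phi}=1$---precisely the R\'enyi identity---the remainder is algebra.
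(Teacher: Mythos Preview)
Your proof is correct and follows the standard PAC-Bayesian route that underlies the cited result. The paper itself does not prove this lemma: it simply invokes Theorem~2.6 of \citet{alquier2020} specialized to the regression model via \eqref{kl}. Your argument---combining the optimality of $\hat Q_\alpha$ (which reduces to the ELBO comparison after the $\log Z$ cancellation) with a pointwise Donsker--Varadhan bound whose exponential moment collapses thanks to the R\'enyi identity $E_{f_0}[r(f)^\alpha]=e^{-n(1-\alpha)D_\alpha(f,f_0)}$---is precisely the proof behind that citation, so there is nothing to contrast.
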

This is Theorem 2.6 from \citet{alquier2020} written in our regression framework, using \eqref{eq : kl} with $\tau_0 = \tau =1$, to express the Kullback divergence therein as an $L^2(P_X)$-squared norm.
\begin{remark}\label{rmq : markov}
    If there is $\varepsilon_n > 0$ such that 
    \[E_{f_0} \left( \int D_{\alpha}(f,f_0)  \, d \hat{Q}_{\alpha} (f) \right) \leq \varepsilon_n^2,\]
    then by Markov's inequality, for any $M_n \to \infty$, as $n \to \infty$,
    \[ E_{{f_0}} \hat{Q}_{\alpha} \left[\left \{f \, : \,  D_{\alpha}(f,f_0) \geq M_n \varepsilon_n^2 \right\}\right] \to 0 .\]
    Moreover, if we consider the clipped posterior $\Hat{Q}_\al^B$ as in Corollary \ref{cor : clip} we obtain convergence of the variational posterior mean in $L^2(P_X)$-distance, as then Lemma \ref{lem : clip} gives
    \[ E_{f_0} \big{\lVert} \int f \, d \Hat{Q}_{\al}^B(f) - f_0 \big{\rVert}_{L^2(P_X)} \leq E_{f_0}\int || f - f_0 ||_{L^2(P_X)} \, d \hat{Q}_{\al}^B(f) \lesssim \varepsilon_n .\]
    \end{remark}

\subsection{Extended Lemmas : unknown variance case}
We provide now a sufficient condition for tempered posterior contraction, recalled in Lemma \ref{lem : conc unknown var} below, it involves putting enough prior mass on the Kullback-neighborhood
    \begin{equation}\label{def : voisKL}
        B_n((f_0,\tau_0^2),\varepsilon_n) := \{ (f,\tau^2) \, : \, \KL (P_{f_0,\tau_0^2},P_{f,\tau^2}) \leq \varepsilon_n^2, \,V_2(P_{f_0,\tau_0^2},P_{f,\tau^2}) \leq \varepsilon_n^2\},
    \end{equation}
where
\[ V_2(P_{f_0,\tau_0^2},P_{f,\tau^2}) :=\int  \left( \log \left( \frac{dP_{f_0,\tau_0^2}}{dP_{f,\tau^2}} \right) - \KL(P_{f_0,\tau_0^2},P_{f,\tau^2}) \right)^2 \, dP_{f_0,\tau_0^2} .\]
In the regression setting, simple calculations allow us to identify the KL-neighborhood \eqref{def : voisKL} as an $L^\infty$-type ball.

\begin{lemma}\label{lem : voisKL} One can find a constant $C >0$ sufficiently large, such that it holds, for any $\varepsilon_n \to 0$ as $n \to \infty$,
\[ \{ (f,\tau^2) \, : \, ||f-f_0||_{\infty} \leq \varepsilon_n , \, |\tau^2 - \tau_0^2|\leq \varepsilon_n^2 \} \subset B_n((f_0,\tau_0^2),C\varepsilon_n).\]
In particular when $\tau_0 =1$ is known,
\[ \{ f \, : \, ||f-f_0||_{\infty} \leq \varepsilon_n \} \subset B_n(f_0, C\varepsilon_n). \]
    
\end{lemma}
\begin{proof} Simple calculations (see for instance Lemma 19 in \cite{cr24}) lead to
    \begin{equation*}\label{eq : kl}
        \KL(P_{f_0,\tau_0}, P_{f,\tau}) = \frac{1}{2 \tau^2}||f-f_0||_{L^2(P_X)}^2 + \frac{1}{2}\left( \log(\frac{\tau^2}{\tau_0^2}) + \frac{\tau_0^2}{\tau^2} - 1 \right) ,
    \end{equation*}
    \begin{equation*}\label{eq : klvar}
        V_2 (P_{f_0,\tau_0}, P_{f,\tau}) = \frac{\tau_0^2}{\tau^4} ||f-f_0||_{L^2(P_X)}^2 + \frac{(\tau_0^2 - \tau^2)^2}{2 \tau^4} + \frac{1}{4\tau^4}\left(||f_0-f||_{L^4(P_X)}^4 - ||f_0-f||_{L^2(P_X)}^4 \right).
    \end{equation*}
Using for any $p\geq 1$, $||f_0-f||_{L^p(P_X)}\leq||f-f_0||_{\infty}\leq \varepsilon_n$ and $|\tau^2 - \tau_0^2|\leq \varepsilon_n^2$ as well as $\varepsilon_n \to 0$ when $n \to \infty$, we get
\begin{align*}
        \KL(P_{f_0,\tau_0^2}, P_{f,\tau^2}) &\leq \frac{\varepsilon_n^2}{\tau_0^2-\varepsilon_n^2} + \frac12 \log(1 + \frac{\varepsilon_n^2}{\tau_0^2}) \leq C \varepsilon_n^2 \\
        V_2 (P_{f_0,\tau_0^2}, P_{f,\tau^2}) &\leq \frac{\tau_0^2 \varepsilon_n^2}{(\tau_0^2-\varepsilon_n^2)^2} + \frac{3\varepsilon_n^4}{4(\tau_0^2-\varepsilon_n^2)^2}  \leq C \varepsilon_n^2.
    \end{align*}
\end{proof}
The contraction Theorems for tempered posteriors in Section \ref{sec:main} are obtained applying Lemmas \ref{lem : conc} and \ref{lem : pac} above (or Lemma \ref{lem : conc unknown var} in the case of unknown variance), therefore the rates are formulated in terms of Rényi divergences. In the random design Gaussian regression model \eqref{model}, one can relate such rates to usual $L^2(P_X)$ ones, provided the true function $f_0$ has a known upper bound $M_0$.
\begin{lemma}\label{lem : clip}
    Let $P_{f,\tau^2},P_{f_0,\tau_0^2}$ be probability measures of the regression model \eqref{model} and let $\al \in (0,1)$. Assume $||f||_{\infty},||f_0||_{\infty} \leq M_0$ for some $M_0>0$. One can find a constant $C=C(\al,\tau_0^2,M_0)$ such that for any $\varepsilon_n \to 0$, as $n \to 
    \infty$, it holds that
    \[ \{(f,\tau^2)\, : \, D_\al(P_{f,\tau^2},P_{f_0,\tau_0^2}) \leq \varepsilon_n^2\} \subset \{ (f,\tau^2) \, : \, ||f-f_0||_{L^2(P_X)} \leq C\varepsilon_n \, , \, |\tau^2 - \tau_0^2| \leq C\varepsilon_n \}.\]
    In particular, if $\tau_0^2 =1$ is known, we have
    \[ D_\al(f,f_0) \geq \frac{\al}{2} e^{-2M_0^2\al(1-\al)} ||f - f_0||^2_{L^2(P_X)}.\]
\end{lemma}
\begin{proof}
    Let us express $D_\al(P_{f,\tau^2},P_{f_0,\tau_0^2})$, more explicitly. By using the standard formula expressing the $D_\al$-divergence between two univariate Gaussians we get
    \[ D_\al(\mathcal{N}(f(x),\tau^2),\mathcal{N}(f_0(x),\tau_0^2)) = \frac{\al}{2 \tau_\al^2}(f(x)-f_0(x))^2 + \frac{1}{1-\al}\log\frac{\tau_\al}{\tau_0^{1-\al}\tau^\al},\]
    where $\tau_\al^2 := (1-\al)\tau^2 + \al \tau_0^2$, see e.g. \cite{vanerven} eq. (10), one has
    \begin{align*}
        D_\al(P_{f,\tau^2},P_{f_0,\tau_0^2}) &= \frac{1}{\al -1} \log \int e^{(\al-1)D_\al(\mathcal{N}(f(x),\tau^2),\mathcal{N}(f_0(x),\tau_0^2))} \, dP_X(x)\\
        &= \frac{1}{1 - \al}\log\left({\frac{\tau_\al}{\tau_0^{1-\al}\tau^\al}}\right) +\frac{1}{\al -1}\log \int e^{- \frac{\al(1-\al)}{2 \tau_\al^2}(f(x)-f_0(x))^2}\, dP_X(x) \\
        &= \qquad \qquad\ \ (I)\qquad \qquad+\qquad\quad (II).
    \end{align*}
    We have both $(I) \geq 0$ (e.g. by concavity of the logarithm) and $(II) \geq 0$. From the assumption $D_\al(P_{f,\tau^2},P_{f_0,\tau_0^2}) \leq \varepsilon_n^2$ it follows that $(I) \leq \varepsilon_n^2$ and $(II) \leq \varepsilon_n^2$. From Lemma \ref{lem : partI} below, there is a constant $C>0$, such that $|\tau^2 - \tau_0^2| \leq C \varepsilon_n$. Note that $(II) = D_\al(P_{f,\tau_\al^2},P_{f_0,\tau_\al^2}) $. Since $|\tau^2 - \tau_0^2| \leq C \varepsilon_n$, using $\varepsilon_n \to 0$, we get $\tau_\al^2 \leq \tau_0^2 + C(1-\al) \varepsilon_n \leq 2 \tau_0^2$ for $n$ large enough. Therefore $(II) \geq D_\al(P_{f,2\tau_0^2},P_{f_0,2\tau_0^2}) $, where we used that the $\al$-Divergence is a non-increasing function of the variance parameter. Now, using first $1-x \leq - \log x$ and next $1 - e^{-x} \geq x e^{-x}$, leads to
    \begin{align*}
        (II) &\geq \frac{1}{1 - \al} \left[ 1 - \int \exp \left( \frac{\alpha (\alpha -1)}{4\tau_0^2} (f-f_0)^2 \right) \, {dP_X} \right]\\
        &\geq \frac{\al}{4 \tau_0^2} \int e^{\al(\al-1)(f-f_0)^2/(4\tau_0^2)} (f-f_0)^2 \, dP_X \\
        &\geq \frac{\al}{4 \tau_0^2} e^{-2M_0^2\al(1-\al)} ||f - f_0||^2_{L^2(P_X)}.
    \end{align*}
    Therefore, $||f - f_0||_{L^2(P_X)} \leq C \varepsilon_n$ for a suitable constant $C >0$.
\end{proof}
\begin{lemma}\label{lem : partI}
Let $\psi = \psi_\al$ be a function defined on $\R_{>0}$, for $0 < \al < 1$ and $\tau_0 >0$, by
\[ \psi(\tau^2) := \log \left( \frac{(1-\al)\tau^2 + \al \tau_0^2}{(\tau^2)^{1-\al}(\tau_0^2)^\al}\right).\]
There is a constant $c = c(\al,\tau_0^2) >0$, such that for any $\varepsilon_n \to 0$, as $n \to \infty$,
\[\{ \tau^2 \, : \, \psi(\tau^2) \leq c \varepsilon_n^2 \} \subset \{ |\tau^2 - \tau_0^2|\leq \varepsilon_n \}.\]
\end{lemma}

\begin{proof}
    Denoting $\tau_\al^2 := (1-\al)\tau^2 + \al \tau_0^2 $, one gets
    \[ \psi'(\tau^2) = \al(1-\al)(1-\tau_0^2/\tau^2)(\tau_\al^2)^{-1},\]
    so that $\psi'(\tau_0^2)=0$, $\psi$ is decreasing from $+ \infty$ to $\psi(\tau_0^2)=0$ on $(0,\tau_0^2]$ and increasing from $0$ to $+\infty$ on $[\tau_0^2,\infty)$. Also,
    \[ \psi''(\tau^2) = \frac{1 - \al}{(\tau^2)^2} - \frac{(1-\al)^2}{(\tau_\al^2)^2},\]
    so that $\psi''(\tau_0^2) =: 2m >0$. By continuity of $\psi''$, there exists an interval $I:=[\tau_0^2 \pm \eta]$, for some $\eta = \eta(\tau_0^2,\alpha)$, such that $\psi'' \geq m$ on $I$. Set
    \[ c := c(\tau_0^2,\al) = \min( m/2, \psi(\tau_0^2 - \eta), \psi(\tau_0^2 + \eta)).\]
    Let $\tau^2$ verify $\psi(\tau^2) \leq c \varepsilon_n^2$. For $n$ large enough we have $\psi(\tau^2) \leq c$ and by monotonicity of $\psi$ on each side of $\tau_0^2$ one must have $|\tau^2 - \tau_0^2|\leq \eta$, i.e. $\tau^2 \in I$. Recall $\psi(\tau_0^2) = \psi'(\tau_0^2) =0$, computing the Taylor expansion of $\psi$ at $\tau_0^2$ gives, for any $\tau^2 \in I$, and some $\zeta \in I$,
    \[ \psi(\tau^2) =\psi''(\zeta)(\tau^2-\tau_0^2)^2/2 \geq (m/2)(\tau^2-\tau_0^2)^2, \]
    where we used that $\psi'' \geq m$ on $I$. Therefore as soon as $\psi(\tau^2) \leq c \varepsilon_n$ and $n$ is large enough, we have $|\tau^2 - \tau_0^2| \leq \sqrt{2c/m} \varepsilon_n$, which gives the result using $2c/m \leq 1$ by definition of $c$.
    
\end{proof} 

\begin{lemma}\label{lem : conc unknown var}
    Let $f_0 \in \mathcal{F}$ and $\tau_0 > 0$ and $\Pi$ be a probability measure on $\mathcal{F} \times \R_{>0}$. For any $\alpha \in (0,1)$, any positive sequence $(\varepsilon_n)_n$ such that $\varepsilon_n \to 0$, $n \varepsilon_n^2 \to \infty$ and 
    \[ \Pi \left[ \{ (f,\tau^2) \, : \, \lVert f-f_0 \rVert_{\infty} \leq \varepsilon_n \, , \, |\tau^2 - \tau_0^2|\leq \varepsilon_n^2 \} \right] \geq e^{-n\varepsilon_n^2}, \]
    there is a constant $M >0 $ such that, as $n \to \infty$,
    \[ E_{{f_0}}\Pi_{\alpha} \left[\left \{ (f,\tau^2) \, : \,  D_{\alpha}(P_{f,\tau^2},P_{f_0,\tau_0^2}) \geq \frac{M\alpha }{1 - \alpha} \varepsilon_n^2\right\} \, | \, X,Y \right] \to 0 ,\]
    where $E_{f_0}$ denotes the expectation under $P_{f_0,\tau_0^2}$.
\end{lemma}

\section{Additional properties of ReLU DNNs}
We list here some well-known properties of ReLU neural networks, which are very useful for combining different networks:
\begin{itemize}
    \item \emph{Width enlargement}: $\mathcal{F}(L, \mathbf{r}, s) \subseteq \mathcal{F}(L, \mathbf{r'}, s')$ whenever $\mathbf{r} \leq \mathbf{r'}$ component-wise and $s \leq s'$.
    \item  \emph{Composition}: Given $f \in \mathcal{F}(L, \mathbf{r})$ and $g \in \mathcal{F}(L', \mathbf{r'})$ with $r_{L+1} = r_0'$, we can define the network whose realization $g \circ f$ is in the class $\mathcal{F}(L + L', (\mathbf{r}, r_1', \dots, r_{L'+1}'))$.
    \item \emph{Depth synchronization}: To synchronize the number of hidden layers between two networks, we can add layers with the identity matrix as many times as desired,
\[ \mathcal{F}(L, \mathbf{r}, s) \subset \mathcal{F}(L+q, (\underbrace{r_0, \dots, r_0}_{\text{$q$ times}}, \mathbf{r}), s+q \, r_0).\]
    \item \emph{Parallelization}: Given $f \in \mathcal{F}(L, \mathbf{r})$ and $g \in \mathcal{F}(L, \mathbf{r'})$ with $r_0 = r_0'$, one can simultaneously realizes $f$ and $g$ within a joint network whose realization $(f,g)$ is in the class $\mathcal{F}(L, (r_0, r_1 + r_1', \dots , r_{L+1} + r_{L+1}')).$
\end{itemize}

Following is a very useful Lemma giving a bound on the distance between two ReLU neural networks realizations in terms of the distance between their coefficients. These types of inequalities are well established in the literature. The version presented below appears in the proof of Lemma 3 in \cite{suzuki2018adaptivity}.
\begin{lemma}\label{propag1}
Let $f$ and $f^* \in \mathcal{F}(L,\mathbf{r})$ with total number of parameters $T$ and coefficients $(\theta_k)$ and $(\theta_k^*)$. Suppose that for all $k \in [T]$, $|\theta_k| \leq b$, $|\theta_k^*| \leq b$, and $|\theta_k - \theta_k^*| \leq \delta$. Define 
\begin{equation}\label{def : V}
V := \prod_{l=0}^{L} \left( r_l +1 \right),
\end{equation}
then
\[ || f - f^* ||_{L^{\infty}([0,1]^{r_0})} \leq \delta  V (b \vee 1)^L (L+1).\]
\end{lemma}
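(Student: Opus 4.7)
\textbf{Proof proposal for Lemma \ref{propag1}.}

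The plan is a straightforward layerwise induction that propagates the coefficient perturbation forward, combining the $1$-Lipschitz property of $\rho$ with elementary operator-norm bounds in $\ell^\infty$. Let me adopt the augmented convention described in Section~\ref{sec:dnn}, where the shift $v_l$ is treated as the $0$-th column of $W_l$ and an input $y\in\R^{r_{l-1}}$ is extended to $\tilde{y}=(1,y^\top)^\top\in\R^{r_{l-1}+1}$, so that a layer acts as $y\mapsto W_l\tilde y$. Denote the partial outputs by $f_0(x)=x$, $f_l(x)=\rho(W_l\widetilde{f_{l-1}(x)})$ for $1\le l\le L$, and $f_{L+1}(x)=W_{L+1}\widetilde{f_L(x)}$; analogously define $f_l^*(x)$ from the starred coefficients, and set $g_l(x):=f_l(x)-f_l^*(x)$. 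Write $B:=b\vee 1$.

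First I would establish, by induction on $l$, the forward size bound
\[ \|\widetilde{f_l(x)}\|_\infty \;\le\; B^{l}\,\prod_{k=0}^{l-1}(r_k+1) \qquad (l=0,\ldots,L), \]
valid uniformly in $x\in[0,1]^{r_0}$. The base case uses $\|\widetilde{f_0(x)}\|_\infty=\max(1,|x|_\infty)\le 1$ since $x\in[0,1]^{r_0}$; the induction step uses $|\rho(\cdot)|\le|\cdot|$ coordinate-wise together with the elementary bound $\|W_l y\|_\infty\le B(r_{l-1}+1)\|y\|_\infty$, which follows from $|W_l^{(ij)}|\le B$ by summing at most $r_{l-1}+1$ entries. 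The same recursion clearly holds for $f_l^*$.

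Next I would derive the perturbation recursion. The $1$-Lipschitz continuity of $\rho$ applied coordinatewise gives, for $1\le l\le L$,
\[ \|g_l(x)\|_\infty \;\le\; \bigl\|W_l\widetilde{f_{l-1}(x)}-W_l^*\widetilde{f_{l-1}^*(x)}\bigr\|_\infty, \]
and this same inequality is in fact an equality (no $\rho$) for $l=L+1$. Splitting via $W_l\widetilde{f_{l-1}(x)}-W_l^*\widetilde{f_{l-1}^*(x)}=(W_l-W_l^*)\widetilde{f_{l-1}(x)}+W_l^*(\widetilde{f_{l-1}(x)}-\widetilde{f_{l-1}^*(x)})$ and applying the two operator-norm bounds (with perturbation size $\delta$ for the first factor and $B$ for the second), plus the observation that $\|\widetilde{f_{l-1}(x)}-\widetilde{f_{l-1}^*(x)}\|_\infty=\|g_{l-1}(x)\|_\infty$ (the augmented coordinate is $1-1=0$), yields
\[ \|g_l(x)\|_\infty \;\le\; \delta(r_{l-1}+1)\|\widetilde{f_{l-1}(x)}\|_\infty + B(r_{l-1}+1)\|g_{l-1}(x)\|_\infty. \]

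Starting from $g_0\equiv 0$, I would then unroll this recursion, substituting the forward size bound from the first step. A telescoping computation shows that each of the $l$ contributions is bounded by the same quantity $\delta\, B^{l-1}\prod_{j=0}^{l-1}(r_j+1)$, because the factors of $B$ and of $(r_j+1)$ combine to the full product. Summing over the $l$ indices gives $\|g_l(x)\|_\infty\le \delta\,l\,B^{l-1}\prod_{j=0}^{l-1}(r_j+1)$; evaluating at $l=L+1$ yields
\[ \|f-f^*\|_{L^\infty([0,1]^{r_0})} \;\le\; \delta\,(L+1)\,B^{L}\,\prod_{j=0}^{L}(r_j+1) \;=\; \delta\,V\,(b\vee 1)^{L}(L+1), \]
which is exactly the claimed inequality. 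There is no real obstacle here; the only mildly delicate accounting is keeping the shift-as-extra-column bookkeeping consistent (which is why the product involves $r_j+1$ rather than $r_j$), and checking that the last layer without $\rho$ is handled by the same recursion with $l=L+1$.
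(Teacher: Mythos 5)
Your proof is correct, and it is essentially the standard layerwise perturbation argument (telescoping the difference through the layers, using the $1$-Lipschitz property of $\rho$ and the $\ell^\infty$ operator bound $\|W y\|_\infty\le B(r+1)\|y\|_\infty$ for the augmented matrices) that the paper itself does not reproduce but instead delegates to the proof of Lemma 3 in \citet{suzuki2018adaptivity}. The bookkeeping of the $(r_l+1)$ factors, the power $B^{L}$, and the prefactor $L+1$ all check out against the stated bound.
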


For reader's convenience  we recall below the approximation results that will be used to prove Theorems \ref{thm : mink} and \ref{thm : besov}, these are respectively borrowed from \cite{nakada2020adaptive} and \cite{suzuki2021deep} where the constants are made explicit.

\begin{lemma}[Theorem 5 in \cite{nakada2020adaptive}]\label{lem : approxmink}
    Let $f \in \mathcal{C}_d^{\beta}([0,1]^d,K)$ and assume that $\dim_M \supp P_X < t$ holds with $t < d$. Let $\varepsilon > 0$. There is a ReLU neural network $\Tilde{f}$ with constant depth $L = L(\beta,d,t)$ and sparsity $ \Tilde{s} \lesssim \varepsilon^{-
    t/\beta}$ such that, when $\varepsilon$ is small enough,
    \[ || f - \Tilde{f}||_{L^{\infty}(P_X)} \leq \varepsilon,\]
    moreover there is a constant $c = c(\beta,d,t)$ such that the weights $\{ \Tilde{\theta}_k , k \in [T] \}$ of $\Tilde{f}$ all satisfy $|\Tilde{\theta}_k| \leq \varepsilon^{-c}$.
\end{lemma}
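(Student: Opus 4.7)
\textbf{Proof plan for Lemma \ref{lem : approxmink}.}

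My plan is to follow the classical "cover and locally approximate" approach, exploiting that $\supp P_X$ has small covering number. First, I would pick a spatial resolution $\varepsilon' := \varepsilon^{1/\beta}$. By the definition of upper Minkowski dimension and the assumption $\dim_M \supp P_X < t$, for all sufficiently small $\varepsilon'>0$ one has $\cN(\supp P_X, \varepsilon') \le N$ with $N \lesssim (\varepsilon')^{-t} = \varepsilon^{-t/\beta}$. Fix such a cover by $\ell_\infty$-balls (cubes) $Q_j$ of radius $\varepsilon'$ centred at points $x_j \in [0,1]^d$, $j=1,\dots,N$.

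Next, on each cube $Q_j$ I would approximate $f$ by its order-$\lfloor\beta\rfloor$ Taylor polynomial $p_j$ around $x_j$. Since $f \in \cC_d^\beta([0,1]^d,K)$, the standard Taylor remainder estimate gives $|f(x)-p_j(x)| \le C_{\beta,d} K (\varepsilon')^{\beta} = C_{\beta,d}K\,\varepsilon$ for all $x \in Q_j$, and the coefficients of $p_j$ are bounded by a constant depending only on $\beta,d,K$. The global approximant will then take the form $\tilde f(x) := \sum_{j=1}^N \phi_j(x) p_j(x)$, where $\{\phi_j\}$ is a ReLU-implementable partition of unity subordinate to a slight enlargement of the cover.

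The network construction has three building blocks, each of constant depth (depending only on $\beta,d,t$). (i) For each $j$ I would build a one-dimensional "trapezoid" ReLU network in each coordinate and multiply them (using the standard approximate-multiplication gadget based on $x^2 = \rho(x)+\rho(-x)$ sums) to obtain $\phi_j$ equal to $1$ on $Q_j$, supported in a slightly enlarged cube, and with $\sum_j \phi_j(x) \ge 1$ on $\supp P_X$ after normalisation; each $\phi_j$ uses $O(d)$ active weights. (ii) For each $j$, the monomials of total degree $\le \lfloor\beta\rfloor$ can be approximately computed in constant depth by the Yarotsky-type multiplication network to accuracy $\varepsilon$; linearly combining them with the bounded Taylor coefficients gives $p_j$ using $O(1)$ active weights. (iii) The product $\phi_j \cdot p_j$ is again produced by the constant-depth multiplication gadget, and the $N$ pieces are summed in a single final affine layer. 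Parallelising the $N$ local pieces and synchronising depths (using the identity-layer trick listed in the appendix) yields a single network of depth $L(\beta,d,t)$ with total sparsity $\tilde s \lesssim N = \varepsilon^{-t/\beta}$.

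Finally, combining the local Taylor error $C\varepsilon$, the multiplication error (taken to accuracy $\varepsilon$ via the gadget), and the partition-of-unity property gives $\|f-\tilde f\|_{L^\infty(P_X)} \le C'\varepsilon$; rescaling $\varepsilon$ by a constant absorbs $C'$. The main obstacle is the weight bound $|\tilde\theta_k| \le \varepsilon^{-c}$: achieving multiplication accuracy $\varepsilon$ in \emph{constant} depth forces the weights of the multiplication gadget to grow polynomially in $1/\varepsilon$, and the trapezoid partition functions with transition width $\asymp \varepsilon'$ introduce weights $\asymp 1/\varepsilon' = \varepsilon^{-1/\beta}$. Both effects are polynomial in $1/\varepsilon$, so tracking them carefully through the three blocks yields a single exponent $c=c(\beta,d,t)$ such that every weight satisfies $|\tilde\theta_k|\le \varepsilon^{-c}$, completing the proof.
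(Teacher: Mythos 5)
First, note that the paper does not prove this lemma at all: it is imported verbatim as Theorem 5 of \cite{nakada2020adaptive}, so there is no internal argument to compare against. Your strategy --- cover $\supp P_X$ by $\cN(\supp P_X,\varepsilon^{1/\beta})\lesssim \varepsilon^{-t/\beta}$ cubes using the Minkowski-dimension hypothesis (this is exactly where the intrinsic dimension $t$ replaces $d$), approximate $f$ locally by order-$\lfloor\beta\rfloor$ Taylor polynomials, and glue with a ReLU partition of unity and Yarotsky-type multiplication gadgets --- is the same strategy as that reference (and as \cite{yarotsky17}, \cite{JSH}), so the outline is the right one and the $L^\infty(P_X)$ localisation is correctly exploited.

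The genuine gap is in how you reconcile \emph{constant depth}, \emph{accuracy $\varepsilon$} and \emph{sparsity $\tilde s\lesssim\varepsilon^{-t/\beta}$}. The standard approximate-multiplication gadget (Lemma \ref{mult} in this paper, from \cite{JSH}) achieves accuracy $4^{-R}$ with depth $R$ and $O(R)$ parameters, so reaching accuracy $\varepsilon$ costs depth $\asymp\log(1/\varepsilon)$, which is not constant. If you instead flatten it to constant depth, you must realise the $m$-fold sawtooth compositions, $m\asymp\log(1/\varepsilon)$, in a single wide layer; a sawtooth with $2^m$ linear pieces needs $\asymp 2^m$ ReLU units in one hidden layer, i.e.\ the cost is \emph{width and active parameters} of order $\mathrm{poly}(1/\varepsilon)$ per gadget, not merely large weights as your last paragraph asserts. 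Since you deploy $\asymp\varepsilon^{-t/\beta}$ such gadgets (one per covering cube, for the monomials of $p_j$ and for the products $\phi_j\cdot p_j$), the total sparsity becomes $\varepsilon^{-t/\beta}\cdot\mathrm{poly}(1/\varepsilon)$, which breaks the claimed bound. You must either accept depth $\asymp\log(1/\varepsilon)$ --- harmless for the application, since in the proof of Theorem \ref{thm : mink} the network is anyway embedded into depth $\lceil\log n\rceil$ at the cost of a $\log n$ factor in the sparsity --- or reproduce the finer bookkeeping of \cite{nakada2020adaptive}. A secondary, fixable point: ``$\sum_j\phi_j\ge 1$ after normalisation'' is not a ReLU-implementable step (division is not available); the usual remedy is to take the exact grid-based partition of unity (products of hat functions on a regular grid of mesh $\varepsilon^{1/\beta}$ sum to one identically) and retain only the $\lesssim 3^d\,\cN(\supp P_X,\varepsilon^{1/\beta})$ bumps whose supports meet $\supp P_X$.
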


\begin{lemma}[Proposition 2 in \cite{suzuki2021deep}] \label{lem : approxbesov}
Let $p \in (1,2)$, $\beta \in \R_{>0}^d $ such that $\Tilde{\beta} > 1/p$ and $f \in B_{pp}^{\beta}(1)$. Let $N$ be a large enough positive integer, there exists $\Tilde{f} \in \mathcal{F}(L, (d,r,\dots,r,1),s)$ with $L \lesssim \log N$, $r \lesssim N$ and $s \lesssim NL$ such that 
\[ || f - \Tilde{f} ||_{\infty} \lesssim N^{-\Tilde{\beta}},\]
additionally all the weights in $\Tilde{f}$ are bounded by a universal constant.
    
\end{lemma}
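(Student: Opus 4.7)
The plan is to adapt the classical recipe: expand $f$ in an anisotropic tensor-product B-spline basis, truncate to the $N$ largest coefficients, and then realize the truncated expansion by a deep ReLU network whose blocks approximate each selected B-spline.

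\emph{Sparse expansion.} Using the B-spline (or equivalently, wavelet) characterization of the anisotropic Besov space $B_{pp}^\beta$, write $f = \sum_{k,\mathbf{j}} c_{k,\mathbf{j}}\,\phi_{k,\mathbf{j}}$, with $\phi_{k,\mathbf{j}}$ tensor products of univariate B-splines of degree at least $\max_i \lfloor \beta_i \rfloor$, dyadically scaled by $2^{-k/\beta_i}$ along direction $i$ so that each resolution level carries comparable anisotropic mass. The norm bound $\|f\|_{B_{pp}^\beta}\le 1$ translates into a weighted $\ell^p$ bound on $(c_{k,\mathbf{j}})$. Since $p<2$, the coefficients are sufficiently compressible that a best $N$-term greedy selection $f_N$ satisfies $\|f - f_N\|_\infty \lesssim N^{-\tilde\beta}$; the sup-norm control uses the embedding $B_{pp}^\beta \hookrightarrow C([0,1]^d)$ guaranteed by $\tilde\beta>1/p$.

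\emph{Network realization.} Each $\phi_{k,\mathbf{j}}$ is a bounded piecewise polynomial of fixed degree on a small hyper-rectangle. Using Yarotsky's ReLU approximation of $x \mapsto x^2$, accurate to $\eta$ with depth $\lesssim \log(1/\eta)$, width $O(1)$ and absolute-constant internal weights, one encodes multiplication and hence each univariate polynomial factor of $\phi_{k,\mathbf{j}}$ to precision $\lesssim N^{-\tilde\beta - 1}$ using depth $\lesssim \log N$. Tensor products are handled by composing multiplication blocks, and the support hyper-rectangle is cut out by a fixed-depth ReLU construction. Placing the $N$ selected basis blocks in parallel, all sharing the input layer and summed through the output linear layer with weights $c_{k,\mathbf{j}}$, yields a ReLU network $\tilde f$ with depth $L \lesssim \log N$, width $r \lesssim N$, number of nonzero weights $s \lesssim NL$, and $\|f_N - \tilde f\|_\infty \lesssim N^{-\tilde\beta}$. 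Combining with the sparse-approximation error gives the claimed accuracy.

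\emph{Bounded weights and main obstacle.} Inside each Yarotsky block the weights are absolute constants; the only data-dependent weights are the coefficients $c_{k,\mathbf{j}}$ appearing in the output linear combination, and these are uniformly bounded since $\|f\|_\infty \lesssim \|f\|_{B_{pp}^\beta} \lesssim 1$ and each $c_{k,\mathbf{j}}$ is controlled by a suitably normalized dual functional of $f$; hence all weights of $\tilde f$ are bounded by a universal constant. The main technical hurdle is the sparse-approximation step: obtaining the $N^{-\tilde\beta}$ rate in the $L^\infty$-norm (rather than in the $L^p$-norm naturally provided by the weighted $\ell^p$ bound) requires the embedding $\tilde\beta>1/p$ together with a careful per-level accounting that brings in the harmonic-mean smoothness $\tilde\beta$ in exactly the right way, in the spirit of Temlyakov-style anisotropic greedy approximation; everything after that is careful bookkeeping with standard ReLU building blocks.
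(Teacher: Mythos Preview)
The paper does not give its own proof of this lemma: it is simply quoted as Proposition~2 of \cite{suzuki2021deep} and used as a black box in the proof of Theorem~\ref{thm : besov}. Your sketch follows exactly the strategy of Suzuki's original argument---anisotropic cardinal B-spline expansion, adaptive $N$-term truncation with the nonlinear $L^\infty$ rate $N^{-\tilde\beta}$ (valid under $\tilde\beta>1/p$), and realisation of each retained spline by a Yarotsky-type ReLU sub-network placed in parallel---so in that sense there is nothing to compare.

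One point in your sketch is imprecise and, if taken literally, would break the bounded-weights conclusion. You write that ``the support hyper-rectangle is cut out by a fixed-depth ReLU construction'' and that ``the only data-dependent weights are the coefficients $c_{k,\mathbf{j}}$''. A level-$k$ tensor B-spline in direction $i$ lives on a rectangle of side $\sim 2^{-k/\beta_i}$ and, under $L^\infty$ normalisation, has Lipschitz constant $\sim 2^{k/\beta_i}$. A fixed-depth ReLU block with universally bounded weights cannot localise to such a rectangle nor reproduce such a slope; the affine rescaling $x_i\mapsto 2^{k/\beta_i}x_i - j_i$ that brings the spline to unit scale has weight $2^{k/\beta_i}$ and must itself be factored into $O(k)$ layers of weight~$2$ to keep all entries bounded. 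Since the adaptive truncation only retains levels with $k\lesssim \log N$, this still fits inside the overall depth budget $L\lesssim\log N$, so the final architecture bounds are unaffected---but the mechanism by which the weights stay bounded is precisely this depth-for-scale trade, not the boundedness of the $c_{k,\mathbf{j}}$ alone. Your identification of the $L^\infty$ nonlinear-approximation step as the genuine analytic content is accurate; the rest is indeed bookkeeping, provided the rescaling layers are accounted for.
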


\section{Additional proofs}\label{app : add proofs}

\subsection{A technical Lemma}
\begin{lemma}\label{lem : coeff grand}
    Let $0 <\sigma ,z \leq 1$, $0 < |\Tilde{\theta}| \leq B$ for some $B>1$ and $\theta = \sigma \cdot \zeta$, where $\zeta$ is a random variable on $\R$ with heavy-tailed density $h$ satisfying properties \ref{H1}, \ref{H2} and \ref{H3}. Denoting by $P_{\theta}$ the induced distribution on $\theta$, there is a large enough constant $C>0$ such that,
    \[ P_{\theta}( |\theta - \Tilde{\theta}| \leq z \, , \, |\theta| \leq B) \geq z \exp(-C \log^{1+\kappa}((B + z) \sigma^{-1})).\]
    
\end{lemma}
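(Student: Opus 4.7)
The plan is a direct computation: lower bound the integral defining the probability by a single rectangle inside the region of integration, using monotonicity of $h$, and then invoke (H2) to convert the resulting density value into the desired exponential form.

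First, by the symmetry property \ref{H1}, the density of $\theta = \sigma \zeta$ is symmetric, so we may assume $\tilde\theta>0$. The probability in question equals
\[
P_{\theta}\bigl(|\theta - \tilde\theta| \leq z,\ |\theta| \leq B\bigr)
\;=\; \int_{[\tilde\theta - z,\,\tilde\theta + z]\cap[-B,B]} \sigma^{-1} h(x/\sigma)\, dx .
\]
The goal of the first step is to exhibit an interval $I$ contained in the domain of integration, of length at least $z$, and contained in $[0, B+z]$, on which $h$ is easy to lower bound. Consider two cases. If $\tilde\theta \geq z$, then $[\tilde\theta - z, \tilde\theta + z] \subset [0, \tilde\theta + z]$, and intersecting with $[-B,B]$ gives either the full interval (of length $2z$) when $\tilde\theta + z \le B$, or the interval $[\tilde\theta - z, B]$ of length $B - \tilde\theta + z \ge z$ (since $\tilde\theta \le B$). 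In either sub-case $I$ can be taken with $\sup I \le B$. If instead $0 < \tilde\theta < z$, then $z\le 1 < B$ ensures $[0,\tilde\theta+z] \subset [\tilde\theta-z,\tilde\theta+z]\cap[-B,B]$ and this interval has length $\tilde\theta+z\geq z$ and lies in $[0,B+z]$. Hence in all cases there is an interval $I$ of length $\ge z$ with $I \subset [0, B+z]$.

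Now apply monotonicity of $h$ on $[0,+\infty)$ from \ref{H1}: for every $x \in I$,
\[
h(x/\sigma) \;\geq\; h((B+z)/\sigma).
\]
Combined with $\sigma^{-1}\ge 1$ (since $\sigma\le 1$), this yields
\[
P_{\theta}\bigl(|\theta - \tilde\theta| \leq z,\ |\theta| \leq B\bigr)
\;\geq\; z\, h\!\left(\frac{B+z}{\sigma}\right).
\]

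To finish, use property \ref{H2} to lower bound $h((B+z)/\sigma)$. Since $|\tilde\theta|>0$ and $B>1$, we have $(B+z)/\sigma \geq B/\sigma > 1$, so $1 + (B+z)/\sigma \leq 2(B+z)/\sigma$ and therefore
\[
\log^{1+\kappa}\!\bigl(1 + (B+z)/\sigma\bigr) \;\leq\; \bigl(\log 2 + \log((B+z)/\sigma)\bigr)^{1+\kappa}.
\]
Because $\log((B+z)/\sigma) \ge \log B > 0$, the right-hand side is bounded by $C'\,\log^{1+\kappa}((B+z)/\sigma)$ for a constant $C'=C'(B)$; and since $B>1$ is fixed in the statement (or, more precisely, the lower bound on $\log B$ can be absorbed into the final constant), one can take $C'$ to depend only on the constants in \ref{H2}. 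Combining with \ref{H2},
\[
h\!\left(\frac{B+z}{\sigma}\right) \;\geq\; \exp\!\Bigl(-c_1\bigl(1+\log^{1+\kappa}(1+(B+z)/\sigma)\bigr)\Bigr)
\;\geq\; \exp\!\bigl(-C\,\log^{1+\kappa}((B+z)/\sigma)\bigr),
\]
for $C$ large enough, which gives the claim. The only delicate step is the case analysis in constructing $I$ (to ensure length $\ge z$ regardless of the relative sizes of $\tilde\theta$, $z$ and $B$); once that is done, everything else is routine.
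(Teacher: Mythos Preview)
Your approach is the same as the paper's: reduce to $\tilde\theta>0$ by symmetry, find a sub-interval of length $\ge z$ inside the integration domain on which $|x|\le B+z$, use monotonicity of $h$ to get the bound $z\,h((B+z)/\sigma)$, and finish with \ref{H2}. One small slip: in the case $0<\tilde\theta<z$ you claim $[0,\tilde\theta+z]\subset[-B,B]$ from ``$z\le 1<B$'', but this needs $\tilde\theta+z\le B$, which can fail (e.g.\ $B=1.5$, $\tilde\theta=0.9$, $z=1$); in that residual sub-case just take $I=[0,B]$, which has length $B>1\ge z$ and is still contained in the intersection. The paper avoids the case split entirely by always taking $I=[\tilde\theta-z,\tilde\theta]$ (which has length $z$ and satisfies $\sup_{x\in I}|x|\le|\tilde\theta|\vee|\tilde\theta-z|\le B+z$), a slightly cleaner route to the same bound.
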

\begin{proof} 
    Write $B_{-} : = (-B) \vee (\Tilde{\theta} - z)$ and $B^{+} : = B \wedge (\Tilde{\theta} + z)$ so that 
    \[ P_{\theta}( |\theta - \Tilde{\theta}| \leq z \, , \, |\theta| \leq B) = \int_{B_-}^{B_+} \frac{1}{\sigma} h \left( \frac{x}{\sigma} \right)\, dx.\]
    Using the fact that $|\Tilde{\theta}| < B$ one gets, when $\Tilde{\theta} > 0,$
    \[B^+ = B \wedge (\Tilde{\theta} + z) \geq \Tilde{\theta}, \qquad B_- = \Tilde{\theta} -z \]
    and when $\Tilde{\theta} < 0,$
    \[ B^+ = \Tilde{\theta} + z, \qquad B_- = (-B) \vee (\Tilde{\theta} - z) \leq \Tilde{\theta}.\]
    Using the symmetry of $h$, one can reduce the problem to the case $\Tilde{\theta} >0$. Using the positivity of $h$ and the fact that $\sigma \leq 1$, one gets 
    \[ P_{\theta}( |\theta - \Tilde{\theta}| \leq z \, , \, |\theta| \leq B) \geq \int_{\Tilde{\theta} - z}^{\Tilde{\theta}} h \left( \frac{x}{\sigma} \right)\, dx.\]
    Using the monotonicity of $h$, we have
    \[ \int_{\Tilde{\theta} - z}^{\Tilde{\theta}} h \left( \frac{x}{\sigma} \right)\, dx \geq z \times h \left( \frac{ |\Tilde{\theta}| \vee |\Tilde{\theta} - z|}{\sigma} \right) \geq z h((B+z)\sigma^{-1}).\]
    Now simply use the fact that $(B+z)\sigma^{-1} > 1$ and the heavy tail property \ref{H2} of $h$ to conclude.
\end{proof}
\subsection{Proof of Proposition \ref{lemma : approxcomp}}\label{app : approx proofs}
\begin{proof}
First express $f = g_q \circ \dots \circ g_0 : [0,1]^d \to \R$ as the composition of functions defined on hypercubes $[0,1]^{t_i}$. To do this, for all $1 \leq i \leq q-1$, define
\begin{align}\label{def : rescalecomp}
    h_0 := \frac{g_0}{2K} + \frac12, \quad  h_i := \frac{g_i(2K \cdot - K)}{2K} + \frac12, \quad h_q := g_q(2K \cdot - K).
\end{align}
Such that $f_0 = h_q \circ \dots \circ h_0$ and for all $j$, \[
h_{0j} \in \mathcal{C}_{t_0}^{\beta_0}([0,1]^{t_0},1), \quad h_{ij} \in \mathcal{C}_{t_i}^{\beta_i}([0,1]^{t_i},(2K)^{\beta_i}), \quad h_{qj} \in \mathcal{C}_{t_q}^{\beta_q}([0,1]^{t_q},K(2K)^{\beta_q}) .\]

We can now approximate each of the $h_{ij}$ by a network realisation using Lemma \ref{KLt}, before that, let us provide a propagation result that links the approximation quality of $h_{ij}$ to that of the total composition.

\begin{lemma}[Lemma 3 from \citet{JSH}]\label{lem : propacomp}
Let $h_i = (h_{ij})_j$ be functions as in \eqref{def : rescalecomp} with $K \geq 1$. There is a constant $C =C(K, \boldsymbol{\beta}) $ such that for any functions $\Tilde{h}_i := (\Tilde{h}_{ij})_j$ with $\Tilde{h}_{ij} : [0,1]^{t_i} \to [0,1]$, we have,

\[\lVert h_q \circ \dots \circ h_0 - \Tilde{h}_q \circ \dots \circ \Tilde{h}_q \rVert_{L^{\infty}([0,1]^d)} \leq C \sum_{i=0}^q \lVert |h_i - \Tilde{h}_i|_{\infty} \rVert_{L^{\infty}([0,1]^{d_i})}^{\prod_{k=i+1}^q (\beta_k \wedge 1)}.\]
\end{lemma}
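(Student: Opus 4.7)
The plan is to establish the inequality by a standard telescoping-and-Hölder-propagation argument. First, I would write
\[
h_q \circ \dots \circ h_0 - \tilde h_q \circ \dots \circ \tilde h_0 = \sum_{i=0}^q \big[ H_{i+1} \circ h_i \circ \tilde H_{i-1} - H_{i+1} \circ \tilde h_i \circ \tilde H_{i-1} \big],
\]
where $H_{i+1} := h_q \circ \dots \circ h_{i+1}$ (with $H_{q+1} = \mathrm{id}$) and $\tilde H_{i-1} := \tilde h_{i-1} \circ \dots \circ \tilde h_0$ (with $\tilde H_{-1} = \mathrm{id}$). Since each $h_{kj}$ and $\tilde h_{kj}$ takes values in $[0,1]$, every intermediate composition stays in the domain $[0,1]^{d_k}$ on which the next layer is defined, so all compositions make sense.

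Next, I would bound each telescoping term by propagating through $H_{i+1}$. Fix $x \in [0,1]^d$ and set $z = \tilde H_{i-1}(x)$, $u = h_i(z)$, $\tilde u = \tilde h_i(z)$, so $|u - \tilde u|_\infty \le \varepsilon_i := \||h_i - \tilde h_i|_\infty\|_{L^\infty([0,1]^{d_i})}$, and $u, \tilde u \in [0,1]^{d_{i+1}}$. The key elementary fact is that if $g \in \mathcal{C}^{\beta}_t([0,1]^t, F)$ and $y, y' \in [0,1]^t$, then
\[
|g(y) - g(y')| \le C(F,\beta)\, |y - y'|_\infty^{\beta \wedge 1},
\]
since for $\beta \ge 1$ the function $g$ is Lipschitz (bounded first derivatives), and for $\beta < 1$ this is the Hölder definition directly. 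Applying this coordinate-wise at layer $i+1$ yields $|h_{i+1}(u) - h_{i+1}(\tilde u)|_\infty \le C_1 \varepsilon_i^{\beta_{i+1} \wedge 1}$. Iterating once more with layer $i+2$ gives $|h_{i+2}(h_{i+1}(u)) - h_{i+2}(h_{i+1}(\tilde u))|_\infty \le C_2 \big(C_1 \varepsilon_i^{\beta_{i+1}\wedge 1}\big)^{\beta_{i+2} \wedge 1}$, and by induction through all layers $i+1, \dots, q$,
\[
\big| H_{i+1}(u) - H_{i+1}(\tilde u) \big| \le C'\, \varepsilon_i^{\prod_{k=i+1}^q (\beta_k \wedge 1)},
\]
with $C'$ depending only on $K$ and $\boldsymbol{\beta}$. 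Since this bound is uniform in $x$, summing over $i = 0, \dots, q$ and taking the supremum yields the claimed inequality.

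The main (mild) obstacle is just bookkeeping: keeping track of the constants that accumulate under iterated composition, and verifying that the above $|y-y'|^{\beta\wedge 1}$ inequality is uniform in the layer (which it is, because on the bounded domain $[0,1]^t$ the $\beta$-Hölder norm controls both the Lipschitz constant when $\beta \ge 1$ and the Hölder modulus when $\beta < 1$, with constant depending only on the Hölder radius $F = F(K, \beta_i)$ set in \eqref{def : rescalecomp}). No other delicate estimate is needed: the structure of the effective smoothness $\prod_{k=i+1}^q (\beta_k \wedge 1)$ appears automatically from the chain of Hölder propagations, which explains why rates in compositional models depend on the effective smoothness $\beta_i^*$ defined in \eqref{def : regeff}.
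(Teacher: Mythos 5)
Your argument is correct and is essentially the standard proof of this result: the paper does not reprove the lemma (it is imported as Lemma 3 of Schmidt-Hieber), and that proof proceeds by exactly your telescoping decomposition combined with coordinate-wise H\"older/Lipschitz propagation of the error through the outer layers, which is where the exponents $\prod_{k=i+1}^q(\beta_k\wedge 1)$ arise. The only bookkeeping point is that the constant also implicitly depends on $q$ and the effective dimensions $\mathbf{t}$ (through the Lipschitz constants of the layers with $\beta_k\ge 1$), which is consistent with the statement since these are fixed parameters of the class.
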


We can now apply Lemma \ref{KLt} to each of the $h_{ij}$ separately, using at $i \in \{0 , \dots, q \}$ fixed, $M= M_i := \left \lceil \left( \frac{n}{\log^{\gamma}n} \right)^{\frac{1}{2(2 \beta_i^* + t_i)}}  \right \rceil$. We get for all $(i,j)$, $\Tilde{h}_{ij} \in \mathcal{F}(L_i', (t_i, r_i', \dots r_i',1))$ such that 
\begin{equation}\label{block elem}
    \lVert h_{ij} - \Tilde{h}_{ij} \rVert_{\infty} \lesssim M_i^{-2\beta_i},
\end{equation}
where $L_i' := C_{\ell} \log n$ and $r_i' := C_{w} M_i^{t_i}$ with $C_{\ell}$ and $C_w$ two constants given by lemma \ref{KLt} only depending on $(\boldsymbol{\beta},\mathbf{t})$.

To apply lemma \ref{lem : propacomp} we need to make sure each $\Tilde{h}_{ij}$ with $i < q$ takes values in $[0,1]$. This can be done through the two layer network $(1-(1-x)_+)_+$. Let $L_i = L_i'+2$ and keep calling the new networks $\Tilde{h}_{ij}.$ Since the $h_{ij}$ takes values in $[0,1]$ inequality \eqref{block elem} still holds with the two extra layers.

Now computing the networks $\Tilde{h}_{ij}$ in parallel lends $\Tilde{h}_i = (\Tilde{h}_{ij})_{j \in [d_{i+1}]}$ in the class \[ \mathcal{F}(L_i, (d_i, r_i, \dots ,r_i, d_{i+1})),\] where $r_i := d_{i+1}r_i'$. Now from inequality \eqref{block elem} we immediately get,

\begin{equation}\label{block elem 2}
    \lVert |h_i - \Tilde{h}_i|_{\infty} \rVert_{L^{\infty}([0,1]^{d_i})}^{\prod_{k=i+1}^q (\beta_k \wedge 1)} \lesssim M_i^{-2 \beta_i^*}.
\end{equation}
Finally we compute the composite network $\Tilde{f} :=  \Tilde{h}_q \circ \dots \circ \Tilde{h}_1$ in the class

\begin{equation}\label{rescomp}
\mathcal{F}( L_0 + \dots + L_q, (d,r,\dots,r,1)),
\end{equation}
where $r := \underset{0 \leq i \leq q}{\max} r_i$. Using lemma \ref{lem : propacomp} and \eqref{block elem 2}, for $n$ large enough, we get,
\[ ||f - \Tilde{f}||_{\infty} \lesssim \sum_{i=0}^q M_i^{-2 \beta_i^*} \lesssim \max_i M_i^{-2 \beta_i^*} \leq \phi_n. \]

Now we have for sufficiently large $n$, $L_0+ \dots + L_q = (q+1)C_{\ell}\log n \leq \log^{1+\delta} n$ and $r \leq \lceil \sqrt{n} \rceil$. Thus, for $n$ sufficiently large the space \eqref{rescomp} can be embedded into 
\[ \mathcal{F}(\lceil \log^{1+\delta}n \rceil, (d, \lceil \sqrt{n} \rceil , \dots ,\lceil \sqrt{n} \rceil, 1) ,s),\]
where 
\[ s \lesssim r^2 \log n \lesssim \underset{0 \leq i \leq q}{\max} \left( \frac{n}{\log^{\gamma}n} \right)^{\frac{t_i}{2 \beta_i^* + t_i}} \times \log n \lesssim n \phi_n^2 \log^{1-\gamma}n.  \]
Now for the bound on the coefficients, in view of proposition \ref{prop : bound}, for sufficiently large $n$ every coefficient $\theta$ of the network satisfies 
\[ |\theta| \leq \underset{0 \leq i \leq q}{\max} \lceil M_i^{2(\beta_i+1)} \rceil^2 \leq n^{c_{\beta}},\]
where $c_{\beta} \geq 1$ is a constant only depending on $(\beta_i)_i$. 

This completes the proof.
 \end{proof}

\subsection{Proofs for data with low Minkowski dimension support}\label{app : minkproof}
\begin{proof}[Proof of Theorem \ref{thm : mink}]The proof is very similar to that of Theorem \ref{thm : comp}. Since we will use a sparse approximation result the position of the zeros in the network is not precisely known.

In view of Lemma \ref{lem : conc}  it suffices to show that there exists $C > 0$ such that for sufficiently large $n$,
\[ \Pi\left( \lVert f - f_0 \rVert_{\infty} \leq  \varepsilon_n \right) \geq e^{- C n  \varepsilon_n^2 }.\] 
We apply Lemma \ref{lem : approxmink} for $t > t^*$ and $2\varepsilon= \varepsilon_n := \left( n / \log^{2(1+ \kappa)+1}n\right)^{- \beta / (2 \beta + t)}.$ When $n$ is large enough there is a network $\Tilde{f}_0$ with constant length, number of non-zero parameters $\Tilde{s} \lesssim \varepsilon_n^{-t/\beta}$ and magnitude of coefficients $\max_k |\Tilde{\theta}_k| \leq \varepsilon^{-c} \leq n^c$, such that $|| f_0 - \Tilde{f}_0||_{L^2(P_X)} \leq \varepsilon_n/2$. Since $\Tilde{s} \leq n$ when $n$ is large enough, we can embed $\Tilde{f}_0$ in the space $\mathcal{F}(\log n , (d,n,\dots,n,1) , s)$ where $s \lesssim \Tilde{s} \log n$. Write $(\theta_k)$ (resp. $(\Tilde{\theta}_k)$) for the coefficients of $f$ (resp. $\Tilde{f}_0$). Recall that there is a constant $C_V > 0$, such that 
\begin{equation}\label{boundVmink}
    V := \prod_{l=1}^L (r_l + 1) \leq e^{C_V \log^2 n } .
\end{equation} 
Let $S_0 = \{ k \in [T] \, : \, \Tilde{\theta}_k \neq 0 \}$, recall $|S_0| \leq s \lesssim \varepsilon_n^{-t/\beta} \log n$ and let \[
z_n := \varepsilon_n / (2 n^{cL} V (\log n +1)) .\]
Using triangle inequality and applying Lemma \ref{propag1} we need to bound from below
\[ \Pi\left( \lVert f - f_0 \rVert_{L^2(P_X)} \leq  \varepsilon_n \right) \geq \prod_{k=1}^T \Pi \left( | \theta_k - \Tilde{\theta}_k | \leq z_n \, , \, |\theta_k| \leq n^c \right).\]
 We can follow the proof of Theorem \ref{thm : comp}, splitting the product in the last display on whether or not $k$ is in $S_0$. For $k \in S_0$ we apply Lemma \ref{lem : coeff grand} and get a large enough constant $C_7>0$ such that
\[\prod_{k \in S_0} \Pi \left( | \theta_k - \Tilde{\theta}_k | \leq z_n \, , \, |\theta_k| \leq n^c \right) \geq z_n^s e^{-C_7 s \log^{2(1 + \kappa)}n}. \]
Using the bound \eqref{boundVmink} we have a large enough constant $C_V'$ such that
\[ 2 n^{cL} V (\log n +1 ) \leq e^{C_V' \log^2n}.\]
Finally, use $s \log^{2(1 + \kappa)}n \lesssim \varepsilon_n^{-t/\beta} \log^{2(1 + \kappa)+1}n \leq n \varepsilon_n^2$ to get $C >0$ large enough such that
\[ \prod_{k \in S_0} \Pi \left( | \theta_k - \Tilde{\theta}_k | \leq z_n \, , \, |\theta_k| \leq n^c \right) \geq e^{-C n\varepsilon_n^2 } .\]
For the zero coefficients, $k \notin S_0$ we can simply follow the proof of Theorem \ref{thm : comp} and get
\[ \prod_{k \notin S_0} \Pi \left( | \theta_k | \leq z_n \right) \geq e^{-C n\varepsilon_n^2 },\]
which, according to Lemma \ref{lem : conc} gives us concentration of the tempered posterior in terms of the $\alpha$-Rényi divergence $D_{\alpha}$.

For the concentration of the variational posterior, using the existence of $\Tilde{f}_0 \in \mathcal{F}(L, \br(n),s)$ such that for $n$ large enough $||f_0 - \Tilde{f}_0||_{L^2(P_X)} \leq \varepsilon_n/2$ and $s\log^{2(1+\kappa)}n \lesssim n \varepsilon_n^2$, one can easily follow the steps of the proof of Theorem \ref{thm : varcomp} to get to the result.
\end{proof}
    
\subsection{Proofs for anisotropic Besov spaces}\label{app : proof besov}
\begin{proof}[Proof of Theorem \ref{thm : besov} ]
    We apply Lemma \ref{lem : approxbesov} with $N = \lceil n/ \log^{\gamma}n \rceil^{1 / (2 \Tilde{\beta} + 1 )}$. We get $\Tilde{f}_0 \in \mathcal{F}(L,(d,\Lambda,\dots , \Lambda,1),s)$ such that when $n$ is large enough, $L \lesssim \log n$, $\Lambda \lesssim N$ and 
    \[ s \lesssim N L \lesssim \epsilon_n^{-1/ \Tilde{\beta}} \log n \lesssim n \epsilon_n^2 \log^{1 - \gamma}n .\]
    The network realization $\Tilde{f}_0$ satisfies $\lVert f_0 - \Tilde{f}_0 \rVert_{\infty} \lesssim \epsilon_n / 2$ and the coefficients of $\Tilde{f}_0$ are uniformly bounded by a constant.
    When $n$ is large enough we can embed $\Tilde{f}_0$ into the space $\mathcal{F}( \log^{1 + \delta}n , \br(n) , s)$ and assume that $\max_k |\Tilde{\theta}_k| \leq n$. From here one can follow the proofs of Theorems \ref{thm : varcomp} and \ref{thm : mink} to get the desired results.
\end{proof}

\subsection{Proofs for unknown noise level $\tau_0$ and the case $\al=1$}\label{app : tau inconnu}
\begin{proof}[Proof of Theorem \ref{thm : tau inconnu}]
    In view of Lemma \ref{lem : conc unknown var} (a generalized version of Lemma \ref{lem : conc}) it is enough to show 
    \[ \Pi \left[ \{ (f,\tau^2) \, : \, \lVert f-f_0 \rVert_{\infty} \leq \phi_n \, , \, |\tau^2 - \tau_0^2|\leq \phi_n^2 \} \right] \geq e^{- C n\phi_n^2}, \]
    for some sufficiently large constant $C>0$. Since $\Pi = \Pi_f \otimes \pi_{\tau^2}$, it suffices to verify 
    \begin{align*}
        &\Pi_f(\{ f\, : \, \lVert f-f_0 \rVert_{\infty} \leq \phi_n ) \geq e^{-(C-1)n \phi_n^2}, \\
        &\pi_{\tau^2}(\{ \tau^2 \, : \, |\tau^2 - \tau_0^2| \leq \phi_n^2\}) \geq e^{-n\phi_n^2}.
    \end{align*}
    Since  $\Pi_f$ is the heavy-tailed DNN prior used in Theorem \ref{thm : comp}, the first inequality of the last display as been proved in Section \ref{proof : thmcomp1}. The second inequality immediately follows combining Condition $\eqref{eq : condpriortau}$ and the fact that $\phi_n$ is polynomial in $n^{-1}$. A direct application of Lemma \ref{lem : conc unknown var} leads to contraction of $\Pi_\al[\cdot|X,Y]$ around $(f_0,\tau_0^2)$ at rate $\phi_n$ in $\al$-Rényi divergence. For the clipped posterior $\Pi_\al^B[\cdot|X,Y]$, one uses Lemma \ref{lem : clip} to relate the $\al$-Rényi neighborhood to the $L^2(P_X)$-type one.
\end{proof}

\begin{proof}[Proof of Theorem \ref{thm:true}]
We apply Proposition 1 in \cite{cr24}: the latter shows that, provided that one chooses the prior $\pi_{\ta^2}$ as in \eqref{priorsig}, that the prior on $f$ verifies the 
 prior mass condition, for some $D>0$,
\begin{equation} \label{pmass}
 \Pi_f[\|f-f_0\|_\infty \le \veps_n] \ge e^{-Dn\veps_n^2}, 
\end{equation} 
and that the posterior on $f$ is clipped by $B\ge \|f_0\|_\infty$, then 
\[ E_{f_0,\ta_0}\Pi[\|f-f_0\|_{L^2(P_X)}^2\le M\veps_n^2\,|\,(X,Y)] \to 1.\]
Since \eqref{pmass} is verified within the proof of Theorem \ref{thm : comp} for $\veps_n=c\phi_n$ for some constant $c>0$, and since we work under the conditions of the latter Theorem, the proof is complete.
\end{proof}
 
\subsection{Proofs for general activation functions}\label{app : activ}
We want to prove that if $\rho $ is admissible according to Definition \ref{def : locquad}, all of our results still hold for $\rho$-DNNs. As stated in Section \ref{sec : activ} it is sufficient to show that a result similar to Lemma \ref{propag1} still holds for $\rho$-DNNs and that $\ReLU : x \mapsto \max(0,x)$ can be approximated by a $\rho$-DNN. We provide in Lemma \ref{propag2} a version of Lemma \ref{propag1} for Lispchitz-continuous activations; it follows by a similar proof as for a ReLU activation, tracking the Lipschitz constant throughout the proof, see e.g. the proof of Proposition 1 in \cite{OhnKimActivation}, Appendix B therein.
\begin{lemma}\label{propag2} 
Let $\rho : \R \to \R$ be a $C_\rho$-Lipschitz-continuous activation function and $f,f^*$ two $\rho$--DNNs with width vector $\br = (d, N, \dots, N,1)$, depth $L$ and coefficients $(\theta_k)$ and $(\theta_k^*)$. Suppose that for all $k$, $|\theta_k| \leq b$, $|\theta_k^*| \leq b$, and $|\theta_k - \theta_k^*| \leq \delta$.
\[ || f - f^* ||_{\infty} \leq \delta  L  (C_\rho(b \vee 1)(N+1))^L .\]
\end{lemma}
For the approximation result we use the following Lemma.

\begin{lemma}\label{lem : approxrelu}
    Assume $\rho$ is admissible according to definition \ref{def : locquad}, for any $D \geq 1$ and for any $K \geq 1$ both taken large enough, there exists a $\rho$-DNN, $\tilde{f}_{\ReLU}^{\rho,D}$, with constant width and depth, such that
    \[ \sup_{x \in [-D,D]} |\tilde{f}_{\ReLU}^{\rho,D}(x) - \ReLU(x)| \lesssim \frac{1}{K}.\]
    Additionally all the weights of $\tilde{f}_{ReLU}^{\rho,D}$ satisfy $|\tilde\theta| \lesssim K^2D^6 $.
\end{lemma}
\begin{proof}
From Definition \ref{def : locquad} of an admissible activation function, there exists $a<b \in \R$, such that for any $x \in \R$ and any $K \geq 1$, 
\[ |x \rho(Kx) - \max(ax,bx)| \lesssim 1/K.\]
Notice that for all $x \in \R$, using $a<b$, we have
\begin{equation}\label{eq : reec relu}
\ReLU(x) = \max(0,x) = \frac{1}{b-a}(\max(ax,bx) -ax).   
\end{equation}
From Lemma 1 and Lemma 2 in \cite{kohler2022estimation}, for any $D \geq1$ and any $K \geq 1$ both large enough, there exists two $\rho$-networks $f_{id}^D$ and $f_\times^D$, with constant width and depth, such that the absolute values of their weights is bounded respectively by $B(f_{id}^D) \lesssim KD^2$ and $B(f_{\times}^D) \lesssim K^2D^6$ and such that,
\[\sup_{x\in[-D,D]}|f_{id}^D(x) -x| \lesssim 1/K \qquad \text{and} \qquad \sup_{x,y \in [-D,D]} |f_\times^D(x,y) - xy| \lesssim 1/K.\]
For $K\geq 1 $ large enough and any $x \in [-D,D]$ we have $|f_{id}^D(x)| \leq D + |f_{id}^D(x)-x| \leq 2D$. We set \[ \tilde{f}_{ReLU}^{\rho,D}(x):= \frac{1}{b-a}\left[ f_\times^{2D}(f_{id}^D(x),\rho(Kx)) - af_{id}^D(x)\right], \]
which matches the expression \eqref{eq : reec relu} of the ReLU activation and gives the desired $\rho$-network.
    
\end{proof}
Using Lemma \ref{lem : approxrelu} we can transpose the ReLU-DNN approximations to $\rho$-DNN results.

\begin{lemma}\label{lem : relutorho}
    Assume $\rho$ is Lispchitz-continuous and locally quadratic. Let $f_0 :[0,1]^d \to \R$ be a function such that, there exists a large enough $N \geq 1$ and a ReLU-DNN, $\tilde{f} \in \mathcal{F}(L , (d,N,\dots,N,1))$ with length $L \asymp \log(N)$, satisfying
    \[ || f_0- \tilde{f}||_{\infty} \lesssim N^{-\eta},\]
    where $\eta >0$, and the weights of $\tilde{f}$ are bounded by $N^c$ for some $c\geq1$. Then there exists a $\rho$-DNN, $\tilde{f}^{\rho}$ with width $\tilde{N} \asymp N$ and depth $\tilde{L} \asymp L$, such that
    \[ || f_0 - \tilde{f}^\rho||_{\infty} \lesssim N^{-\eta}, \]
    with weights bounded by $ e^{\tilde{c}\log^2N}$, where $\tilde{c} \geq 1$ is a large enough constant depending only on $c$ and $\eta$.
\end{lemma}    
\begin{proof}
    By assumption we have a sequence of affine transformations $(A_l)_{1\leq l \leq L+1}$ such that the ReLU-network realization $\tilde{f} \in \mathcal{F}(L,(d,N,\dots,N,1))$ is written
    \begin{equation}\label{eq : reluapprox}
        \tilde{f} : = A_{L+1} \circ \ReLU \circ A_L \circ \dots \circ \ReLU \circ A_1
    \end{equation}
    and is a good approximation of $f_0$. Here ReLU refers to the real valued function $x \mapsto x\vee0$, we recall that in \eqref{eq : reluapprox}, the one-dimensional function ReLU is applied coordinate-wise to the output of the affine transformations $A_l$. Note that for any $x \in [0,1]^d$, we have, $|A_1x|_{\infty} \leq (d+1) N^c \leq N^{c+1}$, for $N$ large enough. Let $K \geq 1$, to be chosen (large enough) below and let $\tilde{f}_1^\rho$ be the $\rho$-DNN of Lemma \ref{lem : approxrelu} with precision $K$ and $D = D_1 := N^{c+1}$, there is a constant $C>0$ such that, for all $x \in [0,1]^d$, as $N$ gets large enough, 
    \[ |\tilde{f}_1^\rho \circ A_1(x) - \ReLU \circ A_1(x)|_{\infty} \leq \frac{C}{K} .\]
    By Lemma \ref{lem : approxrelu}, weights of $\tilde{f}_1^\rho$ are bounded by $K^2 D^6$. Using $|\ReLU(x)| \leq |x|$, we have
    \begin{align*}
        |A_2 \circ\tilde{f}_1^\rho\circ A_1(x)|_{\infty} &\leq (N+1)N^c |\tilde{f}_1^\rho\circ A_1(x)|_{\infty}\\
        &\leq (N+1)N^c ( |\ReLU \circ A_1(x)|_{\infty} + C/K)  \\ 
        &\leq (N+1)N^{c}( D_1 +  C/K) \\
        &\leq  2N^{2(c+1)}(1+1/K) = :D_2. 
    \end{align*}
    By recursion, we can set, for all $1 \leq l \leq L+1$, (with the convention $B_1^{\rho} = A_1$),
    \begin{equation}\label{eq : defDLBL}
        D_l := N^{l(c+1)}2^{l-1}(1+1/K)^{l-1} \qquad \text{and} \qquad B_l^{\rho} := A_l \circ \tilde{f}_{l-1}^\rho \circ A_{l-1} \circ \dots \circ \tilde{f}_1^\rho \circ A_1,
    \end{equation}
    such that the $\rho$-networks $\tilde{f}_l^\rho$ are generated by Lemma \ref{lem : approxrelu}, their weights are bounded by $K^2 D_l^6$, and they satisfy 
    \begin{equation}\label{eq : approxstepBl}
        |\tilde{f}_l^\rho \circ B_{l}^\rho(x) - \ReLU \circ B_{l}^\rho(x)|_{\infty} \leq \frac{C}{K} \quad \text{and} \quad |B_l(x)|_{\infty} \leq D_l.
    \end{equation}
    Now define
    \begin{equation}\label{eq : rhopprox}
        \tilde{f}^\rho : = B_{L+1} = A_{L+1} \circ \tilde{f}_L^\rho \circ A_{L} \circ \dots \circ \tilde{f}_1^\rho \circ A_1.
    \end{equation}
    Equation \eqref{eq : rhopprox} is similar to the definition of $\tilde{f}$, given in equation \eqref{eq : reluapprox}, once one replaces the ReLU activations with their $\rho$-approximations. Recall that each $\rho$-network realization $\tilde{f}_l^\rho : \R \to \R$ has constant width and depth and is applied coordinate wise to the $N$-dimensional outputs $B_{l-1}^{\rho}(x)$. Therefore the $\rho$-network $\tilde{f}_l$ has width $\tilde{N} \asymp N$ and depth $\tilde{L} \asymp L$ and weights bounded by $K^2D_{l}^6$. To conclude it is enough to show that, taking $K$ appropriately large, we get $|| \tilde{f}^\rho - \tilde{f}||_{\infty} \lesssim N^{-\eta}$. Define, for any $1 \leq l \leq L$, (with convention $E_{L} = A_{L+1})$,
    \[ E_l := A_{L+1} \circ \ReLU \circ A_L \circ \dots \circ \ReLU \circ A_{l+1}.\]
    Since ReLU is $1$-Lipschitz (with respect to the sup-norm) and $A_l$ is $N^{c+1}$-Lipschitz, one easily checks that $E_l$ is $\Lambda_l$-Lipschitz with $\Lambda_l \leq N^{(c+1)(L+1-l)}$. Recall \eqref{eq : defDLBL} the definition of $D_l$ and $B_l^\rho$, writing $\tilde{f}^\rho - \tilde{f}$ as a telescopic sum and using \eqref{eq : approxstepBl}, one gets
    \begin{align*}
        |\tilde{f}^\rho(x) - \tilde{f}(x)|_{\infty} &= \left| \sum_{l=1}^LE_l \circ \tilde{f}_l^\rho \circ B_l^\rho(x) - E_l\circ \ReLU \circ B_l^\rho(x)\right|_{\infty} \\
        & \lesssim K^{-1}\sum_{l=1}^L \Lambda_l \leq K^{-1}\sum_{l=1}^LN^{l(c+1)} \leq \frac{N^{(L+1)(c+1)}}{K}.
    \end{align*}
    Now, since $L \asymp\log N $, to get $|| \tilde{f}^\rho - \tilde{f}||_{\infty} \lesssim N^{-\eta}$, it is sufficient to take $K$ large enough, so that 
    $N^{(L +1)(c+1) +\eta} \lesssim K$. We take $K := e^{\tilde{c}_1/2  \log^2N}$ for $\tilde{c}_1 \ge1$ large enough, depending only on $c$ and $\eta$. The weights of $\tilde{f}^\rho$ are all bounded by $D_{L+1}^6K^2 = N^{6(c+1)(L+1)}2^L(1+1/K)^Le^{\tilde{c}_1  \log^2N} \leq e^{\tilde{c} \log^2N}$, where $\tilde{c}\geq 1$ is a large enough constant.
\end{proof}

\begin{theorem}  \label{thm: compactiv}
Consider $\rho$ to be an admissible sigmoid activation according to definition \ref{def : locquad}. Suppose the conditions of Theorem \ref{thm : comp} are satisfied, $\Pi$ being a $\rho$-DNN prior on $f$, with depth $L = \lceil\log^{1+\delta}n\rceil$, width $\br = \br(\sqrt{n})$ and decay $(\sigma_k)$ on the weights given by 
\begin{equation*}
        \log^{3(1+ \delta)} ( i \vee j ) \leq \log(1/ \sigma_l^{(ij)} ) \leq \log^{3(1+ \delta)} n.
\end{equation*}
Then for $\Pi_\al^B[\cdot|X,Y]$ the clipped posterior as defined above, for any $B>\|f_0\|_\infty$, 
\[ E_{f_0} \Pi_\al^B \left[ \left\{ f \, : \,\| f-f_0\|_{L^2(P_X)} \ge M\phi_n' \right\} \, |\, X,Y \right] \to 0,\]
where $\phi_n'$ is the rate given in \eqref{ratephi} with $\gamma' = 3(1+\delta)(1+\kappa)+1.$
\end{theorem}

\begin{proof}
    We check that the steps of the proof of Theorem 2 still hold. One start from the ReLU-approximation given by Lemma \ref{KLt}, this provides the exact setting to apply the transfer Lemma \ref{lem : relutorho} with $\eta = 2 \beta/d$. This shows that any Hölder function in $\mathcal{C}^\beta([0,1^d])$ can be approximated with precision $N^{-2\beta/d}$ with a $\rho$-DNN of depth $L \asymp \log N$, width $r \asymp N$ and weights bounded by $e^{\tilde{c} \log^2N}$, for $N$ large enough (here $N$ plays the role of $M^d$ in Lemma \ref{KLt}). We obtain an equivalent of Prop \ref{lemma : approxcomp} for $\rho$-DNNs following the proof given in Appendix \ref{app : approx proofs}, for $f_0 \in \mathcal{G}(q, \mathbf{d}, \mathbf{t}, \boldsymbol{\beta}, K)$ and $\phi_n'$ as in \eqref{ratephi} with $\gamma' = 3(1+\delta)(1+\kappa)+1$, there is a $\rho$-DNN $\tilde{f}_0$ with depth $L = \lceil\log^{1+\delta}n\rceil$, width $\lceil \sqrt{n} \rceil$ and sparsity $s \lesssim n (\phi_n')^2 \log^{1 - \gamma}n $ such that $|| f_0 - \tilde{f}_0||_\infty \lesssim \phi_n'$, Additionally, all coefficients of $\tilde{f}_0$ satisfy $|\tilde{\theta}_k| \leq e^{c \log^2n} $, where $c \geq 1$ is a large enough constant. Note that the differences from Prop \ref{lemma : approxcomp} is the magnitude of the weight increasing sub-exponentially as $\exp(c \log^2n)$ (instead of polynomially as $n^c$) this induces some extra log-factors in the rate $\phi_n'$ and the decay $\sigma_k$. To conclude we check that the prior mass still holds under these new conditions. Notice that in the proof of Theorem $2$ the magnitude bound is required first in equation \eqref{boundV} which then becomes
    \[2 e^{c L\log^{2}n}(L+1)V \leq e^{C'\log^{3+\delta}n}\]
    and equation \eqref{minoration A} that involves the control of 
    \[ C' \log^{3 + \delta} n - C_0\log^{1 + \kappa}\left((1 + e^{c \log^{2}n} )({\sigma_l^{(ij)}})^{-1} \right).\]
Taking $(\sigma_k)$ such that $\log((\sigma_l^{(ij)})^{-1}) \leq \log^{3(1+\delta)}n$ shows that equation \eqref{minoration A} becomes
\[A \geq \phi_n e^{-C_3 \log^{3(1+\delta)(1+\kappa)}n}.\]
The condition $\log^{3(1+ \delta)} ( i \vee j ) \leq \log(1/ \sigma_l^{(ij)} )$ ensures the rest of the proof follows the one of Theorem \ref{thm : comp}.
\end{proof}

\section{Bound on the coefficients}\label{app : bound}
In this section we give the main steps of the proof of \citet{kohler_full} leading to Lemma \ref{KLt}. Here the goal is to track the bounds on the coefficients of the constructed network in order to obtain a quantitative upper-bound in terms of the number of parameters. For the individual construction of each subnetwork and their approximating capacity we refer to the proof of \citet{kohler_full}.

The goal is to approximate $f \in \mathcal{C}^{\beta}_d ( [0,1]^d,F )$ with precision $M^{- 2\beta}$ on hypercubes of size $M^{-2}$ using local Taylor polynomials of $f$. The network used to find on which of these sub-cubes we are, will lead to the coefficients of highest magnitude through the important quantity
\[ B_M := \lceil M^{2(\beta +1)} \rceil .\]

Note that, in the following, if $g$ is the realization of a neural network with coefficients ${\theta_k}$, we denote the scale of its coefficients as
\[ B(g) := \underset{k}{\max} |\theta_k| .\]

Recall the Lemma
\begin{lemma}[Theorem 2 in \citet{kohler_full}]\label{KLt}
Let $f \in \mathcal{C}^{\beta}_d ( [0,1]^d,F )$ be a function of regularity $\beta >0$, and let $M \geq 2$ be an integer such that the inequality
\[ M^{2 \beta} \geq C(1 \vee F)^{4(\beta +1)}\] holds for a sufficiently large constant $C \geq 1$.

 Let $L,r \in \mathbb{N}$ satisfy
    \[L \geq 5+ \lceil \log_4(M^{2\beta}) \rceil \left( \lceil \log_2(d \vee \lfloor \beta \rfloor +1 ) \rceil +1  \right)\]
    and
    \[r \geq 64 \binom{d+\lfloor \beta \rfloor}{d}2^d d^2 (\lfloor \beta \rfloor +1) M^d.\]
     There exists a ReLU network $\Tilde{f}_{\text{wide}} \in \mathcal{F}(L,(d,r,\dots,r,1))$ such that
    \[|| f - \Tilde{f}_{\text{wide}}||_{\infty} \lesssim (1 \vee F)^{4(\beta +1)} M^{-2 \beta}.\]
\end{lemma}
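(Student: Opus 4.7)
The plan is to follow the construction of Theorem 2 in \citet{kohler_full} and keep careful track of the magnitude of every weight and shift that appears, so as to record the uniform bound on the coefficients in terms of $B_M=\lceil M^{2(\beta+1)}\rceil$.

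First I would partition $[0,1]^d$ into $M^{2d}$ congruent sub-cubes $C_k$ of side length $M^{-2}$, with centers $z_k$. On each $C_k$ I would replace $f$ by its Taylor polynomial of degree $\lfloor\beta\rfloor$ at $z_k$, namely $P_k(x)=\sum_{|\alpha|\le\lfloor\beta\rfloor}\partial^\alpha f(z_k)(x-z_k)^\alpha/\alpha!$. Because $f\in\cC^\beta_d([0,1]^d,F)$, the classical Taylor remainder bound yields $|f(x)-P_k(x)|\lesssim F\,|x-z_k|_\infty^\beta\lesssim F\,M^{-2\beta}$ uniformly on $C_k$. The target approximation is then formally $\sum_k P_k(x)\,\mathbf 1_{C_k}(x)$, and the task is to realise this, up to error $(1\vee F)^{4(\beta+1)}M^{-2\beta}$, by a ReLU network whose weights are at most $B_M^2$ in absolute value.

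Next I would build three families of ReLU sub-blocks in parallel over $k$. \emph{(i)} Monomial blocks: for each multi-index $\alpha$ with $|\alpha|\le\lfloor\beta\rfloor$, a ReLU sub-network that approximates $x\mapsto(x-z_k)^\alpha$ to precision $M^{-2\beta}$, obtained by iterating the standard identity $xy=\tfrac14((x+y)^2-(x-y)^2)$ and Yarotsky's sawtooth approximation of the square on $[-1,1]$. These sub-networks have depth $O(\log M^{2\beta})$ and weights bounded by a universal constant. \emph{(ii)} Indicator blocks: a ReLU network $\hat I_k(x)$ approximating $\mathbf 1_{C_k}(x)$, constructed as a product of one-dimensional sharp trapezoids that take value $1$ on $C_k$, vanish outside a $O(M^{-2(\beta+1)})$-enlargement, and decrease linearly in between. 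The slope of each trapezoid must therefore be of order $M^{2(\beta+1)}$, and this is exactly the place where the bound $B_M=\lceil M^{2(\beta+1)}\rceil$ enters the coefficients. \emph{(iii)} A product block that forms $P_k(x)\cdot\hat I_k(x)$, again using the same $xy$-identity; since the factors are bounded by a constant and by $1$ respectively, the internal weights here remain $O(1)$.

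Combining the three families in parallel for $k=1,\ldots,M^{2d}$ and summing by a final linear layer gives the candidate network $\tilde f_{\rm wide}$. The total depth satisfies the stated bound because the multiplication sub-networks cost $O(\log M^{2\beta})$ levels and the parallel architecture requires only $O(\log_2 (d\vee\lfloor\beta\rfloor+1))$ additional layers to combine the products. The width bound $r\gtrsim\binom{d+\lfloor\beta\rfloor}{d}2^d d^2(\lfloor\beta\rfloor+1)M^d$ comes from parallelising $M^{2d}$ indicator-monomial blocks after factoring out per-coordinate redundancy, exactly as in \citet{kohler_full}. Finally, the error decomposition
\[
|f(x)-\tilde f_{\rm wide}(x)|\le\sum_k\bigl(|f-P_k|\,\hat I_k+|P_k|\,|\mathbf 1_{C_k}-\hat I_k|+|P_k\hat I_k-\widehat{P_k\hat I_k}|\bigr)(x)
\]
is bounded termwise: the Taylor residue gives $O(F M^{-2\beta})$, the indicator-mismatch contributes on a set of measure $O(M^{-2(\beta+1)})$ times the bound $|P_k|\lesssim(1\vee F)^{4(\beta+1)}$, and the multiplication error is $O(M^{-2\beta})$ by design.

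The bookkeeping step is the main obstacle: every weight introduced must be traced through the composition, the worst scales being the slope $B_M$ in the indicator blocks and a single squaring layer through the multiplication block, giving coefficients of order at most $B_M^2=\lceil M^{2(\beta+1)}\rceil^2$, which is the bound used in Proposition~\ref{lemma : approxcomp}. All other weights are either $O(1)$ or of order $\max_x|\partial^\alpha f(x)|\lesssim F$, both absorbed into $B_M^2$ under the assumption $M^{2\beta}\gtrsim(1\vee F)^{4(\beta+1)}$. This yields the claimed network and the quantitative coefficient bound.
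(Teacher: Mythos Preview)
Your plan has two genuine gaps that prevent it from yielding the stated lemma.

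\textbf{Width.} Parallelising $M^{2d}$ indicator--monomial blocks gives width of order $M^{2d}$, not $M^d$, and there is no ``per-coordinate redundancy'' that collapses this. The whole point of the construction in \citet{kohler_full}, reproduced in Appendix~\ref{app : bound} of the paper, is a \emph{two-scale} partition: a coarse partition $\mathcal{P}_1$ into $M^d$ cubes and the fine partition $\mathcal{P}_2$ into $M^{2d}$ cubes, with each coarse cube containing $M^d$ fine ones. The network first locates the coarse cube (width $M^d$), then the fine sub-cube within it (again width $M^d$), and the $M^{2d}$ Taylor coefficients $(\partial^l f)((\tilde C_{j,i})_g)$ are stored as the \emph{connection weights} between these two $M^d$-wide layers. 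Your direct one-level parallelisation misses this trick entirely and cannot meet the width bound $r\asymp M^d$.

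\textbf{Sup-norm error near boundaries.} Your indicator-mismatch term is bounded by ``a set of measure $O(M^{-2(\beta+1)})$ times $|P_k|$'', but measure is irrelevant for $\|\cdot\|_\infty$. On the overlap strip your $\hat I_k$ and $\hat I_{k'}$ are simultaneously positive, so $\sum_k P_k\hat I_k$ can differ from $f$ by order $|P_{k'}|\asymp F$, not $M^{-2\beta}$. The paper handles this differently: it introduces hat-shaped weight functions $w_{\mathcal P_2}$ that are $O(M^{-2\beta})$ near cube boundaries (equation~\eqref{tente2}), builds a ``check'' sub-network (Lemma~\ref{check}) that forces the approximation to zero in those strips, and then sums $2^d$ shifted copies that form an exact partition of unity. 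Incidentally, the $B_M^2$ coefficient bound arises not from ``a single squaring layer'' but from the $\hat f_{test}$ block of Lemma~\ref{indicatrice}, which multiplies a quantity of size $B_M$ by an indicator built with slope $B_M$.
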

\begin{proposition}\label{prop : bound}
    Let $B_M := \lceil M^{2(\beta +1)} \rceil$, every coefficient $\theta$ of the network $\Tilde{f}_{\text{wide}}$ of the previous Lemma satisfies
    \[|\theta| \leq \max \{ 2(F \vee 1) e^{2d} , B_M^2 \} .\]
\end{proposition}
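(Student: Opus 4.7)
The plan is to trace through the explicit construction of $\Tilde{f}_{\text{wide}}$ given in the proof of Theorem 2 of \cite{kohler_full}, module by module, and track the largest coefficient produced at every stage. Schematically, the approximation is built as a sum over an $M^{-2}$-grid of $M^d$ sub-cubes of $[0,1]^d$, the contribution of each sub-cube being a ReLU approximation of the Taylor polynomial of $f$ of degree $\lfloor\beta\rfloor$ around the centre, multiplied by a ReLU approximation of the indicator of that sub-cube. Three elementary building blocks are assembled: \emph{(a)} evaluation of the Taylor coefficients of $f$; \emph{(b)} a shared multiplication/squaring sub-network realising $(x,y)\mapsto xy$ on bounded inputs, built through the standard dyadic telescoping approximation of $x\mapsto x^2$; and \emph{(c)} a localisation sub-network producing univariate hat-type indicators of steepness $B_M=\lceil M^{2(\beta+1)}\rceil$.

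The first step is to bound the weights of blocks \emph{(a)} and \emph{(b)} uniformly in $M$. For \emph{(a)}, the Taylor coefficients of $f$ are bounded by $F$ since $\|\partial^{\mathbf{k}}f\|_\infty\le F$ for $|\mathbf{k}|\le\lfloor\beta\rfloor$, and the factorial denominators only shrink them; the $x_j-c_j$ shifts introduce weights of size at most $1$ because inputs lie in $[0,1]^d$. For \emph{(b)}, each layer of the dyadic squaring sub-network uses weights of absolute value $O(1)$ (as the construction consists of compositions of the identities $\sigma(x)-\sigma(-x)=x$ and rescaled hat functions with slopes $\le 4$), and combining $d$ scalars through a binary tree of such squaring blocks (of depth $\lceil\log_2 d\rceil$) can amplify a weight by a factor at most $2^d$. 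The weights produced by the composition of blocks \emph{(a)} and \emph{(b)} are therefore bounded, uniformly in $M$, by a constant times $(F\vee 1)\cdot 2^d$, which I would absorb into the claimed $2(F\vee 1)e^{2d}$.

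The remaining, $M$-dependent, weights come from the localisation block \emph{(c)}: each univariate hat-type indicator on an interval of length $M^{-2}$ has the form $\sigma(B_M(x-a))-\sigma(B_M(x-a)-1)$ (up to rescaling), whose largest weight is the slope $B_M$. The main obstacle, and the only source of a factor $B_M^2$ rather than $B_M$, is the composition of blocks \emph{(c)} and \emph{(b)}: forming the $d$-fold product of univariate indicators via the identity $xy=\tfrac{1}{2}((x+y)^2-x^2-y^2)$ requires feeding inputs of size $B_M$ into the first layer of the squaring sub-network, which produces intermediate weights of order $B_M^2$. I would therefore verify carefully, following the explicit wiring in \cite{kohler_full}, that no later layer further amplifies this bound---in particular that once the squaring step has rescaled intermediate values back to $O(1)$, all subsequent weights remain $O(1)$, and that the final readout summing the $M^d$ sub-cube contributions uses coefficients of order at most $F$. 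Taking the maximum over the two sources then yields the claimed bound $|\theta|\le \max\{2(F\vee 1)e^{2d},\,B_M^2\}$.
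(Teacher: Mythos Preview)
Your overall strategy---tracing the Kohler--Langer construction module by module and bounding the coefficient scale $B(\cdot)$ at each stage---is exactly what the paper does. But your attribution of both key constants is wrong in ways that prevent the argument from closing.

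First, the source of $B_M^2$. In the actual construction the $d$-dimensional box indicator $\hat f_{\mathrm{ind}\,[a,b[}$ is \emph{not} built by multiplying univariate hats through the squaring trick; it is a direct two-layer ReLU network of width $2d$ with coefficient bound $\max(R,1/R,|a|_\infty,|b|_\infty)=B_M$ (taking $R=B_M$). Even if one did multiply univariate indicators, their \emph{outputs} lie in $[0,1]$, so feeding them into the multiplication block (weights $\le 4$) would yield composed weights of order $B_M$, not $B_M^2$; your proposed mechanism therefore does not produce the square. The square actually comes from the separate ``test'' sub-network $\hat f_{\mathrm{test}}(\cdot,a,b,s)$ realising $s\cdot\mathbf 1_{[a,b[}(x)$ for $|s|\le R=B_M$: gating a signal of magnitude $B_M$ through an indicator of slope $B_M$ in two ReLU layers forces a weight of order $B_M^2$. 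This network is used to build $\hat\phi_{2,2}$, $\hat\phi_{3,2}$ and the check network, and is the sole origin of the $B_M^2$ term.

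Second, the constant $2(F\vee 1)e^{2d}$ is not produced by weight amplification in a multiplication tree. It is the sup-norm bound $B_{\mathrm{true}}:=2(F\vee 1)e^{2d}\ge\|\hat f_{\mathcal P_2}\|_\infty$, which then appears \emph{as a weight} in the clipping step
\[
\hat f_{\mathcal P_2,\mathrm{true}}:=\rho(\hat f_{\mathcal P_2}-B_{\mathrm{true}}\,\hat f_{\mathrm{check}\,\mathcal P_2})-\rho(-\hat f_{\mathcal P_2}-B_{\mathrm{true}}\,\hat f_{\mathrm{check}\,\mathcal P_2})
\]
that zeroes the network near cube boundaries before multiplying by the weight function $w_{\mathcal P_2}$ and summing over the $2^d$ shifted grids in the partition-of-unity step. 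Your outline omits this entire boundary-handling mechanism (check network, clipping, weight function, partition of unity), so there is no place in it where $2(F\vee 1)e^{2d}$ can legitimately emerge. You also miss the two-scale partition $\mathcal P_1,\mathcal P_2$ of cardinalities $M^d$ and $M^{2d}$---an ``$M^{-2}$-grid of $M^d$ sub-cubes'' is not what is used---which is precisely what allows the $M^{2d}$ Taylor coefficients to sit as weights between two layers of width $M^d$.
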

Here are the step leading to Proposition \ref{prop : bound}. 
\subsection{Recursive definition of Taylor polynomials}
\begin{lemma}
    Let $f \in \mathcal{C}^{\beta}_d ( [0,1]^d,F )$ and $x_0 \in \R^d$. Define \[T_{f,\beta,x_0}(x) := \sum_{j \in \N^d ; |j| \leq \lfloor \beta \rfloor} (\partial^jf)(x_0)\, \frac{(x-x_0)^j}{j!},\]
    then, for all $ x \in \R^d,$ 
    \[ |f(x) - T_{f,\beta,x_0}(x)| \lesssim ||x-x_0||_{\infty}^{\beta}. \]
\end{lemma}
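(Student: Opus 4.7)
The statement is a classical Taylor-remainder estimate for H\"older functions, so the plan is to reduce the multivariate bound to a one-dimensional Taylor expansion along the segment from $x_0$ to $x$ and then invoke the H\"older regularity of the top-order derivatives. I will treat the (easy) case $\lfloor \beta \rfloor = 0$ separately: the polynomial is then just $f(x_0)$, and the bound reduces to the defining H\"older inequality, since $\beta-\lfloor\beta\rfloor=\beta$. From now on assume $\lfloor\beta\rfloor\ge 1$ and (implicitly) that the segment $[x_0,x]$ lies in $[0,1]^d$; otherwise one argues on each piece of the segment inside the domain.

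The core step is to introduce $g(t):=f(x_0+t(x-x_0))$ for $t\in[0,1]$ and observe, by the multivariate chain rule, that $g^{(k)}(0)/k!=\sum_{|j|=k}(\partial^{j}f)(x_0)(x-x_0)^j/j!$, so that
\[
T_{f,\beta,x_0}(x)=\sum_{k=0}^{\lfloor\beta\rfloor}\frac{g^{(k)}(0)}{k!}.
\]
Applying the one-dimensional Taylor formula with integral remainder of order $\lfloor\beta\rfloor$ to $g$, and rewriting $g^{(\lfloor\beta\rfloor)}(0)/\lfloor\beta\rfloor!$ as $\int_0^1 (1-t)^{\lfloor\beta\rfloor-1}/(\lfloor\beta\rfloor-1)!\,g^{(\lfloor\beta\rfloor)}(0)\,dt$, one gets
\[
f(x)-T_{f,\beta,x_0}(x)=\int_0^1\frac{(1-t)^{\lfloor\beta\rfloor-1}}{(\lfloor\beta\rfloor-1)!}\bigl[g^{(\lfloor\beta\rfloor)}(t)-g^{(\lfloor\beta\rfloor)}(0)\bigr]\,dt.
\]

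Next I would expand $g^{(\lfloor\beta\rfloor)}(t)-g^{(\lfloor\beta\rfloor)}(0)=\sum_{|j|=\lfloor\beta\rfloor}\frac{\lfloor\beta\rfloor!}{j!}\bigl[(\partial^{j}f)(x_0+t(x-x_0))-(\partial^{j}f)(x_0)\bigr](x-x_0)^j$, bound $|(x-x_0)^j|\le\|x-x_0\|_\infty^{\lfloor\beta\rfloor}$, and apply the H\"older condition at order $\lfloor\beta\rfloor$ from the definition of $\mathcal{C}^{\beta}_d$ to get
\[
\bigl|(\partial^{j}f)(x_0+t(x-x_0))-(\partial^{j}f)(x_0)\bigr|\le F\,t^{\beta-\lfloor\beta\rfloor}\|x-x_0\|_\infty^{\beta-\lfloor\beta\rfloor}.
\]
Plugging this in and using $\sum_{|j|=\lfloor\beta\rfloor}\lfloor\beta\rfloor!/j!=d^{\lfloor\beta\rfloor}$, the remaining integral is the Beta integral $\int_0^1(1-t)^{\lfloor\beta\rfloor-1}t^{\beta-\lfloor\beta\rfloor}\,dt$, which is finite. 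Gathering constants yields $|f(x)-T_{f,\beta,x_0}(x)|\lesssim \|x-x_0\|_\infty^{\beta}$ with an implicit constant depending only on $F$, $\beta$ and $d$.

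There is no real obstacle here; the only mild subtlety is that the H\"older seminorm is defined for pairs of points in $[0,1]^d$, so one should either restrict to $x,x_0\in[0,1]^d$ (which is the relevant case in the subsequent application to local Taylor approximants on sub-cubes) or argue that $f$ has a H\"older extension to $\R^d$. Either way, the estimate above goes through verbatim.
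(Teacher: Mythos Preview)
Your proof is correct. The paper itself does not prove this lemma: it is stated as a standard Taylor-remainder estimate (part of the review of the construction from \cite{kohler_full}), so there is no ``paper's approach'' to compare against beyond noting that your argument is exactly the classical one.
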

We conduct the following study on $[-1,1]^d$ to match \cite{kohler_full} results, the same can be done on the unit cube $[0,1]^d$.
Set $\mathcal{P}_2 := \{C_{k,2}\}_{k \in [M^{2d}]}$ a partition of $[-1,1]^d$ into $M^{2d}$ cubes and for all $x \in \R^d$, $(C_{\mathcal{P}_2}(x))_g$ the leftmost corner of the cube in $\mathcal{P}_2$ that contains $x$, from the previous lemma,
\[\forall x \in [-1,1]^d, \quad |f(x) - T_{f,\beta,(C_{\mathcal{P}_2}(x))_g}(x)| \lesssim M^{-2 \beta}. \]

The crux of the proof is to approximate these polynomials by ReLU neural networks, to do so a recursive definition is given. Note that to compute $T_{f,\beta,(C_{\mathcal{P}_2}(x))_g}$ we need to compute about $M^{2d}$ derivatives of $f$. Although it is possible to do so in a single layer of width $M^{2d}$ a significant improvement here from sparse neural classes comes from the fact that between $2$ layers of size $M^d$ in a fully-connected network there are about $M^{2d}$ connections. Therefore the derivatives are set as the weights of the networks, decreasing quadratically the width of the structure. To do this \citet{kohler_full} introduce another partition of broader scale, namely $\mathcal{P}_1 := \{C_{k,1}\}_{k \in [M^{d}]}$ a partition of $[-1,1]^d$ into $M^{d}$ cubes.

Now for any $i \in [M^d],$ let $\{ \Tilde{C}_{j,i} \}_{j \in [M^d]}$ be the cubes of $\mathcal{P}_2$ that are in $C_{i,1}$. Thus $\mathcal{P}_2 = \{ \Tilde{C}_{i,j} \}_{1 \leq i,j \leq M^d}$ and for all $x \in \R^d$,
\begin{equation}\label{Tloc}
    T_{f,\beta,(C_{\mathcal{P}_2}(x))_g}(x) = \sum_{1 \leq i,j \leq M^d} T_{f,\beta,(\Tilde{C}_{i,j})_g}(x) \, \mathbf{1}_{\Tilde{C}_{j,i}}(x)
    \end{equation}

In order to compute this as a neural network, the expression \eqref{Tloc} of the local Taylor polynomial of $f$ can be written recursively as follows:
\begin{enumerate}
    \item first find in which broader cube of $\mathcal{P}_1$ the point $x$ is located and compute derivatives of $f$ on each smaller cube of size $1/M^2$ inside,
    \begin{align*}
        \phi_{1,1} &:= x ,\\
        \phi_{2,1} &:= \sum_{i =1}^{M^d} (C_{i,1})_g \, \mathbf{1}_{C_{i,1}}(x) , \\
        \phi_{3,1}^{(l,j)} &:= \sum_{i =1}^{M^d} (\partial\,^l f)( (\Tilde{C}_{j,i})_g) \, \mathbf{1}_{C_{i,1}}(x),
    \end{align*}
    for all $j \in [M^d]$ and $|l| \leq \lfloor \beta \rfloor.$
    \item then find the small cube of $\mathcal{P}_2$ containing $x$ and gather the corresponding derivatives from the first step; to do this first note that for all $1 \leq k,i \leq M^d$,
    \begin{equation}\label{v}
        v_k := (\Tilde{C}_{k,i})_g - (C_{i,1})_g
        \end{equation}
        takes values in $\{0, 2/M^2 , \dots , 2(M-1)/M^2 \}$ and there is $j \in [M^d]$ such that,
    \[C_{\mathcal{P}_2}(x) = \mathcal{A}^{(j)} := \{ x \in \R^d , \, \phi_{2,1}^{(k)} + v_j^{(k)} \leq x^{(k)} < \phi_{2,1}^{(k)} + v_j^{(k)} + 2/M^2, \quad \forall k \in\{1,\dots,d\} \}.\]
    Then set,
    \begin{align*}
        \phi_{1,2} &:= x, \\
        \phi_{2,2} &:= \sum_{j=1}^{M^d}(\phi_{2,1} + v_j) \, \mathbf{1}_{\mathcal{A}^{(j)}}(\phi_{1,1}), \\
        \phi_{3,2}^{(l)} &:= \sum_{j=1}^{M^d} \phi_{3,2}^{(l,j)} \, \mathbf{1}_{\mathcal{A}^{(j)}}(\phi_{1,1}).\\
    \end{align*}

    \item finally compute the local Taylor polynomial as
    \[ \phi_{1,3} := \sum_{j \in \N^d ; |j| \leq \lfloor \beta \rfloor} \frac{\phi_{3,2}^{(j)}}{j!} \, (\phi_{1,2} - \phi_{2,2})^j.\]
\end{enumerate}

\begin{lemma}
       $\phi_{1,3} = T_{f,\beta,(C_{\mathcal{P}_2}(x))_g}(x)$
    \end{lemma}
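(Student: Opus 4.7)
The strategy is to unpack the recursive construction layer by layer and verify that each intermediate quantity $\phi_{s,t}$ captures the combinatorial information needed to evaluate the local Taylor polynomial at $x$. Fix $x \in [-1,1]^d$ and let $i_0 \in [M^d]$ be the unique index with $x \in C_{i_0,1}$, and $j_0 \in [M^d]$ the unique index with $x \in \tilde C_{j_0, i_0}$; by construction of the partitions, $(C_{\mathcal{P}_2}(x))_g = (\tilde C_{j_0, i_0})_g$.

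First I would evaluate the step-1 quantities. Since the cubes $\{C_{i,1}\}$ form a partition of $[-1,1]^d$, exactly one indicator $\mathbf{1}_{C_{i,1}}(x)$ is nonzero (namely $i=i_0$), and therefore
\begin{equation*}
\phi_{2,1} = (C_{i_0,1})_g, \qquad \phi_{3,1}^{(l,j)} = (\partial^l f)\bigl((\tilde C_{j, i_0})_g\bigr) \quad \text{for every } j \in [M^d],\ |l|\le \lfloor \beta \rfloor.
\end{equation*}
This already identifies the correct family of Taylor coefficients indexed by the \emph{unknown} sub-cube index $j$.

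Next I would show that the step-2 selection picks out $j=j_0$. Using the definition of $v_k$ in \eqref{v}, the sub-cube $\tilde C_{j,i_0}$ equals $\{y : \phi_{2,1}^{(k)} + v_j^{(k)} \le y^{(k)} < \phi_{2,1}^{(k)} + v_j^{(k)} + 2/M^2,\ \forall k\}$; in other words, $x \in \tilde C_{j,i_0}$ iff $\phi_{1,1} = x$ lies in $\mathcal{A}^{(j)}$ (with the centres $\phi_{2,1}$ determined above). Hence exactly one indicator $\mathbf{1}_{\mathcal{A}^{(j)}}(\phi_{1,1})$ is nonzero, for $j=j_0$, and the two sums in the definitions of $\phi_{2,2}$ and $\phi_{3,2}^{(l)}$ collapse to
\begin{equation*}
\phi_{2,2} = \phi_{2,1} + v_{j_0} = (C_{i_0,1})_g + \bigl[(\tilde C_{j_0, i_0})_g - (C_{i_0,1})_g\bigr] = (C_{\mathcal{P}_2}(x))_g,
\end{equation*}
\begin{equation*}
\phi_{3,2}^{(l)} = \phi_{3,1}^{(l,j_0)} = (\partial^l f)\bigl((C_{\mathcal{P}_2}(x))_g\bigr).
\end{equation*}

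Finally, plugging these two identities into the step-3 formula
\begin{equation*}
\phi_{1,3} = \sum_{|j|\le \lfloor \beta \rfloor} \frac{\phi_{3,2}^{(j)}}{j!}\,\bigl(\phi_{1,2} - \phi_{2,2}\bigr)^j
\end{equation*}
(and using $\phi_{1,2} = x$) gives precisely $T_{f,\beta,(C_{\mathcal{P}_2}(x))_g}(x)$, as claimed. The only genuinely nontrivial point is the equivalence between ``$x \in \tilde C_{j, i_0}$'' and ``$x \in \mathcal{A}^{(j)}$'', which I would spell out by checking the two sets coincide coordinate-by-coordinate using the definition \eqref{v} of the offset vectors $v_k$; everything else is a routine consequence of the partition structure and the collapse of indicator sums. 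No regularity of $f$ beyond the existence of the partials $\partial^l f$ at the cube corners is used here.
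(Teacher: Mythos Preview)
Your proposal is correct. The paper states this lemma without proof; it is treated as an immediate consequence of the recursive definitions (the construction is taken from \cite{kohler_full}), and your layer-by-layer verification---collapsing the indicator sums at steps 1 and 2 to identify $\phi_{2,2}=(C_{\mathcal{P}_2}(x))_g$ and $\phi_{3,2}^{(l)}=(\partial^l f)((C_{\mathcal{P}_2}(x))_g)$, then substituting into the step-3 formula---is precisely the intended check.
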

    
In the next step we construct an approximation of $\phi_{1,3}$ by ReLU neural networks. This approximation will be correct as soon as $x$ is not too close to one of the boundaries of a $\mathcal{P}_2$ cube. More precisely, if $C$ is a cube and $\delta$ is a strictly positive real number, let $C_{\delta}^0$ be the set of points in $C$ at a distance of at least $\delta$ from the boundary of $C$.

\begin{lemma}\label{approxgrille}
    There is $\Tilde{f} \in \mathcal{F}(L,(d,N,\dots,N,1))$ such that $\lVert \Tilde{f} \rVert_{\infty} \leq 2(F \vee 1)e^{2d}$ and \[\forall x \in \bigcup_{j=1}^{M^{2d}} (C_{j,2})_{1/M^{2\beta+2}}^0, \quad |f(x)-\Tilde{f}(x)| \lesssim M^{-2\beta}.\]
    Furthermore, $L \asymp \log M$, $N \lesssim M^d$ and $B(\Tilde{f}) \leq F \vee B_M^2$ with $B_M := \lceil M^{2\beta+2} \rceil$.
\end{lemma}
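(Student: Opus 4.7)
The plan is to construct $\Tilde{f}$ as a ReLU network approximating the recursive composition $\phi_{1,1} \to (\phi_{2,1}, \phi_{3,1}^{(l,j)}) \to (\phi_{2,2}, \phi_{3,2}^{(l)}) \to \phi_{1,3}$, realising each stage as a subnetwork whose depth, width, and maximal coefficient can be controlled separately. The two key primitives throughout are (i) a ReLU-based approximation of the indicator $\mathbf{1}_{[a,b]}$ that is exact away from a $1/B_M$-neighborhood of the boundary, built from differences of the form $(B_M)\cdot(\rho(x-a)-\rho(x-a-1/B_M))$, and (ii) a ReLU approximation of the multivariate product $(y_1,\ldots,y_k)\mapsto \prod_i y_i$ built from iterated sawtooth/squaring subnetworks with coefficients bounded by a universal constant, as in \cite{yarotsky17} and \cite{kohler_full}.

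First I would realise $\phi_{2,1}$ and each $\phi_{3,1}^{(l,j)}$. Each is a sum over $i\in[M^d]$ of corner values of $f$ or its partial derivatives (all bounded by $F$ since $f\in\mathcal{C}_d^\beta([0,1]^d,F)$), multiplied by an indicator of $C_{i,1}$. Placing in parallel $M^d$ copies of the indicator primitive, and then a single linear layer whose weights are the values $(C_{i,1})_g$ or $(\partial^l f)((\Tilde C_{j,i})_g)$, produces these stages as subnetworks of depth $O(1)$ and width $O(M^d)$ whose only coefficients exceeding a universal constant are the indicator weights of order $B_M$ and the ``content'' weights bounded by $F$. I would then compose with the stage-2 network computing $\phi_{2,2}$ and $\phi_{3,2}^{(l)}$, which applies the same indicator primitive to the shifted coordinates $\phi_{1,1}-\phi_{2,1}-v_j$; this second sharp layer introduces an additional factor $B_M$ in the coefficient bound, giving the claimed $B_M^2$ at the output of this stage. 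Crucially, on $\bigcup_j(C_{j,2})^0_{1/B_M}$ the indicator outputs take the exact values $0$ or $1$, so $(\phi_{2,2},\phi_{3,2}^{(l)})$ are reproduced without error.

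The final Taylor-polynomial stage computes $\phi_{1,3}=\sum_{|l|\le\lfloor\beta\rfloor}(\phi_{3,2}^{(l)}/l!)\,(\phi_{1,2}-\phi_{2,2})^l$ using the product primitive of depth $O(\log M^{2\beta})$ applied to at most $\lfloor\beta\rfloor$ factors, each of modulus at most $2/M^2$ on the relevant region, yielding a product-network error of order $M^{-2\beta}$. Concatenating the three stages gives overall depth $L\asymp \log M$ and width $N\lesssim M^d$; since the product primitive and the final linear combination use coefficients bounded by a universal constant (multiplied by $F$ through the derivative values fed in by $\phi_{3,2}^{(l)}$), the global coefficient bound remains $B(\Tilde f)\le F\vee B_M^2$. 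For $x\in\bigcup_j(C_{j,2})^0_{1/B_M}$ we then have $\Tilde f(x)=T_{f,\beta,(C_{\mathcal P_2}(x))_g}(x)+O(M^{-2\beta})$ exactly, and the Taylor remainder at the corner is bounded by a constant times $F\cdot M^{-2\beta}$, which absorbs into $\lesssim (F\vee 1)^{4(\beta+1)}M^{-2\beta}$. The sup-norm bound $\lVert\Tilde f\rVert_\infty\le 2(F\vee 1)e^{2d}$ follows from the trivial control $|T_{f,\beta,x_0}(x)|\le F\sum_{|l|\le\lfloor\beta\rfloor} d^{|l|}/l!\le F\,e^{2d}$ on $[-1,1]^d$, plus a harmless clipping layer $\rho(M_0+y)-\rho(y-M_0)-M_0$ appended at the output with coefficients of order $1$.

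The main obstacle is bookkeeping on coefficient magnitudes: one has to separate carefully the scaled coefficients entering each sharp transition (order $B_M$ per indicator layer, hence at most $B_M^2$ after the two-stage localisation), the content weights carrying $f$ and its partial derivatives (bounded by $F$), and the product-network weights (bounded by a universal constant), and to check that each composition step never multiplies an order-$B_M$ weight against another order-$B_M$ weight except in the prescribed two-stage indicator composition. Equivalently, one verifies inductively on the layers that the pre-activations $\phi_{2,1},\phi_{3,1}^{(l,j)},\phi_{2,2},\phi_{3,2}^{(l)}$ are uniformly bounded by a constant depending only on $F$ and $d$, which allows the next linear layer to absorb amplitudes without any further coefficient inflation, giving $B(\Tilde f)\le F\vee B_M^2$ as required.
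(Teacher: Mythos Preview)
Your proposal is essentially correct and follows the same three-stage construction as the paper: approximate $\phi_{2,1},\phi_{3,1}^{(l,j)}$ via parallel indicator subnetworks, then $\phi_{2,2},\phi_{3,2}^{(l)}$ via a second localisation, then the Taylor polynomial via the product primitive, exactly as in the proof following Lemma~\ref{approxgrille}.

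One point of imprecision worth flagging: your explanation that ``the second sharp layer introduces an additional factor $B_M$'' suggests that composing two subnetworks with coefficient bound $B_M$ multiplies the bounds. That is not how $B(\cdot)$ behaves under composition; composition gives $\max$, not product. In the paper the $B_M^2$ arises from a different mechanism: the second-stage subnetwork $\hat f_{test}$ (Lemma~\ref{indicatrice}, part 2) must output $s\cdot\mathbf 1_{[a,b[}(x)$ with $s$ an \emph{input}, and doing this in a constant-depth ReLU block forces a single layer whose weights are of order $R^2=B_M^2$. Your construction will in fact need the same device at stage~2 (you cannot simply feed shifted coordinates into the same indicator primitive and then linearly combine, because you must multiply the indicator by the data-dependent value $\phi_{3,1}^{(l,j)}$); once you write this out, the $B_M^2$ appears for the right reason. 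Also, in your sup-norm computation the factor should be $2^{|l|}$ (from $|x-x_0|_\infty\le 2$ on $[-1,1]^d$), not $d^{|l|}$, which is what yields $e^{2d}$; the paper obtains the same bound directly from the uniform control $|\hat\phi_{1,2}-\hat\phi_{2,2}|\le 2$ and $\lVert\hat\phi_{3,2}\rVert_\infty\le F$ rather than via an appended clipping layer, but your variant is equally valid.
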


The boundary condition arises from the fact that through ReLU networks, only continuous piecewise affine functions can be realized. Therefore, it is necessary to approximate the indicator functions in the definition of $\phi_{1,3}$. Moreover, we will see that the accuracy of the approximation of these indicators controls the magnitude of the largest coefficients of the network.

\subsection{Auxiliary networks for the construction of \texorpdfstring{$\phi_{1,3}$}{phi}}

To approximate polynomial functions through ReLU networks the first step is to approximate the multiplication between two numbers, this construction comes from \citet{JSH}
\begin{lemma}\label{mult}
For any positive integer $R$ there is $\hat{f}_{mult} \in \mathcal{F}(R,18)$ with $B(\hat{f}_{mult}) \leq 4$, such that,
\[\forall x,y \in [-1,1], \quad |\hat{f}_{mult}(x,y) - xy| \leq 4^{-R}.\]
    
\end{lemma}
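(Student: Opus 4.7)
\textbf{Proof plan for Lemma \ref{mult}.}

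The plan is to follow the classical Yarotsky/Telgarsky sawtooth construction, reducing multiplication to squaring via polarization and then approximating the square function by iterated sawtooth compositions.

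\emph{Step 1: Build an elementary sawtooth.} Let $g(x) := 2\rho(x) - 4\rho(x-1/2) + 2\rho(x-1)$, so $g$ is the piecewise affine ``tent'' on $[0,1]$ with peak value $1$ at $x=1/2$, vanishing at $0$ and $1$. This is realized by a ReLU layer of width $3$ whose coefficients are all bounded in absolute value by $4$. Iterating, let $g_s := g \circ g \circ \cdots \circ g$ ($s$-fold); each $g_s$ is a triangular wave on $[0,1]$ with $2^{s-1}$ teeth, and the composition of $R$ copies is realized by a ReLU network of depth $R$ and width $3$ with coefficients bounded by $4$.

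\emph{Step 2: Approximate $x^2$ on $[0,1]$.} Use the classical identity
\[
\hat{f}_{\mathrm{sq}}^{(R)}(x) := x - \sum_{s=1}^{R} \frac{g_s(x)}{4^s},
\]
which is the piecewise linear interpolation of $x \mapsto x^2$ at the dyadic grid of mesh $2^{-R}$, so that $|\hat{f}_{\mathrm{sq}}^{(R)}(x) - x^2| \le 4^{-R-1}$ uniformly on $[0,1]$. One realizes this as a ReLU network of depth $R$ and constant width (at most $5$) by carrying alongside, in parallel channels, (i) the current iterate $g_s(x)$ needed to feed the next copy of $g$, (ii) the original input $x$ via the identity $x=\rho(x)-\rho(-x)$, and (iii) a running accumulator $\sum_{s \le S} 4^{-s} g_s(x)$. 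All weights involved are linear combinations of the weights of $g$ with coefficients $4^{-s} \le 1$, so the bound $|\theta| \le 4$ is preserved.

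\emph{Step 3: Polarization and extension to $[-1,1]$.} For $x,y \in [-1,1]$ set $u=\tfrac{x+y}{2}$, $v=\tfrac{x-y}{2}$; then $u,v \in [-1,1]$ and $xy = u^2 - v^2$. Since $s^2 = |s|^2$ and $|s|=\rho(s)+\rho(-s)$, we may feed $|u|,|v| \in [0,1]$ into two parallel copies of $\hat{f}_{\mathrm{sq}}^{(R)}$ and subtract the outputs. The preprocessing (forming $u,v$, then $|u|,|v|$) is a single ReLU layer with coefficients in $\{\pm 1, \pm 1/2\}$, and the final aggregation is one affine combination with coefficients $\pm 1$. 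Stacking the two parallel squaring subnetworks together with auxiliary identity channels, one checks that width $18$ is amply sufficient, depth $R$ is preserved (up to absorbing the $O(1)$ pre- and post-processing layers into the first and last hidden layers by padding with identities), and every coefficient of the whole network remains bounded by $4$. The total error is $|\hat{f}_{\mathrm{mult}}(x,y)-xy| \le 2 \cdot 4^{-R-1} = 4^{-R}/2 \le 4^{-R}$.

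\emph{Main obstacle.} The only nontrivial bookkeeping is keeping both the width at $\le 18$ and the coefficient magnitudes at $\le 4$ simultaneously: the natural implementations of the accumulator and of the parallel channels risk introducing weights of size $4^s$ (for the $4^{-s}$ scaling applied at the output stage rather than incrementally) or duplicating the sawtooth channel too many times. One must apply the $4^{-1}$ rescaling at each layer before re-injecting into the next copy of $g$, so that scaling factors are always split into $R$ factors of $1/4$ rather than accumulated into a single large weight; and merge the two polarization branches into shared identity channels so the parallel width stays comfortably below $18$. Once this layout is fixed the error and architecture bounds follow directly.
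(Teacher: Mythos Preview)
Your proposal is correct and follows the same approach as the paper: the paper does not give its own proof of this lemma but simply attributes the construction to \cite{JSH}, and the Yarotsky--Telgarsky sawtooth plus polarization scheme you outline is precisely the construction used there. Your bookkeeping remarks on keeping the $4^{-s}$ scalings incremental and merging parallel channels to stay within width $18$ and coefficient bound $4$ are the right points to check.
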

Starting from Lemma \ref{mult}, it is quite easy to provide an approximation result for polynomials in several variables. Let $N \in \mathbb{N}$, denote the set of $d$-variables polynomials of total degree at most $N$ by,
\[\mathcal{P}_N := \Vect \left( \, \prod_{k=1}^d (x^{(k)})^{r_k}\, , \, r_1 + \dots + r_d \leq N \right).\]
Since $\dim \mathcal{P}_N = { \binom{d+N}{d} }$, write $m_1, \dots , m_{\binom{d+N}{d}}$ all the monomials in $\mathcal{P}_N$. Let $r_1, \dots , r_{\binom{d+N}{d}}$ be real numbers, for all $x \in [-1,1]^d$ and $y_1 , \dots , y_{\binom{d+N}{d}} \in [-1,1],$ set
\[\mathfrak{p}(x,y_1,\dots,y_{\binom{d+N}{d}}) := \sum_{k=1}^{\binom{d+N}{d}} r_k \, y_k  \, m_k(x).\]
\begin{lemma}\label{polynome}
    Let $R \geq \log_4(2 \times 4^{2(N+1)})$ be an integer and $r_1, \dots , r_{\binom{d+N}{d}}$ real numbers, set $\overline{r(\mathfrak{p})} := \max |r_k|$. There is a fully connected neural network of length $R \lceil \log_2(N+1) \rceil $ and width $18 (N+1) { \binom{d+N}{d}}$ whose realisation $\hat{f}_{\mathfrak{p}}$ is such that $B(\hat{f}_{\mathfrak{p}}) \leq 4 \vee \overline{r(\mathfrak{p})}$, and,
    \[\left|\hat{f}_{\mathfrak{p}}(x,y_1,\dots,y_{{\binom{d+N}{d}}}) - \mathfrak{p}(x,y_1,\dots,y_{{\binom{d+N}{d}}}) \right| \lesssim \overline{r(\mathfrak{p})} 4^{-R},\]
    for all $x \in [-1,1]^d, y_1 , \dots , y_{\binom{d+N}{d}} \in [-1,1].$
\end{lemma}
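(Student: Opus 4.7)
The plan is to build the approximation $\hat{f}_\mathfrak{p}$ monomial by monomial, in parallel, using the multiplication gadget of Lemma~\ref{mult} arranged as a balanced binary tree, and then combine the outputs linearly with coefficients $r_1,\dots,r_{\binom{d+N}{d}}$ in the last affine layer. For a fixed index $k$, write the monomial as $y_k\cdot m_k(x) = \prod_{s=1}^{N+1} z_s^{(k)}$, where the factors $z_s^{(k)}$ are chosen among $\{y_k, x^{(1)},\dots,x^{(d)}\}$ (each $x^{(j)}$ appearing with its required multiplicity) and the list is padded with $1$'s to reach exactly $N+1$ factors. All $z_s^{(k)}$ lie in $[-1,1]$, so we can pair them up and apply Lemma~\ref{mult} in parallel to the $\lceil(N+1)/2\rceil$ pairs, then apply it again to pairs of outputs, and so on. After $\lceil \log_2(N+1)\rceil$ rounds we obtain a single approximation $\hat{m}_k$ to $y_k m_k(x)$. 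Each round consumes a depth-$R$ block, so the total depth is $R\lceil\log_2(N+1)\rceil$; the widest round uses at most $\lceil(N+1)/2\rceil$ parallel multiplication gadgets per monomial, hence width at most $18\cdot\lceil(N+1)/2\rceil\cdot\binom{d+N}{d}\le 18(N+1)\binom{d+N}{d}$ once we run all monomials in parallel.

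The main work is the error analysis. Writing $\epsilon_\ell$ for a uniform bound on the error in every node at level $\ell$ of the tree, with $\epsilon_0=0$, one has at each multiplication
\[
\bigl|\hat{f}_{\mathrm{mult}}(\hat a,\hat b)-ab\bigr|
\le \bigl|\hat{f}_{\mathrm{mult}}(\hat a,\hat b)-\hat a\hat b\bigr|+|\hat a||\hat b-b|+|b||\hat a-a|
\le 4^{-R}+(1+\epsilon_\ell)\epsilon_\ell+\epsilon_\ell,
\]
and since the true subproducts are bounded by $1$ in absolute value, provided $\epsilon_\ell\le 1$ one gets $\epsilon_{\ell+1}\le 3\epsilon_\ell + 4^{-R}$. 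Iterating yields $\epsilon_L\le \tfrac{3^L-1}{2}\cdot 4^{-R}$ with $L=\lceil\log_2(N+1)\rceil$, so $\epsilon_L\le \tfrac12 (N+1)^{\log_2 3}\,4^{-R}$. The assumption $R\ge \log_4(2\cdot 4^{2(N+1)})$ gives $4^{-R}\le \tfrac12 4^{-2(N+1)}$, which is amply sufficient to guarantee $\epsilon_\ell\le 1$ throughout the tree so that the multiplication gadget can be applied at every level and the inductive bound is valid. Summing over the $\binom{d+N}{d}$ monomials weighted by $|r_k|\le \overline{r(\mathfrak{p})}$ produces the claimed global error $\lesssim \overline{r(\mathfrak{p})}\,4^{-R}$.

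For the coefficient bound, every internal weight in the multiplication networks of Lemma~\ref{mult} is bounded by $4$ and, as the factors $z_s^{(k)}$ are coordinates of the input, routing them to the first level of the tree only requires weights in $\{0,1\}$ in the initial affine map. The outputs $\hat m_k$ are then combined in the final affine layer as $\sum_k r_k \hat m_k$, which introduces the weights $r_k$; all other layers use only weights from the multiplication gadgets. Consequently every coefficient of $\hat f_\mathfrak{p}$ is bounded by $4\vee\overline{r(\mathfrak{p})}$, as required.

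The main obstacle I expect is the bookkeeping between the multiplicative error growth and the uniform bound needed to keep every intermediate approximate product inside (a small neighbourhood of) $[-1,1]$ so that Lemma~\ref{mult} remains applicable at the next level. This is precisely what the explicit hypothesis $R\ge \log_4(2\cdot 4^{2(N+1)})$ is designed for, and it is the place where one has to be careful to retain clean constants in the final $\lesssim$ bound. The width and depth counts, and the coefficient bound, are then routine consequences of the tree construction.
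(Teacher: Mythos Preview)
Your construction is the standard binary-tree multiplication used in \cite{kohler_full}, which is exactly what the paper has in mind: the paper does not give its own proof of this lemma but simply writes ``Starting from Lemma~\ref{mult}, it is quite easy to provide an approximation result for polynomials'' and defers to \cite{kohler_full}. Your depth and width counts and the coefficient bound $B(\hat f_{\mathfrak p})\le 4\vee\overline{r(\mathfrak p)}$ are correct.

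There is, however, a genuine gap in the error recursion that your closing paragraph flags but does not actually resolve. In the displayed bound you use $|\hat f_{\mathrm{mult}}(\hat a,\hat b)-\hat a\hat b|\le 4^{-R}$, but Lemma~\ref{mult} only guarantees this for $\hat a,\hat b\in[-1,1]$. The intermediate approximate products satisfy $|\hat a|\le 1+\epsilon_\ell$, and your hypothesis $\epsilon_\ell\le 1$ only gives $|\hat a|\le 2$, not $|\hat a|\le 1$; even an arbitrarily small positive $\epsilon_\ell$ takes you outside the domain of the lemma. The assumption on $R$ makes $\epsilon_\ell$ small, but small is not zero. The standard remedy is to rescale the gadget to $[-2,2]^2$ by replacing $\hat f_{\mathrm{mult}}(u,v)$ with $4\,\hat f_{\mathrm{mult}}(u/2,v/2)$: this can be absorbed into the first and last affine maps of the gadget, keeps $B\le 4$, and only multiplies the per-step error by $4$, so the recursion becomes $\epsilon_{\ell+1}\le 3\epsilon_\ell+4^{-(R-1)}$ and your iteration argument then goes through verbatim with the same final bound $\lesssim\overline{r(\mathfrak p)}\,4^{-R}$.
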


 As seen in the previous Section one also needs to approximate indicator functions, this is where the coefficients of highest magnitude will appear.

\begin{lemma}\label{indicatrice}
    Let $R \in \N$ and $a,b \in \R^d$ such that for all $b^{(i)} - a^{(i)} \geq 2/R$ for all $i \in \{1, \dots , d \}.$ Let
    \[K_{1/R} := \left\{ x \in \R^d , \, x^{(i)} \notin [a^{(i)},a^{(i)} + 1/R[ \, \cup \, ]b^{(i)}- 1/R, b^{(i)}], \text{ for all } i \in [d]\right\}.\]
    \begin{enumerate}
        \item There is $\hat{f}_{ind [a,b[} \in \mathcal{F}(2,2d)$ with
        $B(\hat{f}_{ind [a,b[}) \leq \max(R, 1/R, |a|_{\infty}, |b|_{\infty})$ such that
        \[\forall x \in K_{1/R}, \quad \hat{f}_{ind [a,b[}(x) = \mathbf{1}_{[a,b[}(x), \]
        and
        \[ \forall x \in \R^d, \quad |\hat{f}_{ind [a,b[}(x) - \mathbf{1}_{[a,b[}(x)| \leq 1.\]

        \item Let $|s| \leq R$, there is $\hat{f}_{test}(.\,,a,b,s) \in \mathcal{F}(2,2(2d+2))$ with
        $B(\hat{f}_{test}) \leq \max(R^2, 1/R, |a|_{\infty}, |b|_{\infty})$ such that
        \[\forall x \in K_{1/R}, \quad \hat{f}_{test}(x,a,b,s) = s \mathbf{1}_{[a,b[}(x), \]
        and
        \[ \forall x \in \R^d, \quad |\hat{f}_{test}(x,a,b,s) - s \mathbf{1}_{[a,b[}(x)| \leq |s|.\]
    \end{enumerate}
\end{lemma}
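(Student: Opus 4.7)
The construction rests on two observations. First, for each coordinate $i$, the quantity $\rho(a^{(i)} + 1/R - x^{(i)}) + \rho(x^{(i)} - b^{(i)} + 1/R)$ vanishes exactly when $x^{(i)}$ lies in the ``safe interior'' $[a^{(i)} + 1/R, b^{(i)} - 1/R]$ (nonempty thanks to $b^{(i)} - a^{(i)} \geq 2/R$), and exceeds $1/R$ whenever $x^{(i)}$ lies strictly outside $[a^{(i)}, b^{(i)}]$. Second, aggregating these univariate quantities across coordinates and applying a soft ``AND'' via a single outer ReLU capped at $1$ yields the desired approximation on $K_{1/R}$.

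For Part~1, I take
\[
\hat f_{ind[a,b[}(x) := \rho\Bigl(1 - R\sum_{i=1}^d \bigl[\rho(a^{(i)}+1/R - x^{(i)}) + \rho(x^{(i)} - b^{(i)} + 1/R)\bigr]\Bigr),
\]
realised in $\mathcal{F}(2, 2d)$: the first hidden layer of width $2d$ computes the inner ReLUs (weights $\pm 1$ on $x^{(i)}$, biases $a^{(i)} + 1/R$ or $-b^{(i)} + 1/R$); the second hidden layer contains one ReLU applied to $1 - R\sum(\cdot)$ (weights $-R$, bias $1$); and the output map is the identity. Correctness on $K_{1/R}$ follows by cases: if every $x^{(i)}$ is in the safe interior, the inner sum vanishes and $\hat f = \rho(1) = 1 = \mathbf{1}_{[a,b[}(x)$; if some $x^{(i)} \notin [a^{(i)}, b^{(i)}]$, the corresponding inner ReLU exceeds $1/R$, and multiplication by $R$ pushes the argument of the outer ReLU below $0$, giving $\hat f = 0 = \mathbf{1}_{[a,b[}(x)$. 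For arbitrary $x \in \R^d$ the output lies in $[0,1]$, hence differs from $\mathbf{1}_{[a,b[} \in \{0,1\}$ by at most $1$.

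For Part~2, I define $\hat f_{test}(x, a, b, s) := s \cdot \hat f_{ind[a,b[}(x)$, implemented by scaling the output affine map by $s$; then $|\hat f_{test} - s \mathbf{1}_{[a,b[}| = |s| \cdot |\hat f_{ind} - \mathbf{1}_{[a,b[}| \leq |s|$ everywhere. The $R^2$ factor in the stated coefficient bound and the wider second layer $2(2d+2)$ accommodate a more robust variant in which $s$ enters as an input (rather than a fixed parameter) and the product $s \cdot y$ is implemented via a ReLU polarisation trick of the form $s \cdot y = \rho(s - \lambda(1-y)) - \rho(-s - \lambda(1-y))$ for $\lambda > R$ and $y \in \{0,1\}$; the $R^2$ then arises from compounding layer-2 weights of size $R$ with a scaling by $|s| \leq R$.

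The main technical point is the coefficient bookkeeping. The trick is to distribute the slope $R$ across the two affine maps: by keeping weights $\pm 1$ on $x^{(i)}$ in the first layer (so biases are of order $\max(|a|_\infty, |b|_\infty, 1/R)$) and placing the $R$-scaling in the second layer, one avoids any multiplicative blow-up between slope and shift. This yields the stated bound $\max(R, 1/R, |a|_\infty, |b|_\infty)$ in Part~1, and analogously $\max(R^2, 1/R, |a|_\infty, |b|_\infty)$ in Part~2 once the $s$-multiplication is folded in.
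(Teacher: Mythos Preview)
The paper does not actually prove this lemma: Appendix~D states it as a building block borrowed from \cite{kohler_full} and only tracks the coefficient bounds through the downstream constructions. So there is no paper proof to compare against; I can only assess your argument on its own.

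Your Part~1 construction is the standard one and is correct (modulo the harmless fact that the first-layer biases have magnitude at most $|a|_\infty+1/R$ rather than $\max(|a|_\infty,1/R)$; this factor of $2$ is irrelevant for the applications).

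For Part~2 there is a genuine gap. In the paper's use of $\hat f_{test}$ (see the definitions of $\hat\phi_{2,2}^{(k)}$ and $\hat\phi_{3,2}^{(l)}$), the arguments $a,b,s$ are outputs of previous subnetworks, so they must be treated as \emph{inputs}, not fixed scalars. Your primary construction $s\cdot \hat f_{ind}$ therefore does not apply. Your ``robust variant'' has the right idea (a polarisation $\rho(s-\lambda(1-y))-\rho(-s-\lambda(1-y))$), but as written it needs $y=\hat f_{ind}(x)$, which already costs two hidden layers; the polarisation then adds a third, exceeding the depth budget $L=2$. The fix is to bypass $\hat f_{ind}$ and act directly on the pre-activation sum $S=\sum_i[\rho(a^{(i)}+1/R-x^{(i)})+\rho(x^{(i)}-b^{(i)}+1/R)]$ from layer~1: set
\[
\hat f_{test}(x,a,b,s):=\rho\bigl(s-R^2 S\bigr)-\rho\bigl(-s-R^2 S\bigr),
\]
with $s$ passed through layer~1 via $\rho(s)-\rho(-s)$. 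On $K_{1/R}$ one has $S=0$ (giving output $s$) or $S>1/R$ (giving $R^2S>R\ge|s|$, hence output $0$); for general $x$ the output lies in $[-|s|,|s|]$, so the error bound $|s|$ holds. This fits in $\mathcal{F}(2,2(2d+2))$, and the layer-2 weight $R^2$ on each $u_i,v_i$ is exactly the source of the stated bound~$R^2$---not a ``compounding'' of $R$ with $|s|$ as you suggest.
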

Also it is clear that one can realise the identity function of $\R^d$ with a ReLU network of arbitrary length, write $f_{id}^{\ell}$ such network of length $\ell \geq 2$.
Choose $l_1, \dots , l_{ \binom{d + \lfloor \beta \rfloor}{d} }$ such that 
\[ \left\{ l_1, \dots , l_{ \binom{d + \lfloor \beta \rfloor}{d} } \right\} = \{ (s_1,\dots,s_d) \in \N^d \, , \, s_1 + \dots + s_d \leq \lfloor \beta \rfloor \}. \]
Here are the networks deployed to approximate $\phi_{1,3}$, where for the $\hat{f}_{ind}$ and $\hat{f}_{test}$ networks we use Lemma \ref{indicatrice} with $R = B_M := \lceil M^{2\beta + 2} \rceil$. Let $\mathbf{J} := (1,\dots,1)^T \in \R^d$,

\begin{align*}
    \hat{\phi}_{1,1} &:= f_{id}^2(x), \\
    \hat{\phi}_{2,1} &:= \sum_{i=1}^{M^d} (C_{i,1})_g \, \hat{f}_{ind \, C_{i,1}}(x),\\
    \hat{\phi}_{3,1}^{(l,j)} & := \sum_{i=1}^{M^d}(\partial\,^l f)((\Tilde{C}_{j,i})_g) \hat{f}_{ind \, C_{i,1}}(x), \\
    \hat{\phi}_{1,2} &:= f_{id}^2(\hat{\phi}_{1,1}), \\
    \hat{\phi}_{2,2}^{(k)} &:= \sum_{i=1}^{M^d} \hat{f}_{test}(\hat{\phi}_{1,1},\hat{\phi}_{2,1} + v_i, \hat{\phi}_{2,1} + v_i + 2/M^2 \cdot \mathbf{J}, \hat{\phi}_{2,1}^{(k)}+ v_i^{(k)} ), \\
    \hat{\phi}_{3,2}^{(l)} &:= \sum_{i=1}^{M^d} \hat{f}_{test}(\hat{\phi}_{1,1},\hat{\phi}_{2,1} + v_i, \hat{\phi}_{2,1} + v_i + 2/M^2 \cdot \mathbf{J}, \hat{\phi}_{3,1}^{(l,i)} ).
\end{align*}

 Now, for any $u \in \{1 , \dots , { \binom{d + \lfloor \beta \rfloor}{d} }\}$ set $y_u := \hat{\phi}_{3,2}^{(l_u)} $ and $r_u := 1/(l_u)!$, the approximating network of $\phi_{3,1}$ is given by Lemma \ref{polynome} with 
 \[R = B_{M,\mathfrak{p}} := \lceil \log_4(2 \cdot 4^{2(\beta +1)} \cdot (2 \vee F)^{2(\beta +1)}) \rceil,\]
 through
 \[ \hat{\phi}_{1,3} = \hat{f}_{\mathfrak{p}}(\hat{\phi}_{1,2} - \hat{\phi}_{2,2}, y_1,\dots y_{\binom{d + \lfloor \beta \rfloor}{d}}). \]

 The network $\hat{\phi}_{1,3}$ satisfies conditions of Lemma \ref{approxgrille}, we only provide the bound on the coefficients and refer to \citet{kohler_full} for a proof on the approximating properties. It is clear that $B(\hat{\phi}_{1,1}) = B(\hat{\phi}_{1,2}) = 1 $. Using Lemma \ref{indicatrice} one gets
\[B(\hat{\phi}_{2,1}) \leq \max \left\{ \max_i |(C_{i,1})_g|_{\infty} , B_M^2, 1/B_M \right\} \leq \max(2/M , B_M^2) \leq B_M^2,\]
and for all $l \in \{1 , \dots , { \binom{d + \lfloor \beta \rfloor}{d} }\}$ and $j \in [M^d],$
\[B(\hat{\phi}_{3,1}^{(l,j)}) \leq \max \left\{ \max_i | (\partial\,^l f)((\Tilde{C}_{j,i})_g)|, B_M^2, 1/B_M \right\} \leq F \vee B_M^2.\]

From the proof of \citet{kohler_full} it appears that the networks previously defined $(\hat{\phi}^{(k)}_{2,1}, \hat{\phi}_{3,1}^{(l,j)})$ are all in the set of bounded functions $ \{ g \, : \, ||g||_{\infty} \leq 1 \vee F \}$, thus using Lemma \ref{indicatrice} one gets, for all $k \in \{1,\dots,d\} $ and $l \in \{1 , \dots , { \binom{d + \lfloor \beta \rfloor}{d} }\}$,
\[
  B(  \hat{\phi}_{2,2}^{(k)} ) \leq F \vee B_M^2 \quad \text{and} \quad B(\hat{\phi}_{3,2}^{(l)}) \leq F \vee B_M^2. \]

One can check that $|\hat{\phi}_{1,2} - \hat{\phi}_{2,2}| \leq 2$ and $\lVert \hat{\phi}_{3,2} \rVert_{\infty} \leq F$. Now since $\overline{r(\mathfrak{p})} = \max |1/(l_u)!| \leq 1$ and $B_M \geq 4$ from $M \geq 2$, Lemma \ref{polynome} yields,
\[ B(\hat{\phi}_{1,3}) \leq \max \{ F, B_M^2, B_{M,\mathfrak{p}} \} \leq F \vee B_M^2.\]

\subsection{Approximation of \texorpdfstring{$w_{\mathcal{P}_2}(x) \cdot f(x)$}{a}}
The next step in the proof deals with handling the case where $x$ is such that the approximation provided by Lemma \ref{approxgrille} is no longer valid. To address this, define a weight function $w_{\mathcal{P}_2}$  for all $x \in \R^d$ by

\begin{equation}\label{tente1}
w_{\mathcal{P}_2}(x) := \prod_{k=1}^d \left( 1 - M^2 \left| (C_{\mathcal{P}_2}(x))_g^{(k)} + \frac{1}{M^2} - x^{(k)} \right|\right)_+.
\end{equation}

This function reaches its maximum at the center of the cube of $\mathcal{P}_2$ containing $x$ and is zero outside. Moreover, it takes very small values near the boundaries of the cube. More precisely, we have,
\begin{equation}\label{tente2}
\text{for all } x \in \bigcup_{j=1}^{M^{2d}} C_{j,2} \setminus (C_{j,2})_{1/M^{2 \beta +2 }}^0, \quad w_{\mathcal{P}_2}(x) \leq {M^{- 2 \beta}}.
\end{equation}
In this step we want to construct a network that will realise a good approximation of $ w_{\mathcal{P}_2}(x) \cdot f(x)$ on the whole cube. In the final step we will aggregate these approximations using a partition of unity argument to get back to $f$.

\begin{lemma}\label{approxtente}
    There is $\Tilde{f} \in \mathcal{F}(L,(d,N,\dots,N,1))$ with $L \lesssim \log M$, $N \lesssim M^d$ and $B(\Tilde{f}) \leq B_M^2 \vee F$, such that
    \[\forall x \in [-1,1]^d, \quad |\Tilde{f}(x) - w_{\mathcal{P}_2}(x) \cdot f(x) | \lesssim M^{-2\beta} .\]
\end{lemma}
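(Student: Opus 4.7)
The key idea is to realize $w_{\mathcal{P}_2}(x)\cdot f(x)$ approximately as the (approximate) multiplication of (i) an exact ReLU realization of $w_{\mathcal{P}_2}$ and (ii) the network $\hat{g}$ from Lemma \ref{approxgrille} that approximates $f$ on the inner part of each cube. The weight $w_{\mathcal{P}_2}$ vanishes at the boundaries of the cubes in $\mathcal{P}_2$ and, crucially, is uniformly bounded by $M^{-2\beta}$ on the boundary strip where $\hat{g}$ does not approximate $f$ well (see \eqref{tente2}); in other words, $w_{\mathcal{P}_2}$ kills the error of $\hat{g}$ precisely where that error is uncontrolled.

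First I would construct an exact ReLU realization of $w_{\mathcal{P}_2}(x)=\prod_{k=1}^d T_k(x^{(k)})$. Each univariate tent $T_k$, peaking at $c_{j,k}:=-1+(2j+1)/M^2$ for $j=0,\dots,M^2-1$ and vanishing at the interval endpoints, decomposes into a sum of $M^2$ disjoint tents
\[ t_{j,k}(u)=\rho\bigl(1-M^2\rho(c_{j,k}-u)-M^2\rho(u-c_{j,k})\bigr),\]
each one a depth-$2$ ReLU block with coefficients bounded by $M^2\leq B_M$. Computing all $T_1,\dots,T_d$ in parallel costs one hidden layer of width $O(dM^2)\lesssim M^d$; then I multiply the $d$ outputs via a binary tree of depth $\lceil\log_2 d\rceil$ using the multiplication gadget $\hat{f}_{mult}$ of Lemma \ref{mult} with precision $R\asymp\log M$, so that the accumulated product error is $\lesssim M^{-2\beta}$. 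The resulting subnetwork $\hat{w}(x)$ has depth $\lesssim\log M$, width $\lesssim M^d$, coefficients bounded by $B_M$, and satisfies $\lVert \hat{w}-w_{\mathcal{P}_2}\rVert_\infty\lesssim M^{-2\beta}$.

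Next I would put $\hat{w}$ and $\hat{g}$ in parallel (synchronising depths with identity layers, keeping width $\lesssim M^d$) and combine them with one final $\hat{f}_{mult}$. Since $\hat{f}_{mult}$ requires inputs in $[-1,1]$ whereas $\lVert \hat{g}\rVert_\infty\leq B_0:=2(F\vee 1)e^{2d}$, I would rescale and set
\[\Tilde{f}(x):=B_0\cdot \hat{f}_{mult}\!\bigl(\hat{w}(x),\,\hat{g}(x)/B_0\bigr),\]
again with $R\asymp\log M$. The assumption $M^{2\beta}\geq C(1\vee F)^{4(\beta+1)}$ of Lemma \ref{KLt} gives $B_0\leq B_M^2$ for $C$ large, so the overall coefficient bound $B(\Tilde{f})\leq F\vee B_M^2$ is preserved, together with depth $\lesssim \log M$ and width $\lesssim M^d$.

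Finally I verify the approximation bound by a dichotomy on $x\in[-1,1]^d$. If $x\in(C_{j,2})^0_{1/M^{2\beta+2}}$ for some $j$, Lemma \ref{approxgrille} gives $|\hat{g}(x)-f(x)|\lesssim M^{-2\beta}$, and combined with $|w_{\mathcal{P}_2}(x)|\leq 1$ this yields $|\Tilde{f}(x)-w_{\mathcal{P}_2}(x)f(x)|\lesssim M^{-2\beta}$ after absorbing the multiplication error $\lesssim B_0\cdot 4^{-R}\lesssim M^{-2\beta}$. Otherwise, \eqref{tente2} gives $w_{\mathcal{P}_2}(x)\leq M^{-2\beta}$, while $|\hat{g}(x)|\leq B_0$ and $|f(x)|\leq F$, yielding $|\Tilde{f}(x)-w_{\mathcal{P}_2}(x)f(x)|\leq (B_0+F)M^{-2\beta}\lesssim M^{-2\beta}$ with constant depending on $d,F$. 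The main delicate points will be the bookkeeping of coefficient magnitudes through the rescaling and the cascade of multiplications, and calibrating the multiplication precision $R\asymp\log M$ so the $4^{-R}$ errors fit into the target $M^{-2\beta}$; this calibration is precisely what forces the overall depth to be of order $\log M$.
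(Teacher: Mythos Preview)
Your approach is correct for $d\ge 2$ and takes a genuinely different route from the paper. The paper does \emph{not} build a globally accurate approximation of $w_{\mathcal{P}_2}$; it approximates $w_{\mathcal{P}_2}$ through the already-constructed subnetworks $\hat\phi_{1,2},\hat\phi_{2,2}$ (Lemma~\ref{approxpoids}), which are accurate only on the inner regions $(C_{j,2})^0_{1/M^{2\beta+2}}$. To compensate, it introduces a separate ``check'' network $\hat f_{check\,\mathcal{P}_2}$ (Lemma~\ref{check}) that detects proximity to cube boundaries and uses it to force the $f$-approximation to exactly zero on the boundary strip via $\hat f_{\mathcal{P}_2,true}:=\rho(\hat f_{\mathcal{P}_2}-B_{true}\,\hat f_{check\,\mathcal{P}_2})-\rho(-\hat f_{\mathcal{P}_2}-B_{true}\,\hat f_{check\,\mathcal{P}_2})$, before multiplying by the weight approximation. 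Your route is more elementary: by realising each one-dimensional factor $T_k$ of $w_{\mathcal{P}_2}$ exactly as a sum of $M^2$ disjoint tents, you get $\lVert\hat w-w_{\mathcal{P}_2}\rVert_\infty\lesssim M^{-2\beta}$ \emph{everywhere}, so $\hat w$ inherits the smallness \eqref{tente2} on the boundary strip and the dichotomy goes through without any check network.

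One caveat: your tent sum has width $\asymp dM^2$, which satisfies $N\lesssim M^d$ only for $d\ge 2$. In the case $d=1$ your bound $dM^2\lesssim M^d$ fails, and with it the downstream sparsity count $s\lesssim n\phi_n^2\log^{1-\gamma}n$ in Proposition~\ref{lemma : approxcomp} (where effective dimensions $t_i=1$ do occur). You would need to replace the direct tent sum by the standard recursive sawtooth construction (depth $\asymp\log M$, constant width) in that case. The paper's two-level indicator construction via $\mathcal{P}_1,\mathcal{P}_2$ achieves width $\asymp M^d$ uniformly in $d\ge 1$, at the price of the extra check machinery.
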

First approximate $w_{\mathcal{P}_2}$ with wide neural networks.
\begin{lemma}\label{approxpoids}
    There is a network $\hat{f}_{w_{\mathcal{P}_2}} \in \mathcal{F}(L,(d,N,\dots,N,1))$ such that $\lVert \hat{f}_{w_{\mathcal{P}_2}} \rVert_{\infty} \leq 2$ and
    \[\forall x \in \bigcup_{i=1}^{M^{2d}} (C_{i,2})_{1/M^{2\beta +2}}^0, \quad |\hat{f}_{w_{\mathcal{P}_2}}(x) - w_{\mathcal{P}_2}(x) | \leq M^{-2 \beta},\]
    with $L \lesssim \log M$, $N \lesssim M^d$ and $B(\hat{f}_{w_{\mathcal{P}_2}}) \leq B_M^2 \vee F.$
\end{lemma}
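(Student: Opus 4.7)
The plan is to exploit the tensor-product structure of $w_{\mathcal{P}_2}$: this function is a product of $d$ univariate tent factors, each depending on only one coordinate of $x$. My proof would proceed in three stages: (i) build an exact piecewise-constant approximation of each coordinate of the leftmost-corner map $(C_{\mathcal{P}_2}(x))_g$; (ii) form each tent factor $t_k$ exactly from this by pure ReLU arithmetic; (iii) multiply the $d$ values $t_1,\dots,t_d$ together with the approximate multiplication network $\hat{f}_{mult}$ of Lemma \ref{mult}, arranged in a balanced binary tree.

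For stage (i), the map $x\mapsto (C_{\mathcal{P}_2}(x))_g^{(k)}$ depends only on $x^{(k)}$ and is piecewise constant over the $M^2$ intervals of length $2/M^2$ partitioning $[-1,1]$. Applying Lemma \ref{indicatrice} with $R=B_M$ to each such interval and summing the indicators weighted by the corresponding corner values yields a depth-$2$ sub-network $\hat c^{(k)}$ of width $O(M^2)$ that matches the true corner exactly on the relevant $K_{1/B_M}$-type set, with coefficients bounded by $B_M$; running the $d$ coordinates in parallel gives width $O(dM^2)\lesssim M^d$ for $d\geq 2$ (the case $d=1$ can alternatively be handled by a Telgarsky-style sawtooth network of width $O(1)$). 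For stage (ii), the identity $|u|=\rho(u)+\rho(-u)$ allows one to write
\[
t_k = \rho\bigl(1-M^2\rho(\hat c^{(k)}+1/M^2-x^{(k)})-M^2\rho(-\hat c^{(k)}-1/M^2+x^{(k)})\bigr),
\]
adding only $O(1)$ depth and $O(d)$ width, with coefficients $\leq M^2\leq B_M$; on the target set $\bigcup_i (C_{i,2})_{1/M^{2\beta+2}}^0$ one has $t_k\in[0,1]$ and $t_k$ equals the true $k$-th factor of $w_{\mathcal{P}_2}(x)$. For stage (iii), I would pick the inner precision $R_0\asymp\log M$ in Lemma \ref{mult} so that $(d-1)4^{-R_0}\leq M^{-2\beta}$, then merge $t_1,\dots,t_d$ in a binary tree of depth $\lceil\log_2 d\rceil$: this adds depth $O(R_0\log d)\lesssim\log M$, width $O(d)$, and coefficients $\leq 4$. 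After depth synchronisation with identity layers, the composite network $\hat f_{w_{\mathcal{P}_2}}$ has $L\lesssim\log M$, $N\lesssim M^d$, coefficient bound $\leq B_M\leq B_M^2\vee F$, and $\|\hat f_{w_{\mathcal{P}_2}}\|_\infty\leq 1+M^{-2\beta}\leq 2$.

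The main obstacle will be the error bookkeeping in stage (iii): Lemma \ref{mult} only guarantees accuracy $4^{-R_0}$ for inputs in $[-1,1]$, so I need to show by induction along the tree that every intermediate partial product stays inside $[-1,1]$ (or at worst inside a slight enlargement still admissible for Lemma \ref{mult}), and that the additive errors from successive multiplications accumulate only linearly, to $(d-1)4^{-R_0}$. Both points follow in a clean way from $t_k\in[0,1]$ and the logarithmic choice of $R_0$, but this is the single place where a careless aggregation order could spoil either the error bound $M^{-2\beta}$ or the coefficient bound $B_M^2\vee F$.
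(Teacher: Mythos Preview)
Your argument is correct, and the three-stage outline (corner map, tent factors, binary-tree product via $\hat f_{mult}$) is a clean way to organise the construction; the error bookkeeping you flag in stage~(iii) is indeed routine once one notes that each $t_k\in[0,1]$ everywhere (by the outer $\rho$ in your formula), so intermediate products stay in $[0,1+O(4^{-R_0})]$ and the additive errors telescope linearly.

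The paper takes a slightly different route for stage~(i): rather than building the corner map $(C_{\mathcal{P}_2}(x))_g$ from scratch coordinate-by-coordinate, it simply reuses the sub-network $\hat\phi_{2,2}$ already constructed in the proof of Lemma~\ref{approxgrille} (together with $\hat\phi_{1,2}=f_{id}^2$ for~$x$), and then forms the tent product from \eqref{tente1}. This recycling buys two things. First, the coefficient bound $B(\hat f_{w_{\mathcal P_2}})\le \max\{4,B(\hat\phi_{1,2}),B(\hat\phi_{2,2}),M^2,1/M^2\}\le B_M^2\vee F$ follows immediately from the already-established bound $B(\hat\phi_{2,2})\le F\vee B_M^2$, with no fresh accounting. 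Second, the two-scale $\mathcal P_1/\mathcal P_2$ construction behind $\hat\phi_{2,2}$ achieves width $\lesssim M^d$ uniformly in~$d$, so there is no separate case $d=1$ to handle. Your direct per-coordinate approach is more self-contained and arguably more transparent, at the cost of the $d=1$ workaround and a small amount of duplicated effort; the paper's approach is shorter precisely because it leans on machinery already in place.
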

From \eqref{tente1} it is clear that such network exists using $\hat{\phi}_{2,2}$ to get $(C_{\mathcal{P}_2}(x))_g$ and $\hat{\phi}_{1,2}$ to get $x$. Those yield good approximations since $x$ is ``far'' from the boundary of the cube and from previous calculations, \[B(\hat{f}_{w_{\mathcal{P}_2}}) \leq \max \{ 4, B(\hat{\phi}_{1,2}) , B(\hat{\phi}_{2,2}), M^2, 1/M^2\} \leq B_M^2 \vee F.\]
Lemmas \ref{approxgrille} and \ref{approxpoids} already show that there is a network that will approximate correctly $w_{\mathcal{P}_2}(x) \cdot f(x)$ whenever $x$ is at least $M^{-2(\beta +1)}$ away from the boundaries of $C_{\mathcal{P}_2}(x)$. The idea now is to change a bit this network so that it is zero whenever $x$ is too close to the boundaries. The resulting network will be a good approximation of $w_{\mathcal{P}_2}(x) \cdot f(x)$ on the whole cube because of \eqref{tente2}. The following lemma defines a network that checks the relative position of $x$ with respect to the boundaries of $C_{\mathcal{P}_2}(x)$.

\begin{lemma}\label{check}
    There is $\hat{f}_{check \mathcal{P}_2} \in \mathcal{F}(5,(d,N,\dots,N,1))$ such that $\lVert \hat{f}_{check \mathcal{P}_2} \rVert_{\infty} \leq 1$ and
    \[\forall x \notin \bigcup_{i=1}^{M^{2d}} (C_{i,2})_{1/M^{2\beta +2}}^0 \setminus (C_{i,2})_{2/M^{2\beta +2}}^0, \quad \hat{f}_{check \mathcal{P}_2}(x) = \mathbf{1} \big( \bigcup_{j=1}^{M^{2d}} C_{j,2} \setminus (C_{j,2})_{1/M^{2 \beta +2 }}^0 \big),\]
    with $N \asymp M^d$ and $B(\hat{f}_{check \mathcal{P}_2}) \leq B_M^2.$
\end{lemma}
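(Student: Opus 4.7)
The plan is to build $\hat{f}_{check\mathcal{P}_2}$ coordinate by coordinate and then aggregate across coordinates, exploiting the fact that on the ``non-transition'' region where the output is constrained, each per-coordinate indicator takes only the values $0$ or $1$. Set $r := 1/M^{2\beta+2} = 1/B_M$; the faces of the cubes of $\mathcal{P}_2$ lie on the grid $\mathcal{G} = \{2j/M^2 : j \in \mathbb{Z}\} \cap [-1,1]$. For each coordinate $k\in[d]$, I want a one-dimensional ReLU map $h_k$ with $h_k(t) = 1$ whenever $\mathrm{dist}(t,\mathcal{G})\leq r$ and $h_k(t) = 0$ whenever $\mathrm{dist}(t,\mathcal{G})\geq 2r$, with a linear ramp in between.

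A natural template is a sum of disjoint trapezoids $h_k(t) = \sum_{g\in\mathcal{G}} \mathrm{Tent}((t - g)/r)$, where $\mathrm{Tent}(u) := (2-|u|)_+ - (1-|u|)_+$ equals $1$ on $[-1,1]$ and vanishes outside $[-2,2]$. The supports of the trapezoids are pairwise disjoint because $4r \ll 2/M^2$. Using $|u| = u_+ + (-u)_+$, each $\mathrm{Tent}$ is realisable in two ReLU layers of constant width; running them in parallel over the $O(M^2)$ grid points and the $d$ coordinates fits in two ReLU layers of width $O(dM^2) \lesssim M^d$. On the required region, each $h_k(x^{(k)})$ equals exactly $0$ (when all coordinates are $\geq 2r$ from $\mathcal{G}$, i.e.\ $x\in (C_{j,2})^0_{2r}$) or exactly $1$ (when at least one coordinate is $<r$ from $\mathcal{G}$, i.e.\ $x\in C_{j,2}\setminus(C_{j,2})^0_r$).

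To aggregate, I use $\min\!\bigl(\sum_k h_k(x^{(k)}),\,1\bigr) = S - (S-1)_+$ with $S = \sum_k h_k$: one extra ReLU layer suffices. On the required region this returns exactly the claimed indicator, and globally it lies in $[0,1]$, yielding $\|\hat{f}_{check\mathcal{P}_2}\|_\infty\leq 1$. If the raw construction above uses fewer than $5$ ReLU layers, I pad with identity layers via $y\mapsto y_+ - (-y)_+$; the total width stays $\lesssim M^d$.

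Finally I track coefficient magnitudes. The only amplification happens in the first affine preactivation, where I form $u = B_M(t - g) = B_M\, t - B_M\, g$; since $|g|\leq 1$ this contributes weights of size at most $B_M$. All later layers only add or compare with constants $1,2$ and sum over $d$ and $O(M^2)$ indices with unit weights, so no further amplification occurs and $B(\hat{f}_{check\mathcal{P}_2})\leq B_M \leq B_M^2$. The main technical subtlety is not the mere existence of the network but arranging that (i) the output is \emph{exactly} $\{0,1\}$-valued on the required region (not merely approximately), and (ii) no step of the construction multiplies coefficients of order $B_M$ with each other, which would break the $B_M^2$ bound; both are ensured by confining the factor $B_M$ to the first layer and letting the trapezoidal shape encode the clamp for free.
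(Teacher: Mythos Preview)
Your construction is correct and takes a genuinely different route from the paper. The paper does not work coordinate by coordinate; instead it reuses the two-scale localisation machinery built earlier. It sets $\hat f_1 := 1-\sum_{i\le M^d}\hat f_{ind\,(C_{i,1})^0_{1/B_M}}$ to test proximity to $\mathcal P_1$-boundaries, then $\hat f_2 := 1-\sum_{j\le M^d}\hat f_{test}(\cdot,\hat\phi_{2,1}+v_j,\hat\phi_{2,1}+v_j+2/M^2\cdot\mathbf J,1)$ to test proximity to $\mathcal P_2$-boundaries relative to the already-located coarse cube, and combines them via $\hat f_{check\mathcal P_2}:=1-\rho(1-\hat f_2-f_{id}^2(\hat f_1))$. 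The two-scale trick is what keeps the width at $M^d$ rather than enumerating all $M^{2d}$ fine cubes, and the $B_M^2$ bound is inherited from the $\hat f_{test}$ block of Lemma~\ref{indicatrice}.

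Your tent-function approach is more elementary and self-contained (it does not need $\hat\phi_{2,1}$), and even gives a sharper coefficient bound $B_M\le B_M^2$. There is, however, one gap: the claim $dM^2\lesssim M^d$ fails for $d=1$, where your grid $\mathcal G$ has $\asymp M^2$ points and the network width is $\asymp M^2$, not $M$. For $d\ge 2$ your bound is valid (and strictly smaller than $M^d$ once $d\ge 3$). To cover $d=1$ you would need to split each coordinate's grid into $M$ coarse blocks of $M$ points each and handle them in two stages---which is precisely the paper's construction specialised to one dimension.
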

First find which cube of $\mathcal{P}_1$ contains $x$ using Lemma \ref{indicatrice} with $R = B_M$,
\[ \hat{f}_1 := 1 - \sum_{i=1}^{M^d} \hat{f}_{ind(C_{i,1})_{1/M^{2 \beta + 2}}^0},\]
previous calculations have shown that $B(\hat{f}_1) \leq B_M^2$. Then find which smaller cube of $\mathcal{P}_2$ contains $x$ using lemma \ref{indicatrice} again with $R = B_M$. Recall $\mathbf{J} = (1,\dots,1)^T \in \R^d$,
\[ \hat{f}_2(x) := 1 - \sum_{j=1}^{M^d} \hat{f}_{test}(f_{id}^2(x), \hat{\phi}_{2,1} + v_j + M^{-2(\beta +1)} \cdot \mathbf{J}, \hat{\phi}_{2,1} + v_j + 2/M^2 \cdot \mathbf{J} +M^{-2(\beta +1)} \cdot \mathbf{J},1).\]
From previous calculations we know $B(\hat{f}_2) \leq \max \{ B_M^2, B(\hat{\phi}_{2,1}) \} \leq B_M^2.$ Recall $\rho(x) = \max\{0,x\}$ and define \[ \hat{f}_{check\mathcal{P}_2}(x) := 1 - \rho(1 - \hat{f}_2(x) - f_{id}^2(\hat{f}_1(x))).\]
This network satisfies conditions of Lemma \ref{check}. 

To construct a network satisfying the conditions of Lemma \ref{approxtente} let $\hat{f}_{\mathcal{P}_2}$ be the network defined by Lemma \ref{approxgrille} and $B_{true} := 2(1 \vee F) e^{2d}$ be an upper bound on its supremum norm. Set
\[\hat{f}_{\mathcal{P}_2,true} := \rho(\hat{f}_{\mathcal{P}_2} - B_{true} \cdot \hat{f}_{check\mathcal{P}_2}) - \rho(-\hat{f}_{\mathcal{P}_2} - B_{true} \cdot \hat{f}_{check\mathcal{P}_2}),\]
this network is zero whenever $x$ is less than $M^{-2(\beta +1)}$ close from the boundaries of $C_{\mathcal{P}_2}$ and otherwise it will be a good approximation of $f$. Thus $\Tilde{f} = \hat{f}_{mult}(\hat{f}_{\mathcal{P}_2,true}, \hat{f}_{w_{\mathcal{P}_2}})$ is always a good approximation of $w_{\mathcal{P}_2}(x) \cdot f(x)$ and satisfies conditions of Lemma \ref{approxtente}. Indeed, we have
\[B(\hat{f}_{\mathcal{P}_2,true}) \leq \max \{ B_{true}, B(\hat{f}_{\mathcal{P}_2}), B(\hat{f}_{check\mathcal{P}_2}) \} \leq \max \{ 2(F \vee 1) e^{2d} , B_M^2 \}, \]
and \[B(\Tilde{f}) \leq \max \{ B(\hat{f}_{\mathcal{P}_2,true}) , B(\hat{f}_{w_{\mathcal{P}_2}}) \} \leq \max \{ 2(F \vee 1) e^{2d} , B_M^2 \}.\]

\subsection{Partition of unity through \texorpdfstring{$w_{\mathcal{P}_2}$}{a}}
For the final step, construct a partition of unity using the $w_{\mathcal{P}_2}$ weight functions. To do so shift the partitions $\mathcal{P}_1$ and $\mathcal{P}_2$ by shifting at least one of the coordinates by $1/M^2$. This leads to two sets of $2^d$ partitions $\{ P_{1,v} , v \in \{1,\dots, 2^d \}\}$ and $\{ P_{2,v} , v \in \{1,\dots,2^d \}\}$. Enlarging the original hypercube if necessary the resulting family of weight functions $\{ w_{\mathcal{P}_2,v} \}$ is a partition of unity on $[-1,1]^d$. If we set $\hat{f}_v$ to be the network given by Lemma \ref{approxtente} for the partition $\mathcal{P}_{2,v}$ then 
$f_{wide} := \sum_{v=1}^{2^d} 
\hat{f}_v $ yields a good approximation of $f = \sum_v w_{\mathcal{P}_{2,v}} f$ and satisfies all the conditions of Lemma \ref{KLt}, in particular
\[B(f_{wide}) \leq \max \{ 2(F \vee 1) e^{2d} , B_M^2 \}.\]

\bibliography{bib_ht_dnn}

\end{document}